\newtheorem{theorem}{Theorem}[section]
\newtheorem{lemma}[theorem]{Lemma}
\newtheorem{corollary}[theorem]{Corollary}
\newtheorem{assumption}[theorem]{Assumption}
\newcommand{\norm}[1]{\left\lVert #1 \right\rVert}
\newcommand{\N}{\mathbb{N}}
\newcommand{\E}{\mathbb{E}}
\renewcommand{\Pr}{\mathbb{P}}
\newcommand{\lv}{\lVert}
\newcommand{\rv}{\rVert}
\renewcommand{\epsilon}{\varepsilon}
\renewcommand{\ln}{\log}
\DeclareSymbolFont{extraup}{U}{zavm}{m}{n}
\DeclareMathSymbol{\varheart}{\mathalpha}{extraup}{86}
\DeclareMathSymbol{\vardiamond}{\mathalpha}{extraup}{87}
\DeclareMathOperator*{\argmax}{arg\,max}
\DeclareMathOperator*{\argmin}{arg\,min}
\newcommand{\thetastar}{\btheta^{\star}}
\newcommand{\M}{\mathbf{M}}
\newcommand{\Mstar}{\mathbf{M}_\star}
\newcommand{\K}{\mathbf{K}}
\newcommand{\A}{\mathbf{A}}
\newcommand{\x}{\mathbf{x}}
\newcommand{\ts}{\tilde{s}}
\newcommand{\btheta}{\bm{\theta}}
\renewcommand{\epsilon}{\varepsilon}
\renewcommand{\ln}{\log}
\renewcommand{\Pr}{\mathbb{P}}
\renewcommand{\epsilon}{\varepsilon}
\renewcommand{\ln}{\log}
\newcommand{\pushright}[1]{\ifmeasuring@#1\else\omit\hfill$\displaystyle#1$\fi\ignorespaces}
\newcommand{\pushleft}[1]{\ifmeasuring@#1\else\omit$\displaystyle#1$\hfill\fi\ignorespaces}
\title{Joint Representation Training in Sequential Tasks with Shared Structure}
\author{%
  Aldo Pacchiano \\
  Microsoft Research, NYC\\
  \texttt{apacchiano@microsoft.com} \\
  \And
  Ofir Nachum\\
  Google Research\\
  \texttt{ofirnachum@google.com} \\
  \AND
  Nilesh Tripuraneni\\
  UC Berkeley\\
  \texttt{nilesh\_tripuraneni@berkeley.edu}\\
   \And
  Peter Bartlett\\
  UC Berkeley\\
  \texttt{peter@berkeley.edu}\\
}
\begin{document}

\maketitle

\begin{abstract}
 Classical theory in reinforcement learning (RL) predominantly focuses on the single task setting, where an agent learns to solve a task through trial-and-error experience, given access to data only from that task. However, many recent empirical works have demonstrated the significant practical benefits of leveraging a joint representation trained across multiple, related tasks. In this work we theoretically analyze such a setting, formalizing the concept of \emph{task relatedness} as a shared state-action representation that admits linear dynamics in all the tasks.
We introduce the \textsf{Shared-MatrixRL} algorithm for the setting of Multitask \textsf{MatrixRL}~\cite{yang2020reinforcement}. In the presence of $P$ episodic tasks of dimension $d$ sharing a joint $r \ll d$ low-dimensional representation, we show the regret on the the $P$ tasks can be improved from $O(PHd\sqrt{NH})$ to $O((Hd\sqrt{rP} + HP\sqrt{rd})\sqrt{NH})$ over $N$ episodes of horizon $H$. These gains coincide with those observed in other linear models in contextual bandits and RL~\cite{yang2020impact,hu2021near}. In contrast with previous work that have studied multi task RL in other function approximation models, we show that in the presence of bilinear optimization oracle and finite state action spaces there exists a computationally efficient algorithm for multitask \textsf{MatrixRL} via a reduction to quadratic programming. We also develop a simple technique to shave off a $\sqrt{H}$ factor from the regret upper bounds of some episodic linear problems.

\end{abstract}

\section{Introduction}

Reinforcement learning (RL) is about learning via doing -- learning to solve a sequential decision-making task where the only information about the task is obtained via trial-and-error. 
Accordingly, the underlying assumptions made in RL are typically minimal. Beyond what can be learned from trial-and-error experience, the learner's structural prior on the underlying task is commonly restricted to a small set of Markov assumptions~\citep{puterman1990markov}: namely, that the task is of a sequential nature, with the task reward and state transition dynamics at each step determined by an (unknown) Markov process.

The simplicity of this setting, which forms the basis of a rich and diverse literature~\citep{bertsekas2019reinforcement,sutton1992introduction}, stands in contrast to the complexity of many real-world settings, where one has access to data from multiple, related tasks. In these situations, experience from one task can often be leveraged to accelerate learning in another. For example, when humans are confronted with learning a new video game, we naturally draw on previous experience and knowledge from playing other games, even if the dynamics and rewards across the games are not the same.

In line with this intuition, there exist a number of empirical works which demonstrate how experience can be gathered from multiple tasks to accelerate RL over learning these tasks in isolation. 
For example in robotics~\citep{yu2020meta,kalashnikov2021mt}, such approaches are key to avoiding an expensive blow-up in the sample complexity of required, real-world interactions.
A popular paradigm to jointly use experience from multiple tasks is by way of learning a \emph{shared low-dimensional representation}. 
Namely, the observations of each task are individually embedded into a common low-dimensional space, and learning occurs jointly in this space.~\citep{teh2017distral,d2019sharing}.

Despite these empirical successes, theoretical explorations to understand the benefits of such \textit{joint training} in RL have been limited. While 
in the supervised learning literature, the benefit of multi-task training is well-studied~\citep[e.g.,][]{baxter1995learning,baxter2000model,ben2003exploiting,du2020few, tripa, tripb}, obtaining a similar understanding in the setting of RL is more challenging. 
For one, a sufficiently flexible yet useful notion of ``task relatedness'' is difficult to formulate in RL, which involves both rewards and transition dynamics. Secondly, an algorithm using such a relatedness measure must carefully balance exploration and exploitation, while appropriately handling inevitable inaccuracies in the learned representation and how these can compound over the horizon.

Given these existing shortcomings in the literature, in this work we aim to theoretically analyze the benefit of learning joint representations for multi-task RL.
We begin by formalizing the underlying similarity -- i.e., \emph{task relatedness} -- between multiple tasks. 
Leveraging recent results on \emph{linearly factored} or \emph{low-rank} MDPs~\citep{agarwal2020flambe,yang2020reinforcement,nachum2021provable}, we assume that there exists a state-action representation such that all tasks admit linear transition dynamics with respect to this representation. 
In this setting, any one task may exhibit distinctly
different dynamics from the remaining tasks while still maintaining a common and learnable structure.
Under such a shared representation, we quantify the benefit -- in terms of regret -- given by using a sufficiently accurate approximate representation, and we pair this result with an online algorithm for simultaneously learning and using such a representation.
Our results provide a clear understanding of the trade-offs associated with leveraging a jointly  learned representation in the setting of RL as a function of the dimension of the shared representation, the number of tasks, and the dimension of the raw state observations (see our main results in Section~\ref{section::shared_structure}). We show that in the presence of bilinear optimization oracle there exists a computationally efficient algorithm for multitask \textsf{MatrixRL} via a reduction to quadratic programming (see Section~\ref{section::computational_efficient_algorithm}). This is in contrast with previous work that have studied multi task RL in other function approximation models such as~\cite{hu2021near}. We develop a general regret analysis technique to shave off a $\sqrt{H}$ factor from the the regret upper bounds of episodic linear problems and apply it both to the original \textsf{MatrixRL} rates (see Section~\ref{section::preliminaries}) as well as \textsf{Shared-MatrixRL} (see Section~\ref{section::shared_structure}).

\section{Related Work}
As mentioned above, multi-task learning is well-studied in the supervised learning literature. The predominant mechanism for performing multi-task learning in these previous works is analogous to our own, namely, parameterizing a classifier as a composition of two functions, one of which is common to all tasks and another which is unique to each class~\citep{baxter1995learning,maurer2006bounds,du2020few, tripa, tripb}.
As in our own work, these previous works generally rely on an assumption that an optimal hypothesis with the desired compositional form exists, although some work has explored alternative assumptions~\citep{ben2003exploiting}. 

More closely related to our own setting is the work of~\citet{d2019sharing}, which theoretically analyze approximate dynamic programming in the context of multi-task learning. In this setting, an approximate value function is learned, with a common representation used to parameterize this value function. %
However, it is important to note that approximate dynamic programming is distinct from RL, as it ignores the difficult exploration problem associated with learning from one's own collected data. In contrast, our analysis is specifically tailored to an online learning scenario: where one of the main challenges is deriving multi-task learning bounds which carefully balance exploration and exploitation jointly across all tasks.

Works that do consider the online learning setting include~\citet{yang2020impact} and~\citet{hu2021near}. The first of these considers a linear bandit setting and is not immediately applicable to RL; moreover, this work imposes additional structural conditions on the linear features of the bandit problem which effectively require the action features to sufficiently cover all possible directions.
The second work by~\citet{hu2021near} is closer to ours and considers the linear RL setting. Our `sharedness' assumptions and results are a generalization of those studied in~\citet{hu2021near}. Because~\citet{hu2021near} studies a value-based approach, the assumption is that all the underlying linear parameter weights of the task's $Q$-functions lie in (or near) the same subspace. Because we are studying a model based approach we move beyond this sharedness structure to instead study the setting where the task model matrices share a common factorization. This is the reason our bounds have a dependence on $d'$ as well as on $d$ and $r$. Our regret guarantees are similar to those in~\citet{hu2021near}, despite taking a different approach with distinct derivations. We don't see this as a limitation but rather as an indication to the wider research community that there is a potential opportunity to develop a unifying analysis of RL methods in the presence of shared representation learning that could subsume both value-based and model-based methods. We believe this to be an interesting and exciting avenue for future research. Moreover, we show (see Section~\ref{section::computational_efficient_algorithm}) that as long as we have access to an oracle for joint least squares matrix factorization, the optimization problem required to find the policy to execute at time $t$ can be solved efficiently. This is in stark contrast with other approaches where even solving the necessary joint optimization problem over the task family to find the policy to execute at any given time can be an intractable problem. Recently, other works~\cite{cheng2022provable,agarwal2022provable,lu2022provable} have explored more general versions of our shared task assumptions where the learner may have access only to a family of representation functions and is tasked with learning a viable representation while interacting with multiple tasks at once. They show that shared representation learning is advantageous when compared to learning a single representation per task. These are very close in spirit to the study we present here and should be thought of as a successor works to ours. 

In this work we consider a model for task relatedness inspired by~\cite{tripb}, where we assume the underlying model of the MDP dynamics have a shared low rank representation. Other models of the relationship between related tasks are possible. Most notably~\citet{muller2022meta} and~\citet{moskovitz2022towards}: in~\citet{muller2022meta} the authors consider the question of learning an appropriate `bias' vector for regularizing the MatrixRL algorithm. This allows them to show that in case the variance of the models in the family is small, performance (in this case measured in the form of regret) in a test task can be substantially better.  The authors of~\cite{moskovitz2022towards} tackle a similar issue. In their work they show that under the assumption that the optimal policies are similar across tasks in the family, it is possible to learn a useful default policy such that a policy gradient algorithm that regularizes towards it can learn an optimal policy for a target task much more efficiently than an algorithm regularizing towards the uniform policy. We leave the task of generalizing our work to the setting of a set or distribution train tasks for the purpose of solving a test task for future work.

\section{Preliminaries}\label{section::preliminaries}

Formally,
we consider the setting of episodic reinforcement learning proposed in~\citep{yang2020reinforcement} where an an agent explores an MDP $(\mathcal{S}, \mathcal{A}, \mathbb{P}, r,H)$ with state space $\mathcal{S}$, action space $\mathcal{A}$ and known reward function $r: \mathcal{S}\times \mathcal{A} \rightarrow [0,1]$ whose transition dynamics are given by the feature embedding,
\begin{align*}
    \Pr(\tilde{s} | s, a) = \phi(s,a)^\top \Mstar \psi(\tilde{s})
\end{align*}
The learner receives a noiseless  reward $r(s, a)$ which for simplicity we assume is known. All interactions between a policy and the MDP is of length $H$. For any policy $\pi$, state $s$, action $a$ and $h \in [H]$ we define $V_h^{\pi}(s), Q_h^\pi(s,a)$ as the value and $Q$ functions of policy $\pi$. Our objective is to design algorithms with small regret, defined as
\begin{equation*}
    R(NH) = \sum_{n=1}^N V_1^{\pi_\star}(s_{n,1}) - V_1^{\pi_n }(s_{n,1}),
\end{equation*}
Where $\pi_\star$ corresponds to the optimal policy, $\pi_n$ is the algorithm's policy during time-step $n$ and $s_{n,1}$ are the initial states during the $n$th episode. 

The algorithm in~\cite{yang2020reinforcement} works by building an estimator $\widetilde{\M}_n$ of the matrix $\Mstar$ at time $n$ using the data collected so far. We use the notation $t = (n,h)$ (i.e. episode $n \leq N$ and stage $h \leq H$), to denote the state-action-state triplets $(s_t, a_t, \tilde{s}_t)$ where $\tilde{s}_t = s_{t+1}$. For simplicity we denote the associated features by:
\begin{align*}
    \phi_t = \phi(s_t, a_t) \in \mathbb{R}^d \text{, } \psi_t = \psi(\tilde{s}_t) \in \mathbb{R}^{d'}    \text{and }\M_\star \in \mathbb{R}^{d \times d'}.
\end{align*}

Denote $\boldsymbol{\Psi} \in \mathbb{R}^{ | \mathcal{S}| \times d'}$ as the matrix whose rows equal $\psi(s)$ for all $s \in \mathcal{S}$ and let $\K_{\psi} = \sum_{\ts} \psi(\ts) \psi(\ts)^\top$. For any matrix $\mathbf{B}$ we use $\mathbf{B}[:, i]$ to refer to $\mathbf{B}$'s $i-$th column. We use the notation $\| \cdot \|_F$ to denote the Frobenius norm of a matrix and $\| \cdot \|_2,\| \cdot \|_\infty$ the $l_2$ and $l_\infty$ norms of a vector. We will make the following assumptions regarding the norms of $\M_\star$ and the feature maps $\phi$ and $\psi$.

\begin{assumption}[Boundedness]\label{assumption::boundedness}
The feature maps $\phi$ and $\psi$ satisfy  $\| \phi(s, a) \|_2 \leq L_\phi$, $\| \psi(s) \|_2 \leq L_\psi$ and $\|  \M_\star[  :, i]  \|_2 \leq S$ for all $s,a \in \mathcal{S}\times\mathcal{A}$ and $i \in [d']$ and some known values $L_\phi, L_\psi$ and $S$. And therefore $\|\M_\star\|_F \leq \sqrt{ d'} S$ %

\end{assumption}
We also consider the following two assumptions on feature regularity, both present in~\cite{yang2020reinforcement}.

\begin{assumption}\label{assumption::feature_regularity}[Feature Regularity]
For all $\mathbf{v} \in \mathbb{R}^{|\mathcal{S}|}$, $\| \Psi^\top \mathbf{v}\|_\infty \leq C_{\psi} \| \mathbf{v}\|_\infty$, and $\| \Psi \mathbf{K}_\psi^{-1} \|_{2, \infty}\leq C'_{\psi}$, where $\| \mathbf{Y} \|_{2,\infty} = \max_i \sqrt{\sum_j \mathbf{Y}_{i,j}^2 }$ is the $2, \infty$ norm (infinity norm over the $l_2$ norm of $\mathbf{Y}$'s columns). 
\end{assumption}

We will also prove sharper results under a more refined feature regularity assumption,

\begin{assumption}\label{assumption::stronger_feature_regularity}[Stronger Feature Regularity]
For all $\mathbf{v} \in \mathbb{R}^{|\mathcal{S}|}$, $\| \Psi^\top \mathbf{v}\|_2 \leq C_{\psi} \| \mathbf{v}\|_\infty$, and $\| \Psi \mathbf{K}_\psi^{-1} \|_{2, \infty}\leq C'_{\psi}$, where $\| \mathbf{Y} \|_{2,\infty} = \max_i \sqrt{\sum_j \mathbf{Y}_{i,j}^2 }$ is the $2, \infty$ norm (infinity norm over the $l_2$ norm of $\mathbf{Y}$'s columns). 
\end{assumption}
As it is explained in~\cite{yang2020reinforcement}, this assumption can be satisfied when $\Psi$ is a set of sparse features or if $\Psi$ is a set of highly concentrated features. 

The matrix estimator $\widetilde{\M}_n$ considered by~\cite{yang2020reinforcement} equals:
\begin{equation*}
    \widetilde{\M}_n = \left[ \Sigma_n \right]^{-1} \sum_{n' < n, h\leq H} \phi_{n', h} \psi_{n', h}^\top \K_\psi^{-1}.
\end{equation*}
Where $$\Sigma_n = \lambda \mathbb{I} + \sum_{n' < n,h\leq H} \phi_{n', h} \phi_{n', h}^\top$$ and $\Sigma_{n,h} = \Sigma_n + \sum_{h'<h} \phi_{n,h'}\phi_{n,h'}^\top$.
 It is easy to see that $\widetilde{\M}_n$ is the solution to the ridge regression problem:
\begin{align}
    \widetilde{\M}_n = \argmin_{\M} &\sum_{n'<n, h\leq H} \| \psi^\top_{n', h}\K_{\psi}^{-1} - \phi_{n', h}^\top \M \|_2^2 +  \lambda \| \M \|_F^2. \label{eq::least_squares_objective}
\end{align} 
It can be shown that with high probability and for all $t$ simultaneously all $\widetilde{\M}_n$ lie in a vicinity of $\M_\star$. 

\begin{restatable}{lemma}{confidenceboundssimplelemma}\label{lemma::concentration_M}
For all $\delta  \in (0,1)$ with probability at least $1-\delta$ for all $n \in \mathbb{N}$ simultaneously,
\begin{align*}
    \M_\star &\in \{ \M \in \mathbb{R}^{d\times d'} : \| (\Sigma_n)^{1/2} ( \M - \widetilde{\M}_n) \|_{2,1}\leq d'\sqrt{\beta_ n}\} := \mathbf{U}^{1,2}_n.\\
        \M_\star &\in \{ \M \in \mathbb{R}^{d\times d'} : \| (\Sigma_n)^{1/2} ( \M - \widetilde{\M}_n) \|_{F} \leq \sqrt{d' \beta_n}\} := \mathbf{U}_n^{F}.
\end{align*}
Where $\| \mathbf{B}\|_{2,1}$ denotes the $l_1$ norm of the $l_2$ norm of the columns of $\mathbf{B}$ while $\| \mathbf{B}\|_F$ corresponds to the Frobenius norm, $\sqrt{\beta_n} = R\sqrt{ d \log\left( \frac{d'+ d' n H L_\phi^2/\lambda}{\delta}\right) } + \sqrt{\lambda}S$ and $R = \| \mathbf{K}_\psi^{-1} \| L_\psi + S L_\phi  $. 
\end{restatable}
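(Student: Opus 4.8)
The plan is to recognize $\widetilde{\M}_n$ as a ridge-regression estimator in a vectorized form and then apply a self-normalized martingale concentration inequality (a matrix/vector-valued Bernstein-type bound of Abbasi-Yadkori--P\'al--Szepesv\'ari flavor) column by column, followed by a union bound over columns and over episodes $n$. First I would write, for each fixed target column $i\in[d']$, the regression targets $y_{t,i} = \psi_t^\top \K_\psi^{-1} \be_i$ (the $i$-th entry of $\psi_t^\top\K_\psi^{-1}$), so that the $i$-th column of $\widetilde{\M}_n$ is exactly the ridge solution $\widetilde{\M}_n[:,i] = \Sigma_n^{-1}\sum_{t=(n',h):\,n'<n} \phi_t\, y_{t,i}$. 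The ``true'' parameter for this column is $\M_\star[:,i]$, and the key observation is that $y_{t,i} - \phi_t^\top \M_\star[:,i]$ is a martingale-difference sequence: conditioned on $(s_t,a_t)$ and the history, $\E[\psi(\tilde s_t)\mid s_t,a_t] = \sum_{\ts}\Pr(\ts\mid s_t,a_t)\psi(\ts) = \boldsymbol\Psi^\top \Pr(\cdot\mid s_t,a_t) = \M_\star^\top\phi_t$ (using $\sum_{\ts}\psi(\ts)\psi(\ts)^\top = \K_\psi$ and the form of the dynamics), hence $\E[y_{t,i}\mid \mathcal F_{t}] = \phi_t^\top\M_\star[:,i]$. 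The noise is bounded: $|y_{t,i}| \le \|\psi_t\|_2\,\|\K_\psi^{-1}\be_i\|_2 \le L_\psi \|\K_\psi^{-1}\|$ and $|\phi_t^\top\M_\star[:,i]| \le L_\phi S$, so each noise term is sub-Gaussian with parameter of order $R = \|\K_\psi^{-1}\|L_\psi + SL_\phi$ as defined in the statement.

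Next I would invoke the standard self-normalized bound: with probability at least $1-\delta'$, simultaneously for all $n$,
\begin{align*}
\|\Sigma_n^{1/2}(\widetilde{\M}_n[:,i] - \M_\star[:,i])\|_2 \;\le\; R\sqrt{ d\log\!\left(\tfrac{\det(\Sigma_n)^{1/2}\det(\lambda\mathbb{I})^{-1/2}}{\delta'}\right)} + \sqrt\lambda\,S,
\end{align*}
where the $\sqrt\lambda S$ term absorbs the bias from the prior (using $\|\M_\star[:,i]\|_2\le S$ from Assumption~\ref{assumption::boundedness}). Then I would bound the log-determinant term: since $\Sigma_n \preceq (\lambda + nHL_\phi^2)\mathbb I$, we get $\det(\Sigma_n)/\det(\lambda\mathbb I) \le (1 + nHL_\phi^2/\lambda)^d$, giving the claimed $\sqrt{\beta_n} = R\sqrt{d\log\!\big((d' + d'nHL_\phi^2/\lambda)/\delta\big)} + \sqrt\lambda S$ after choosing $\delta' = \delta/d'$ and folding the extra $\log d'$ into the logarithm. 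This establishes $\|\Sigma_n^{1/2}(\widetilde{\M}_n[:,i] - \M_\star[:,i])\|_2 \le \sqrt{\beta_n}$ for every column $i$ with probability $1-\delta$ by a union bound over $i\in[d']$.

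Finally I would assemble the two norm bounds. For the $\|\cdot\|_{2,1}$ bound: $\|\Sigma_n^{1/2}(\M_\star - \widetilde{\M}_n)\|_{2,1} = \sum_{i=1}^{d'} \|\Sigma_n^{1/2}(\M_\star[:,i] - \widetilde{\M}_n[:,i])\|_2 \le d'\sqrt{\beta_n}$, which is $\mathbf U_n^{1,2}$. For the Frobenius bound: $\|\Sigma_n^{1/2}(\M_\star - \widetilde{\M}_n)\|_F^2 = \sum_{i=1}^{d'}\|\Sigma_n^{1/2}(\M_\star[:,i]-\widetilde{\M}_n[:,i])\|_2^2 \le d'\beta_n$, giving $\mathbf U_n^F$. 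The main obstacle I anticipate is not any single inequality but getting the uniform-in-$n$ statement clean: the self-normalized martingale bound already provides the ``for all $n$ simultaneously'' guarantee (the stopping-time / supermartingale argument is built into it), so care is mainly needed to verify the filtration is set up so that $\phi_t$ is $\mathcal F_{t}$-measurable while the noise $\psi_t - \M_\star^\top\phi_t$ is $\mathcal F_{t+1}$-measurable, and that this remains consistent when $t$ ranges over all $(n',h)$ pairs across episode boundaries. A secondary point to check is that $\Sigma_{n,h}$ (the within-episode accumulation) is handled — but since the lemma as stated only asserts containment at the $\Sigma_n$ resolution, this does not enter the proof of the statement itself.
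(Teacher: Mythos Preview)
Your proposal is correct and follows essentially the same approach as the paper's proof: decompose the matrix ridge regression into $d'$ independent column-wise ridge problems, apply the Abbasi-Yadkori--P\'al--Szepesv\'ari self-normalized bound (Theorem~\ref{thm:conf_ellipse_linucb}) per column with noise bound $R$, union-bound over the $d'$ columns via $\delta'=\delta/d'$, and then sum (for $\|\cdot\|_{2,1}$) or sum-of-squares (for $\|\cdot\|_F$) to obtain the two confidence sets. The only minor slip is the intermediate identity $\boldsymbol\Psi^\top \Pr(\cdot\mid s_t,a_t)=\M_\star^\top\phi_t$, which should carry an extra $\K_\psi$ (i.e., $\K_\psi\M_\star^\top\phi_t$); this is precisely what gets cancelled by the $\K_\psi^{-1}$ in $y_{t,i}$, so your conclusion $\E[y_{t,i}\mid\mathcal F_t]=\phi_t^\top\M_\star[:,i]$ is still correct.
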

The proof of Lemma~\ref{lemma::concentration_M} can be found in Appendix~\ref{section::preliminaries_proofs}.  We can make use of Lemma~\ref{lemma::concentration_M} to show a regret guarantee for the \textsf{MatrixRL} algorithm from~\cite{yang2020reinforcement} (see Algorithm~\ref{alg:MatrixRL_basic}). Let's revisit the optimistic value function construction of the \textsf{MatrixRL} algorithm,
\begin{align}
\pushleft{    \forall (s,a) \in \mathcal{S} \times \mathcal{A}: \quad Q_{n, H+1}(s,a) = 0 \text{ and } \forall h \in [H]: }\notag\\
Q_{n,h}(s,a) = r(s,a) +  \max_{\M \in  \mathbf{U}_n^{1,2}} \phi(s,a)^\top \mathbf{M} \Psi^\top V_{n,h+1} \label{equation::optimistic_q_function}
\end{align}
where
\begin{equation*}
    V_{n,h}(s) = \Pi_{[0,H]}\left[  \max_a Q_{n,h}(s,a)  \right] \quad \forall s, a, n, h.
\end{equation*}
 $\Pi_{[0,H]}$ denotes the coordinate-wise clipping/projection operator onto the $[0,H]$ interval. 
\begin{algorithm}%
\begin{algorithmic}[1]
\State \textbf{Input: } An episodic MDP environment $\mathcal{M} = ( \mathcal{S}, \mathcal{A}, P, s_0, r, H)$, features $\phi: \mathcal{S} \times \mathcal{A} \rightarrow \mathbb{R}^d$ and $\psi : \mathcal{S} \rightarrow \mathbb{R}^{d'}$, probability parameter $\delta \in (0,1)$. 
\State \textbf{Initialize: } $\Sigma_1 \leftarrow \mathbb{I} \in \mathbb{R}^{d\times d}$, $\M_1 \leftarrow \mathbf{0} \in \mathbb{R}^{d \times d'}$. 
\State \textbf{for} episode $n=1, \cdots, N$:
\State \qquad Solve for  $\widetilde{\M}_n$.
\State  \qquad Let $\{ Q_{n,h}\}$ be given by Equation~\ref{equation::optimistic_q_function} using $\mathbf{U}_n^{1,2}$ and $\beta_n$ as in Lemma~\ref{lemma::concentration_M}. 
\State  \qquad \textbf{For} stage $h=1, \cdots, H$:
    \State \qquad \quad Let the current state be $s_{n,h}$ .
    \State \qquad \quad Play action $a_{n,h} = \argmax_{a \in \mathcal{A}} Q_{n,h}(s_{n,h},a)$ .
    \State \qquad \quad Record the next state $s_{n,h+1}$.

\State $\Sigma_{n+1} \leftarrow \Sigma_n + \sum_{h\leq H} \phi_{n,h} \phi_{n,h}^\top $.
\State Compute $\widetilde{\M}_{n+1}$ using \eqref{eq::least_squares_objective}. 
\end{algorithmic}
\caption{MatrixRL.}
\label{alg:MatrixRL_basic}
\end{algorithm}

Let's define the ``good" event of probability at least $1-\delta$ where Lemma~\ref{lemma::concentration_M} holds as $\mathcal{E}$. We'll be making heavy use of the following `determinant lemma', 

\begin{lemma}[Determinant Lemma]
\label{lemma:det_lemma}
(Lemma C.3 from~\citep{pacchiano2020regret})
For any sequence of vectors $\mathbf{x}_1,\ldots,\mathbf{x}_M \in \mathbb{R}^d$ such that $\lv \mathbf{x}_q\rv_2 \le L$ for all $q \in [N]$. Given a $\lambda \ge 0$ define $\mathbf{D}_1 := \lambda \mathbf{I}$ and for $\ell \in \{2,\ldots,M+1\}$ define $\mathbf{D}_\ell := \lambda \mathbf{I} + \sum_{q=1}^{\ell-1} \mathbf{x}_q \mathbf{x}_q^{\top}$. Then for all $M\in \mathbb{N}$ and $b > 0$
\begin{equation}\label{equation::bounding_log_det_ratio}
    \log\left(\frac{\mathrm{det}(\mathbf{D}_{M+1})}{\mathrm{det}(\lambda \mathbf{I} )} \right) \leq d \log\left( 1+\frac{  M L^2 }{\lambda d}\right) . 
\end{equation}
and
\begin{equation*}
\sum_{q=1}^M \min\left\{b,\| \mathbf{x}_q \|^2_{\mathbf{D}^{-1}_{q}}\right\}  \leq (1+b) d \log\left( 1 + \frac{M L^2}{\lambda d}\right).
\end{equation*}
\end{lemma}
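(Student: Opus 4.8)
The statement to prove is the Determinant Lemma (Lemma~\ref{lemma:det_lemma}), which has two parts: the log-determinant ratio bound \eqref{equation::bounding_log_det_ratio}, and the sum bound on $\sum_{q=1}^M \min\{b, \|\mathbf{x}_q\|^2_{\mathbf{D}_q^{-1}}\}$.

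\textbf{Plan for the first inequality.} The plan is to use the standard telescoping-of-determinants trick. Since $\mathbf{D}_{\ell+1} = \mathbf{D}_\ell + \mathbf{x}_\ell \mathbf{x}_\ell^\top$, the matrix determinant lemma gives $\det(\mathbf{D}_{\ell+1}) = \det(\mathbf{D}_\ell)(1 + \mathbf{x}_\ell^\top \mathbf{D}_\ell^{-1} \mathbf{x}_\ell) = \det(\mathbf{D}_\ell)(1 + \|\mathbf{x}_\ell\|^2_{\mathbf{D}_\ell^{-1}})$. Telescoping, $\det(\mathbf{D}_{M+1})/\det(\lambda \mathbf{I}) = \prod_{\ell=1}^M (1 + \|\mathbf{x}_\ell\|^2_{\mathbf{D}_\ell^{-1}})$. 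On the other hand, since $\mathbf{D}_\ell \succeq \lambda \mathbf{I}$ and $\|\mathbf{x}_\ell\|_2 \le L$, we have $\|\mathbf{x}_\ell\|^2_{\mathbf{D}_\ell^{-1}} \le L^2/\lambda$; and $\det(\mathbf{D}_{M+1}) \le (\text{trace}(\mathbf{D}_{M+1})/d)^d \le (\lambda + ML^2/d)^d$ by AM--GM on the eigenvalues, while $\det(\lambda\mathbf{I}) = \lambda^d$. Taking logs gives exactly \eqref{equation::bounding_log_det_ratio}.

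\textbf{Plan for the second inequality.} From the telescoping identity, $\log(\det(\mathbf{D}_{M+1})/\det(\lambda\mathbf{I})) = \sum_{q=1}^M \log(1 + \|\mathbf{x}_q\|^2_{\mathbf{D}_q^{-1}})$. The elementary inequality $\min\{b, z\} \le \frac{b}{\log(1+b)}\log(1+z)$, valid for all $z \ge 0$ and $b > 0$ (because $\log(1+z)/\min\{b,z\}$ is minimized appropriately; equivalently $z \le \frac{b}{\log(1+b)}\log(1+z)$ when $z \le b$ by concavity, and $b \le \frac{b}{\log(1+b)}\log(1+z)$ when $z \ge b$ since $\log(1+z) \ge \log(1+b)$), lets us write
\begin{equation*}
\sum_{q=1}^M \min\{b, \|\mathbf{x}_q\|^2_{\mathbf{D}_q^{-1}}\} \le \frac{b}{\log(1+b)} \sum_{q=1}^M \log(1 + \|\mathbf{x}_q\|^2_{\mathbf{D}_q^{-1}}) = \frac{b}{\log(1+b)} \log\!\left(\frac{\det(\mathbf{D}_{M+1})}{\det(\lambda\mathbf{I})}\right).
\end{equation*}
Then apply the first inequality. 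The only gap versus the stated bound is the constant: the claim has prefactor $(1+b)$ rather than $b/\log(1+b)$; since $b/\log(1+b) \le 1+b$ for all $b>0$ (equivalently $\log(1+b) \ge b/(1+b)$, a standard fact), the stated inequality follows a fortiori. Combining, $\sum_{q=1}^M \min\{b, \|\mathbf{x}_q\|^2_{\mathbf{D}_q^{-1}}\} \le (1+b) d\log(1 + ML^2/(\lambda d))$.

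\textbf{Main obstacle.} There is no serious obstacle; this is a well-known result (cited from~\citep{pacchiano2020regret}) and the proof is routine. The only points requiring a little care are (i) verifying the elementary scalar inequality $\min\{b,z\} \le \frac{b}{\log(1+b)}\log(1+z)$ and the constant comparison $\frac{b}{\log(1+b)} \le 1+b$, and (ii) making sure the indexing of $\mathbf{D}_\ell$ matches the convention $\mathbf{D}_1 = \lambda\mathbf{I}$, $\mathbf{D}_\ell = \lambda\mathbf{I} + \sum_{q=1}^{\ell-1}\mathbf{x}_q\mathbf{x}_q^\top$, so that in the telescoping product the $q$-th factor uses $\mathbf{D}_q^{-1}$ (the matrix \emph{before} $\mathbf{x}_q$ is added), which is precisely what appears in the second inequality.
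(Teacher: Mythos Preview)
Your proof is correct. Note, however, that the paper does not actually prove this lemma: it is quoted verbatim as ``Lemma~C.3 from~\citep{pacchiano2020regret}'' and used as a black box, so there is no in-paper argument to compare against. Your derivation is the standard one; the only remark worth making is that your intermediate constant $b/\log(1+b)$ is in fact sharper than the stated $(1+b)$, and an equally common route to the exact constant $(1+b)$ is the chain $\min\{b,z\}\le (1+b)\,\frac{z}{1+z}\le (1+b)\log(1+z)$, which avoids the separate comparison $b/\log(1+b)\le 1+b$.
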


Our first result is to derive a sharper regret guarantee for the \textsf{MatrixRL} algorithm than in~\cite{yang2020reinforcement},
\begin{restatable}{theorem}{theoremmatrixrlrefined}\label{lemma::regret_guarantee_simple_matrixrl}
The regret satisfies,
\begin{align*}
    R(NH) &\leq  8H \sqrt{NH\log\left( \frac{6\log NH}{\delta}\right)  } +  2\sqrt{2\gamma_NNHd\log\left( 1+\frac{NHL_\phi^2}{\lambda d}\right)}    +   \\
    &\quad \quad 2L_\phi Hd\sqrt{\frac{\gamma_N}{\lambda}}  \log\left( 1+\frac{  N L_\phi^2 }{\lambda d}\right)
\end{align*}
\begin{enumerate}
    \item Under Assumption~\ref{assumption::feature_regularity}, %
\begin{small}
\begin{align*}
   \sqrt{\gamma_N} &=   2C_\psi H d' \sqrt{\beta_N} = 2C_\psi H d'\left( R\sqrt{ d \log\left( \frac{d'+ d' N H L_\phi^2/\lambda}{\delta}\right) } + \sqrt{\lambda}S \right)
\end{align*}
\end{small}
\item Under the stronger Assumption~\ref{assumption::stronger_feature_regularity},
\begin{small}
\begin{align*}
   \sqrt{ \gamma_N} &= 2C_\psi H \sqrt{d' \beta_N} = 2C_\psi H \sqrt{d'}\left( R\sqrt{ d \log\left( \frac{d'+ d' N H L_\phi^2/\lambda}{\delta}\right) } + \sqrt{\lambda}S \right) 
\end{align*}
\end{small}
\end{enumerate}
with probability at least $1-2\delta$. 
\end{restatable}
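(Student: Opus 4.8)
The plan is to follow the standard optimism-based regret decomposition for episodic linear MDPs, but with the $\sqrt{H}$-saving twist advertised in the introduction, and to carry the bookkeeping through the two feature-regularity assumptions in parallel. First I would work on the good event $\mathcal{E}$ of Lemma~\ref{lemma::concentration_M} (probability $\geq 1-\delta$), together with a second high-probability event (probability $\geq 1-\delta$) controlling a martingale sum that appears when we pass from the value-function gap to the sum of per-step Bellman errors — this is where the extra $\delta$ in the $1-2\delta$ conclusion comes from. On $\mathcal{E}$ we have $\M_\star \in \mathbf{U}_n^{1,2}$ for all $n$, so the usual argument shows the constructed $Q_{n,h}$ are optimistic: $Q_{n,h}(s,a) \geq Q_h^{\pi_\star}(s,a)$ and hence $V_{n,1}(s_{n,1}) \geq V_1^{\pi_\star}(s_{n,1})$. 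Therefore $R(NH) \leq \sum_{n=1}^N \big( V_{n,1}(s_{n,1}) - V_1^{\pi_n}(s_{n,1})\big)$, and I would unroll this along the trajectory actually played in episode $n$.

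The second step is the per-episode telescoping. Writing $t=(n,h)$, the difference $V_{n,h}(s_{n,h}) - V_h^{\pi_n}(s_{n,h})$ satisfies a one-step recursion whose driving term is the \emph{confidence width}
\[
b_{n,h} \;=\; \max_{\M,\M' \in \mathbf{U}_n^{1,2}} \phi_{n,h}^\top(\M - \M')\,\Psi^\top V_{n,h+1},
\]
plus a martingale-difference term (the difference between the expected next-state value under $\Pr(\cdot\mid s_{n,h},a_{n,h})$ and the realized $V_{n,h+1}(s_{n,h+1})$) whose sum over all $NH$ steps is controlled by Azuma–Hoeffding; this contributes the $8H\sqrt{NH\log(6\log NH/\delta)}$ term (the $\log\log$ and the ``$\sqrt{H}$-shave'' arise from a stopping-time / layered union bound over $H$ rather than a crude union bound, which is exactly the ``simple technique'' promised in the abstract). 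So it remains to bound $\sum_{n,h} \min\{H, b_{n,h}\}$.

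The third and most delicate step is bounding the width $b_{n,h}$. Using the definition of $\mathbf{U}_n^{1,2}$ — an $\ell_1$-over-$\ell_2$ (i.e. $\|\cdot\|_{2,1}$) ball of radius $d'\sqrt{\beta_n}$ around $\widetilde{\M}_n$ in the $\Sigma_n$-metric — and Hölder/Cauchy–Schwarz column by column, one gets
\[
b_{n,h} \;\lesssim\; \| \phi_{n,h} \|_{\Sigma_n^{-1}} \cdot d'\sqrt{\beta_n} \cdot \max_i \| (\Psi^\top V_{n,h+1})_i \|,
\]
and the term $\|\Psi^\top V_{n,h+1}\|_\infty \leq C_\psi \|V_{n,h+1}\|_\infty \leq C_\psi H$ under Assumption~\ref{assumption::feature_regularity} (respectively $\|\Psi^\top V_{n,h+1}\|_2 \leq C_\psi H$ under the stronger Assumption~\ref{assumption::stronger_feature_regularity}, which is what lets us replace $d'$ by $\sqrt{d'}$ inside $\sqrt{\gamma_N}$). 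This yields $\min\{H,b_{n,h}\} \leq \sqrt{\gamma_N}\,\min\{1,\|\phi_{n,h}\|_{\Sigma_n^{-1}}\}$ with $\sqrt{\gamma_N}$ exactly the stated quantity (the factor $2C_\psi H\sqrt{\beta_N}$ vs.\ $\sqrt{d'}$ twist; also bounding $\beta_n \leq \beta_N$). I should be a bit careful that the relevant covariance in the width is $\Sigma_{n}$ while the trajectory within episode $n$ also accumulates features, so I would either use $\Sigma_{n,h}$ and note $\Sigma_n \preceq \Sigma_{n,h}$, paying an $O(1)$ or $O(\sqrt H)$ factor that can again be absorbed, or handle the within-episode terms separately — this is the source of the third term $2L_\phi Hd\sqrt{\gamma_N/\lambda}\,\log(1+NL_\phi^2/(\lambda d))$.

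Finally I would apply the Determinant Lemma~\ref{lemma:det_lemma} twice: once in the form $\sum_{q}\min\{1,\|\mathbf{x}_q\|_{\mathbf{D}_q^{-1}}^2\} \leq 2d\log(1+ML^2/(\lambda d))$ to the $NH$ feature vectors $\phi_{n,h}$ (with $\mathbf{D}_q = \Sigma_{n,h}$-type matrices), combined with Cauchy–Schwarz $\sum_{n,h}\min\{1,\|\phi_{n,h}\|_{\Sigma_{n,h}^{-1}}\} \leq \sqrt{NH \cdot \sum \min\{1,\|\cdot\|^2\}}$ to produce the $2\sqrt{2\gamma_N NHd\log(1+NHL_\phi^2/(\lambda d))}$ term; and once at the per-episode level with $M=N$ for the lower-order within-episode correction. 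Summing the three contributions and rebranding the probability $1-2\delta$ gives the claim. The main obstacle I anticipate is the careful $\sqrt{H}$-shaving martingale argument in step two — getting a $\log\log(NH)$ rather than $\log(NH)$ dependence while still union-bounding correctly over all episodes and stages — and making sure the mismatch between $\Sigma_n$ (used in the confidence set) and the running covariance along the played trajectory only costs the lower-order additive term rather than a multiplicative $\sqrt H$.
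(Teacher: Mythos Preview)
Your high-level decomposition (optimism, telescoping, per-step Bellman error bound via the confidence set, martingale remainder, elliptical potential lemma) matches the paper. But you misidentify where the $\sqrt{H}$-shave actually happens, and this leaves a genuine gap.

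You write that ``the $\log\log$ and the $\sqrt{H}$-shave arise from a stopping-time / layered union bound over $H$'' in the martingale step. That is not the mechanism. The $\log\log$ in the $8H\sqrt{NH\log(6\log NH/\delta)}$ term is just a time-uniform Hoeffding bound (Lemma~\ref{lemma::matingale_concentration_anytime}) and has nothing to do with saving $\sqrt{H}$. The $\sqrt{H}$-saving is entirely in how one handles the sum $\sum_{n,h}\|\phi_{n,h}\|_{\Sigma_n^{-1}}$, and your treatment of that step is where the proposal fails.

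Specifically, the confidence set uses $\Sigma_n$, so the width bound gives $\|\phi_{n,h}\|_{\Sigma_n^{-1}}$, not $\|\phi_{n,h}\|_{\Sigma_{n,h}^{-1}}$. You say you would ``use $\Sigma_{n,h}$ and note $\Sigma_n \preceq \Sigma_{n,h}$, paying an $O(1)$ or $O(\sqrt H)$ factor.'' But $\Sigma_n \preceq \Sigma_{n,h}$ gives $\|\phi\|_{\Sigma_n^{-1}} \geq \|\phi\|_{\Sigma_{n,h}^{-1}}$, which is the wrong direction for an upper bound; this observation alone buys you nothing. The paper's actual device is Lemma~\ref{lemma::every_n_norm_vs_every_n_h_generalized} (and Corollary~\ref{corollary::transform_A_n_to_A_n_h}): whenever $\|\phi_{n,h}\|_{\Sigma_n^{-1}} > 2\|\phi_{n,h}\|_{\Sigma_{n,h}^{-1}}$, Lemma~\ref{lemma::supporting_lin_alg_result} forces $\det(\Sigma_{n,h})/\det(\Sigma_n) \geq 4$, and a telescoping determinant-ratio argument shows this ``bad'' event can occur in at most $O(\log(\det\Sigma_{N+1}/\det(\lambda I)))$ episodes. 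Summing the bad episodes crudely as $L_\phi/\sqrt{\lambda}$ per step over $H$ steps is exactly what produces the third term $2L_\phi Hd\sqrt{\gamma_N/\lambda}\log(1+NL_\phi^2/(\lambda d))$. Without this lemma your proposal either loses a $\sqrt{H}$ (falling back to the original \textsf{MatrixRL} rate) or has no valid argument for the conversion. The ``simple technique'' promised in the abstract is precisely this lazy-covariance determinant argument, not anything in the martingale step.
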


The proof of Lemma~\ref{lemma::regret_guarantee_simple_matrixrl} can be found in Appendix~\ref{section::proof_lemma_simple_matrixrl}. In contrast with the regret guarantees of~\cite{yang2020reinforcement}, our bounds have a dependence on $H^{3/2}$ as opposed to $H^2$. We achieve this by using the following ``lazy" version of the commonly used determinant lemma in the bandits/RL literature.

\begin{restatable}{lemma}{lemmaupperboundinversenormsgeneralized}\label{lemma::every_n_norm_vs_every_n_h_generalized}
Let $\mathbf{x}_{n,h} \in \mathbb{R}^{\tilde d}$ satisfying $\| \mathbf{x}_{n,h}\|\leq L$ for some $\tilde d \in \mathbb{N}$ and let $\mathbf{D}_{n,h} \in \mathbb{R}^{\tilde d \times \tilde d}$ be a family of positive semidefinite matrices for $n \in \mathbb{N}$ and $1 \leq h \leq H$ such that $\lambda \mathbf{I} \preceq \mathbf{D}_{n,h} \preceq\mathbf{D}_{n',h'}$ if $(n,h) \leq (n',h')$ in the lexicographic order (i.e. $n' > n$ or $h' \geq h$ when $n = n'$). Define $\mathbf{D}_{n} = \mathbf{D}_{n-1, H}$ and $\mathbf{D}_1 = \lambda \mathbf{I}$. The following inequalities hold,

\begin{equation}\label{equation::upper_bound_sigma_n_generalized}
    \sum_{n=1}^N \sum_{h=1}^H \| \mathbf{x}_{n,h}\|_{\mathbf{D}_n^{-1}} \leq \sum_{n=1}^N \sum_{h=1}^H 2 \| \mathbf{x}_{n,h}\|_{\mathbf{D}_{n,h}^{-1}} + \frac{2HL}{\sqrt{\lambda}} \log\left(\frac{\mathrm{det}( \mathbf{D}_{N+1} ) }{\mathrm{det}(\lambda \mathbf{I})}\right).
\end{equation}
\end{restatable}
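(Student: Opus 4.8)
The plan is to prove the pointwise inequality $\|\mathbf{x}_{n,h}\|_{\mathbf{D}_n^{-1}} \leq 2\|\mathbf{x}_{n,h}\|_{\mathbf{D}_{n,h}^{-1}}$ whenever the two matrices $\mathbf{D}_n$ and $\mathbf{D}_{n,h}$ are ``close enough'', and to separately account for the (few) indices where they are not close. First I would recall the standard fact that for positive definite $\mathbf{A} \preceq \mathbf{B}$ one has $\|\mathbf{x}\|_{\mathbf{A}^{-1}}^2 \leq \frac{\det(\mathbf{B})}{\det(\mathbf{A})}\|\mathbf{x}\|_{\mathbf{B}^{-1}}^2$; applied with $\mathbf{A} = \mathbf{D}_n$ and $\mathbf{B} = \mathbf{D}_{n,h}$ (which is legitimate since $\mathbf{D}_n = \mathbf{D}_{n-1,H} \preceq \mathbf{D}_{n,h}$ by the lexicographic monotonicity hypothesis), this gives
\begin{equation*}
\|\mathbf{x}_{n,h}\|_{\mathbf{D}_n^{-1}} \leq \sqrt{\frac{\det(\mathbf{D}_{n,h})}{\det(\mathbf{D}_n)}}\,\|\mathbf{x}_{n,h}\|_{\mathbf{D}_{n,h}^{-1}} .
\end{equation*}
So on the ``good'' set of indices where $\det(\mathbf{D}_{n,h}) \leq 4\det(\mathbf{D}_n)$ we immediately get the factor-$2$ bound and these terms are absorbed into the first sum on the right-hand side of \eqref{equation::upper_bound_sigma_n_generalized}.

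Next I would handle the ``bad'' indices, namely those $(n,h)$ with $\det(\mathbf{D}_{n,h}) > 4\det(\mathbf{D}_n)$ — equivalently $\log\det(\mathbf{D}_{n,h}) - \log\det(\mathbf{D}_n) > \log 4$. For such an index the crude bound $\|\mathbf{x}_{n,h}\|_{\mathbf{D}_n^{-1}} \leq \|\mathbf{x}_{n,h}\|_2/\sqrt{\lambda} \leq L/\sqrt{\lambda}$ applies (since $\mathbf{D}_n \succeq \lambda\mathbf{I}$). The key observation is that within a fixed episode $n$, the quantity $\log\det(\mathbf{D}_{n,h})$ is nondecreasing in $h$ and is sandwiched between $\log\det(\mathbf{D}_n)$ and $\log\det(\mathbf{D}_{n+1})$; hence the number of stages $h \in [H]$ in episode $n$ that are ``bad'' is at most $\frac{1}{\log 2}\bigl(\log\det(\mathbf{D}_{n+1}) - \log\det(\mathbf{D}_n)\bigr)$ — actually, to get the clean constant in the statement I would instead bound the \emph{total} contribution of bad indices directly: each bad stage contributes at most $L/\sqrt{\lambda}$, and summing the per-episode telescoping bound on the count of bad stages over all $n$ gives a total count at most $\frac{1}{\log 2}\log\frac{\det(\mathbf{D}_{N+1})}{\det(\lambda\mathbf{I})}$, so the bad contribution is at most $\frac{L}{\sqrt{\lambda}\log 2}\log\frac{\det(\mathbf{D}_{N+1})}{\det(\lambda\mathbf{I})}$, which is dominated by the $\frac{2HL}{\sqrt{\lambda}}\log\frac{\det(\mathbf{D}_{N+1})}{\det(\lambda\mathbf{I})}$ term in \eqref{equation::upper_bound_sigma_n_generalized} (and in fact one can be looser: even declaring an entire episode ``bad'' whenever it contains a bad stage costs at most a factor $H$ and still fits under $2H$). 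Combining the good and bad contributions yields \eqref{equation::upper_bound_sigma_n_generalized}.

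The main obstacle, and the place requiring the most care, is the bookkeeping in the second step: making sure the per-episode count of ``bad'' stages really does telescope across episodes. This works precisely because $\mathbf{D}_{n,H+1} = \mathbf{D}_{n+1}$ (the within-episode filtration connects consecutive episodes) and $\log\det$ is monotone along the whole lexicographic order, so the increments $\log\det(\mathbf{D}_{n+1}) - \log\det(\mathbf{D}_n)$ sum to $\log\frac{\det(\mathbf{D}_{N+1})}{\det(\lambda\mathbf{I})}$; one must also invoke the determinant lemma (Lemma~\ref{lemma:det_lemma}) only at the very end if one wants the final explicit $d\log(1+NHL^2/(\lambda d))$ form, though the statement as written keeps the $\log\det$ ratio, so that invocation is not strictly needed here. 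A minor technical point is that the threshold constant ($4$, giving factor $2$ after square-rooting) is somewhat arbitrary; any constant $c>1$ works, trading the ``$2$'' in front of $\|\mathbf{x}_{n,h}\|_{\mathbf{D}_{n,h}^{-1}}$ against the constant multiplying the log-determinant term, and I would pick the constants to match the clean form stated.
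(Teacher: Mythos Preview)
Your approach is essentially the paper's: split into ``good'' and ``bad'' indices via the determinant-ratio inequality (stated in the paper as a supporting lemma, Lemma~\ref{lemma::supporting_lin_alg_result}), bound good terms by the factor-$2$ comparison and bad terms crudely by $L/\sqrt{\lambda}$, then control the number of bad indices through a $\log\det$ telescope.

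One correction is needed in your bookkeeping. Your intermediate claim that the number of bad stages in episode $n$ is at most $\tfrac{1}{\log 2}\bigl(\log\det(\mathbf{D}_{n+1}) - \log\det(\mathbf{D}_n)\bigr)$ is false as stated. Because ``bad'' is defined relative to the fixed matrix $\mathbf{D}_n$ (not relative to the previous stage), once some stage $h_n$ satisfies $\det(\mathbf{D}_{n,h_n})>4\det(\mathbf{D}_n)$ every later stage in that episode does too; the bad set in episode $n$ is therefore a tail $\{h_n,\ldots,H\}$ whose length can be as large as $H$ while the episode's total $\log\det$ gain is only barely above $\log 4$. So this per-stage count does not telescope. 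Your parenthetical fallback --- declare the whole episode bad whenever it contains a bad stage, use $\det(\mathbf{D}_{n,h_n})/\det(\mathbf{D}_n)\geq 4$ for each bad episode, telescope over $n$ to bound the number of bad \emph{episodes} by $\tfrac{1}{\log 4}\log\frac{\det(\mathbf{D}_{N+1})}{\det(\lambda\mathbf{I})}$, and then multiply by $H$ --- is exactly what the paper does, and is what produces the $2H$ constant in the statement.
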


The proof of Lemma~\ref{lemma::every_n_norm_vs_every_n_h_generalized} can be found in Appendix~\ref{section::additional_technical_results}. As a corollary of Lemma~\ref{lemma::every_n_norm_vs_every_n_h_generalized},

\begin{corollary}\label{corollary::transform_A_n_to_A_n_h}
The following inequalities hold,
\begin{equation}\label{equation::upper_bound_sigma_n}
     \sum_{n=1}^N \sum_{h=1}^H \| \phi_{n,h}\|_{\Sigma_n^{-1}} \leq \sum_{n=1}^N \sum_{h=1}^H 2\| \phi_{n,h}\|_{\Sigma_{n,h}^{-1}} +\frac{2L_\phi Hd}{\sqrt{\lambda}} \log\left( 1+\frac{  N H L_\phi^2 }{\lambda d}\right).
\end{equation}
\end{corollary}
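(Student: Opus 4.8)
The plan is to obtain Corollary~\ref{corollary::transform_A_n_to_A_n_h} as an immediate specialization of Lemma~\ref{lemma::every_n_norm_vs_every_n_h_generalized}, followed by a single invocation of the Determinant Lemma~\ref{lemma:det_lemma}. First I would instantiate Lemma~\ref{lemma::every_n_norm_vs_every_n_h_generalized} with $\tilde d = d$, vectors $\mathbf{x}_{n,h} = \phi_{n,h}$, norm bound $L = L_\phi$, and matrices $\mathbf{D}_{n,h} = \Sigma_{n,h}$ (so that $\mathbf{D}_n = \Sigma_n$). This requires checking the three hypotheses: (i) $\|\phi_{n,h}\|_2 \le L_\phi$, which is exactly Assumption~\ref{assumption::boundedness}; (ii) $\lambda\mathbf{I} \preceq \Sigma_{n,h}$, which holds because $\Sigma_{n,h} = \lambda\mathbf{I} + \sum \phi_{n',h'}\phi_{n',h'}^\top$ is $\lambda\mathbf{I}$ plus a sum of rank-one PSD matrices; and (iii) the lexicographic monotonicity $\Sigma_{n,h}\preceq\Sigma_{n',h'}$ for $(n,h)\le(n',h')$ together with $\Sigma_n = \Sigma_{n-1,H}$, both of which follow from the definitions $\Sigma_n = \lambda\mathbf{I} + \sum_{n'<n,\,h\le H}\phi_{n',h}\phi_{n',h}^\top$ and $\Sigma_{n,h} = \Sigma_n + \sum_{h'<h}\phi_{n,h'}\phi_{n,h'}^\top$, since every step to a lexicographically later index only appends more rank-one PSD terms. (A minor bookkeeping point: $\Sigma_{n-1,H}$ as literally written omits the final-stage term $\phi_{n-1,H}\phi_{n-1,H}^\top$ that $\Sigma_n$ carries; this is a harmless off-by-one that can be absorbed either by adopting the convention $\Sigma_n := \Sigma_{n-1,H+1}$ or by noting the extra PSD term only strengthens the needed $\preceq$ relations.)

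Once the hypotheses are in place, Lemma~\ref{lemma::every_n_norm_vs_every_n_h_generalized} gives
\[
\sum_{n=1}^N\sum_{h=1}^H \|\phi_{n,h}\|_{\Sigma_n^{-1}} \le \sum_{n=1}^N\sum_{h=1}^H 2\|\phi_{n,h}\|_{\Sigma_{n,h}^{-1}} + \frac{2HL_\phi}{\sqrt{\lambda}}\log\!\left(\frac{\det(\Sigma_{N+1})}{\det(\lambda\mathbf{I})}\right),
\]
so it only remains to control the log-determinant ratio. For this I would apply inequality~\eqref{equation::bounding_log_det_ratio} of the Determinant Lemma~\ref{lemma:det_lemma} with the $M = NH$ vectors $\{\phi_{n,h}\}_{n\le N,\,h\le H}$ (listed in lexicographic order) in the role of $\mathbf{x}_1,\dots,\mathbf{x}_M$, norm bound $L = L_\phi$, and regularizer $\lambda$; since $\Sigma_{N+1} = \lambda\mathbf{I} + \sum_{n\le N,\,h\le H}\phi_{n,h}\phi_{n,h}^\top$ coincides with $\mathbf{D}_{M+1}$, this yields
\[
\log\!\left(\frac{\det(\Sigma_{N+1})}{\det(\lambda\mathbf{I})}\right) \le d\log\!\left(1 + \frac{NHL_\phi^2}{\lambda d}\right).
\]
Substituting this bound into the previous display produces exactly inequality~\eqref{equation::upper_bound_sigma_n}.

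There is essentially no hard step: the statement is a direct corollary. The only things demanding care are (a) matching the abstract indexing of Lemma~\ref{lemma::every_n_norm_vs_every_n_h_generalized} ($\mathbf{D}_{n,h}$ versus $\mathbf{D}_n$) to the concrete covariances ($\Sigma_{n,h}$ versus $\Sigma_n$), including the end-of-episode off-by-one noted above, and (b) remembering that the Determinant Lemma must be applied with $M = NH$ vectors — one per (episode, stage) pair — rather than $N$, which is precisely what yields the $NH$ inside the logarithm and the factor $Hd$ in front of it.
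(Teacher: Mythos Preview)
Your proposal is correct and follows exactly the paper's own approach: instantiate Lemma~\ref{lemma::every_n_norm_vs_every_n_h_generalized} with $\mathbf{x}_{n,h}=\phi_{n,h}$, $\mathbf{D}_{n,h}=\Sigma_{n,h}$, $L=L_\phi$, and then bound the resulting log-determinant ratio via inequality~\eqref{equation::bounding_log_det_ratio} of Lemma~\ref{lemma:det_lemma} with $M=NH$. Your additional hypothesis checks and the off-by-one remark are accurate elaborations that the paper leaves implicit.
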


\begin{proof}
As an immediate consequence of Lemma~\ref{lemma::every_n_norm_vs_every_n_h_generalized} by setting $\mathbf{x}_{n,h} = \phi_{n,h}$ and $\mathbf{D}_{n,h} = \Sigma_{n,h}$,
\begin{equation*}
    \sum_{n=1}^N \sum_{h=1}^H \| \phi_{n,h}\|_{\Sigma_n^{-1}} \leq \sum_{n=1}^N \sum_{h=1}^H 2 \| \phi_{n,h}\|_{\Sigma_{n,h}^{-1}} + \frac{2HL}{\sqrt{\lambda}} \log\left(\frac{\mathrm{det}( \Sigma_{N+1} ) }{\mathrm{det}(\lambda \mathbf{I})}\right)
\end{equation*}
Equation~\ref{equation::bounding_log_det_ratio} from Lemma~\ref{lemma:det_lemma} implies,
\begin{equation*}
   \log\left(  \frac{\mathrm{det}( \Sigma_{N+1} ) }{\mathrm{det}(\lambda \mathbf{I})} \right) \leq  d \log\left( 1+\frac{  NH L_\phi^2 }{\lambda d}\right) . 
\end{equation*}
The result follows.
\end{proof}

Corollary~\ref{corollary::transform_A_n_to_A_n_h} allows us to transform a sum of inverse $\Sigma_n^{-1}$ to a sum of inverse $\Sigma_{n,h }^{-1}$ norms. This transformation comes at the cost of a $2$ factor and a logarithmic cost with a $dH$ multiplier. Since it can be shown that $\sum_{n=1}^N \sum_{h=1}^H \| \phi_{n,h}\|_{\Sigma_{n,h}^{-1}} = \widetilde{\mathcal{O}}( \sqrt{dNH} ) $ where $\widetilde{\mathcal{O}}(\cdot)$ hides logarithmic factors, we conclude that $\sum_{n=1}^N \sum_{h=1}^H \| \phi_{n,h}\|_{\Sigma_n^{-1}} = \widetilde{\mathcal{O}}( \sqrt{dNH} )$. This allows us to save a $\sqrt{H}$ factor in our final regret bound. Lemma~\ref{lemma::every_n_norm_vs_every_n_h_generalized} and Corollary~\ref{corollary::transform_A_n_to_A_n_h} can be applied to any episodic linear setting and can be used to shave off a $\sqrt{H}$ factor form other episodic stationary linear models beyond \textsf{MatrixRL}.

\section{Shared Structure Model}\label{section::shared_structure}

 In this work we are concerned with understanding conditions under which sequential learning can be made more sample-efficient when simultaneously training in the presence of several related tasks. In contrast with other works that are concerned with the problem of learning from a set of related source tasks before engaging with a new target task, we are interested in understanding what benefits can be derived simultaneously from joint representation training across multiple RL problems. We borrow the subspace sharedness model from~\cite{yang2020impact} and generalize it from the setting of linear bandits to the previously described \textsf{MatrixRL} setting. We begin by assuming the learner has access to $P$ tasks encoded by the matrices $\{ \Mstar^{(p)} \}^P_{p =1}$ with known reward functions $\{r^{(p)}\}_{p=1}^P$. We make the assumption the transitions factorize as $\Mstar^{(p)} = \mathbf{B}_\star \A_\star^{(p)}$ where $\mathbf{B}_\star \in \mathbb{R}^{d \times r}$ is a projection operator\footnote{Recall that a linear operator $\mathbf{P}$ is a projection if $\mathbf{P}^2 \mathbf{v} = \mathbf{P}\mathbf{v}$. }
 and $\A^{(p)}_\star\in \mathbb{R}^{r \times d'}$. We require all of the matrices $\Mstar^{(p)}$ to satisfy Assumption~\ref{assumption::boundedness}, so $\| \Mstar^{(p)} \|_F = \|\A_\star^{(p)}\|_F \leq \sqrt{d' }S $.

 We are interested in designing an algorithm that bounds the ``shared regret", defined as
\begin{equation*}
    R_P(NH) = \sum_{n=1}^N \sum_{p=1}^P V_1^{\pi_\star^{(p)}}(s_{n,1}^{(p)}) - V_1^{\pi_n^{(p)} }(s^{(p)}_{n,1}),
\end{equation*}
where $s_{n,1}^{(p)}$ is the starting state for task $p$ in epsisode $n$, $\pi_n^{(p)}$ is the policy used by task $p$ during epsiode $n$, and $\pi_\star^{(p)}$ is the optimal policy of task $p$. Notice that instead of optimizing the usual form of the single task regret, here we are interested in minimizing the aggregate regret incurred across all tasks. The learner's objective is to leverage the shared structure among the tasks to incur a regret $R_{P}(NH)$ smaller then what is obtained by learning each task in isolation--a shared regret equal to $P$ times the single-task \textsf{MatrixRL} regret upper bound.

In this framework, the transition dynamics across MDPs are coupled because the agent's feature embedding of state-action pairs lie in a common low-dimensional subspace. If the learner had knowledge of $\mathbf{B}_\star$, they would be able to use projected features of the form $\tilde{\phi}(s,a) = \mathbf{B}_\star \phi(s,a)$ in their exploration. This would allow the learner to incur regret scaling only in $r$, independently of $d$. Although it is impossible to completely eliminate the $d$-dependence without apriori knowledge of $\mathbf{B}_\star$, we show that in some cases it is possible to improve the $d$-dependence. Our main result can be summarized as follows,
\begin{theorem}[Informal]
There exists an algorithm for joint learning over a set of related tasks $\{ \Mstar^{(p)}= \mathbf{B}_\star \mathbf{A}_{\star}^{(p)}\}_{p \in [P]}$ that achieves a regret of
\begin{equation*}
  R_P(NH) = \widetilde{\mathcal{O}}\left( \left( Hd\sqrt{rP} +  HP\sqrt{rd}\right)\sqrt{NH} \right),
\end{equation*}
with high probability, where $\widetilde{\mathcal{O}}$ hides logarithmic factors %
\end{theorem}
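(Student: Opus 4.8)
The plan is to instantiate an optimistic, model-based algorithm, \textsf{Shared-MatrixRL}, that at every episode maintains a \emph{single} estimate of the whole family $\{\Mstar^{(p)}\}_{p\in[P]}$ constrained to respect the shared factorization. At the start of episode $n$ I would compute
\begin{align*}
\big(\widetilde{\mathbf{B}}_n,\{\widetilde{\A}^{(p)}_n\}_{p}\big)\in\argmin_{\mathbf{B}\in\R^{d\times r},\,\A^{(p)}\in\R^{r\times d'}}\ \sum_{p=1}^P\sum_{n'<n,\,h\le H}\big\|\psi^{(p)\top}_{n',h}\K_\psi^{-1}-\phi^{(p)\top}_{n',h}\mathbf{B}\A^{(p)}\big\|_2^2+\lambda\sum_{p=1}^P\|\mathbf{B}\A^{(p)}\|_F^2 ,
\end{align*}
set $\widetilde{\M}^{(p)}_n=\widetilde{\mathbf{B}}_n\widetilde{\A}^{(p)}_n$, and run, independently for each task $p$, the optimistic value iteration of \eqref{equation::optimistic_q_function} but with $\mathbf{U}_n^{1,2}$ replaced by a \emph{shared} confidence set $\mathcal{C}^{(p)}_n$ defined below; the policy $\pi^{(p)}_n$ is greedy with respect to the resulting $Q^{(p)}_{n,h}$. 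That this a priori nonconvex joint problem is solvable via a reduction to quadratic programming is the subject of Section~\ref{section::computational_efficient_algorithm} and is orthogonal to the regret bound.

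\textbf{Step 1 (confidence sets that exploit sharedness) --- the crux.} I would prove that, on an event $\mathcal{E}$ of probability at least $1-\delta$, for all $n$ and all $p\in[P]$ simultaneously, $\Mstar^{(p)}$ belongs to the set $\mathcal{C}^{(p)}_n$ of matrices admitting a common rank-$r$ projection $\mathbf{B}$ with $\M^{(p)}=\mathbf{B}\A^{(p)}$ and obeying two width constraints: a \emph{pooled} constraint governed by the aggregate design matrix $\bar{\Sigma}_n=\lambda\mathbf{I}+\sum_{p=1}^P\sum_{n'<n,\,h\le H}\phi^{(p)}_{n',h}\phi^{(p)\top}_{n',h}$ with radius $\widetilde{\mathcal{O}}(\sqrt{dr})$, and an \emph{individual} constraint governed by the per-task $\Sigma^{(p)}_n$ with radius $\widetilde{\mathcal{O}}(\sqrt{rd'})$. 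The individual constraint is essentially an $r$-dimensional instance of Lemma~\ref{lemma::concentration_M} after conditioning on the learned column space, which is why $\sqrt d$ is replaced by $\sqrt r$; the pooled constraint is the hard part, since the joint objective is bilinear in $(\mathbf{B},\A)$ and the self-normalized martingale argument behind Lemma~\ref{lemma::concentration_M} does not apply directly. I would handle it by covering the Grassmannian of rank-$r$ subspaces of $\R^d$ (metric entropy $\widetilde{\mathcal{O}}(dr)$), applying a vector-valued self-normalized bound for each fixed subspace, and taking a union bound --- this is precisely what converts the ambient $d$ into $\sqrt{dr}$ while pooling the $\approx NHP$ samples into $\bar{\Sigma}_n$. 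The projection structure of $\mathbf{B}_\star$ (i.e. $\mathbf{B}_\star^2=\mathbf{B}_\star$ and $\|\mathbf{B}_\star\phi\|\le\|\phi\|$) is used to keep all relevant norms bounded when errors propagate through the horizon.

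\textbf{Step 2 (optimism and regret decomposition).} Since $\Mstar^{(p)}\in\mathcal{C}^{(p)}_n$ on $\mathcal{E}$, a standard backward induction on $h$ gives $V^{(p)}_{n,1}(s)\ge V^{\pi^{(p)}_\star}_1(s)$ for every task, so $R_P(NH)\le\sum_{n,p}\big(V^{(p)}_{n,1}(s^{(p)}_{n,1})-V^{\pi^{(p)}_n}_1(s^{(p)}_{n,1})\big)$ on $\mathcal{E}$. Telescoping each summand along its trajectory --- the same one-step simulation argument used for \textsf{MatrixRL}, with Assumption~\ref{assumption::feature_regularity} (or the stronger Assumption~\ref{assumption::stronger_feature_regularity}) controlling $\|\Psi^\top V^{(p)}_{n,h+1}\|$ --- bounds the regret by $\sum_{n,p,h}$ of the confidence width at the visited pair $(s^{(p)}_{n,h},a^{(p)}_{n,h})$, clipped at $H$. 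Using the two-part structure of $\mathcal{C}^{(p)}_n$ together with H\"older's inequality, this width splits into a pooled contribution $\widetilde{\mathcal{O}}(H\sqrt{dr})\,\|\phi^{(p)}_{n,h}\|_{\bar{\Sigma}_n^{-1}}$ and an individual contribution $\widetilde{\mathcal{O}}(H\sqrt{rd'})\,\|\phi^{(p)}_{n,h}\|_{(\Sigma^{(p)}_n)^{-1}}$.

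\textbf{Step 3 (summing the widths and conclusion).} For the individual term, Cauchy--Schwarz plus the determinant Lemma~\ref{lemma:det_lemma} applied within each task give $\sum_{n,h}\|\phi^{(p)}_{n,h}\|_{(\Sigma^{(p)}_n)^{-1}}=\widetilde{\mathcal{O}}(\sqrt{dNH})$, after invoking Corollary~\ref{corollary::transform_A_n_to_A_n_h} to pass from $\Sigma^{(p)}_n$ to $\Sigma^{(p)}_{n,h}$ --- exactly the step that shaves the spurious $\sqrt{H}$; summing over the $P$ tasks yields $\widetilde{\mathcal{O}}(HP\sqrt{rd}\cdot\sqrt{NH})$. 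For the pooled term, $\sum_{n,p,h}\|\phi^{(p)}_{n,h}\|_{\bar{\Sigma}_n^{-1}}\le\sqrt{NHP}\,\big(\sum_{n,p,h}\|\phi^{(p)}_{n,h}\|^2_{\bar{\Sigma}_n^{-1}}\big)^{1/2}=\widetilde{\mathcal{O}}(\sqrt{dNHP})$ by the determinant lemma applied to the $\approx NHP$ vectors feeding $\bar{\Sigma}$ (again using Lemma~\ref{lemma::every_n_norm_vs_every_n_h_generalized} to absorb the within-episode gap), so this contribution is $\widetilde{\mathcal{O}}(H\sqrt{dr}\cdot\sqrt{dNHP})=\widetilde{\mathcal{O}}(Hd\sqrt{rP}\cdot\sqrt{NH})$. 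Adding the two pieces and folding $\delta$ into $2\delta$ gives the stated bound. I expect Step~1 to be the main obstacle: obtaining a confidence radius that genuinely scales with $r$ (not $d$) for the shared factor despite the nonconvexity of joint low-rank regression, and ensuring the residual inaccuracy of $\widetilde{\mathbf{B}}_n$ does not compound unfavorably across the $H$ steps of optimistic value iteration.
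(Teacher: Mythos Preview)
Your high-level skeleton (optimism, telescoping, elliptical-potential sums) matches the paper, but the heart of your argument --- the two-part confidence set with a \emph{pooled} design $\bar\Sigma_n$ and per-task optimism --- is not what the paper does, and as written it has a gap.

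\textbf{What the paper actually does.} The paper builds a \emph{single} joint confidence set
\[
\widetilde{\mathbf{U}}_n^F(\delta)=\Big\{\{\M^{(p)}\}:\ \sum_{p}\big\|(\Sigma_n^{(p)})^{1/2}(\M^{(p)}-\widetilde{\mathbf{B}}_n\widetilde{\A}_n^{(p)})\big\|_F^2\le\gamma_n(\delta)\Big\},
\]
with $\gamma_n(\delta)\approx dr+rd'P$ obtained by covering the full parameter space $(\mathbf{B},\A^{(1)},\dots,\A^{(P)})$ (Lemma~\ref{lemma:concentration_shared} and Lemma~\ref{lemma::concentration_M_shared}). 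There is \emph{no} pooled design matrix $\bar\Sigma_n$ anywhere; the constraint is a sum of per-task squared deviations. Because the constraint is only on the sum, optimism is proved only for the \emph{sum} $\sum_p V^{(p)}_{n,1}(s^{(p)}_{n,1})$ (Lemma~\ref{lemma::shared_optimism}), which is why the algorithm needs the initial states $\{s^{(p)}_{n,1}\}$ and solves the joint maximization~\eqref{equation::argmax_parametric_value_functions}. The Bellman error is then bounded by one Cauchy--Schwarz step (Lemma~\ref{lemma::upper_bound_bellman_error}), yielding a single product $\sqrt{\gamma_n(\delta)}\cdot\sqrt{\sum_p\|\phi^{(p)}_{n,h}\|^2_{(\Sigma^{(p)}_n)^{-1}}}$; the two terms $Hd\sqrt{rP}$ and $HP\sqrt{rd}$ in the final bound arise only at the very end from $\sqrt{dr+rP}\le\sqrt{dr}+\sqrt{rP}$, not from two separate confidence constraints.

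\textbf{Where your plan stalls.} Your pooled constraint ``$\widetilde{\mathcal O}(\sqrt{dr})$ in $\bar\Sigma_n$-norm'' is not well-posed: the $NHP$ regression targets fed into $\bar\Sigma_n$ come from \emph{different} linear maps $\A_\star^{(p)}$, so there is no single parameter whose self-normalized error the aggregate design controls. Covering the Grassmannian and applying a self-normalized bound ``for each fixed subspace'' still leaves you with $P$ separate $r$-dimensional regressions; the union bound over the cover then inflates each task's radius by $\sqrt{dr}$, but that gives a \emph{per-task} radius $\widetilde{\mathcal O}(\sqrt{dr})$ in $\Sigma^{(p)}_n$-norm --- not a pooled one in $\bar\Sigma_n$-norm --- and summing that over $P$ tasks reproduces the naive $Pd$ scaling you are trying to avoid. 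Relatedly, your per-task optimism claim $V^{(p)}_{n,1}\ge V^{\pi_\star^{(p)}}_1$ requires $\M_\star^{(p)}\in\mathcal C^{(p)}_n$ with a \emph{small} per-task radius; a covering argument of the kind you sketch only controls the \emph{sum} $\sum_p\|(\Sigma^{(p)}_n)^{1/2}(\M_\star^{(p)}-\widetilde{\M}^{(p)}_n)\|_F^2$, which is exactly why the paper resorts to joint optimism. If you want to salvage your route you would need an additional argument that converts the sum constraint into per-task constraints without losing the $r$-versus-$d$ gain --- the paper does not attempt this and instead absorbs the sum structure directly via Cauchy--Schwarz.
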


Recall that for an isolated task in order to recover an estimator $\widetilde{\M}$ of $\Mstar$ given $n-1$ trajectories of horizon $H$ we solve $d'$ independent ridge regression problems (one per column) as defined by Equation~\ref{eq::least_squares_objective}.

In the multi-task setting with shared structure, we instead consider the following quadratic objective that weaves together the estimation of the task-specific $\{ \mathbf{A}^{p} \}_{p=1}^{P}$ parameters with that of the shared $\mathbf{B}$ projection matrix.\footnote{Our results will also be true when the $\psi, \phi$ maps are task-dependent. In this case, the only change to our results would require making $\mathbf{K}_{\psi}$ task-dependent.}

\begin{align}\label{eq::multitask_least_squares_objective}
\argmin_{\substack{\mathbf{B} \in \mathcal{P}_{d,r}, \\  \| \mathbf{A}^{(1)} \|_F \leq \sqrt{d'} S, \cdots, \|\mathbf{A}^{(P)} \|_F \leq \sqrt{d'}S } } F(\mathbf{B},\mathbf{A}^{(1)}, \cdots, \mathbf{A}^{(P)}  ) \qquad \qquad\qquad\qquad\qquad\qquad\qquad\qquad\qquad\\
\qquad F(\mathbf{B},\mathbf{A}^{(1)}, \cdots, \mathbf{A}^{(P)}  ) = \sum_{p \in [P]} \lambda \| \mathbf{A}^{(p)} \|_F^2 +\sum_{n'<n, h\leq H} \left\| \left(\psi^{(p)}_{n', h}\right)^\top\K_{\psi}^{-1} - \left(\phi^{(p)}_{n', h}\right)^\top \left( \mathbf{B} \mathbf{A}^{(p)} \right) \right\|_2^2 \notag
\end{align}

Where $\mathcal{P}_{d,r}$ corresponds to the set of all $d\times r$ projection matrices with $r$ orthonormal columns and the search space for $\mathbf{A}^{(p)}$ is the Frobenius ball of radius $\sqrt{d'}S$ in the space of matrices $\mathbb{R}^{r \times d'}$.

Notice that by virtue of the orthogonality of $\mathbf{B}$'s columns (i.e. $\mathbf{B}^\top \mathbf{B} = \mathbb{I}_r$ ) the regularizer satisfies $\| \mathbf{B} \mathbf{A}^{(p)}\|_F^2 = \| \mathbf{A}^{(p)}\|_F^2$. We use the notation $\widetilde{\mathbf{B}}_n, \widetilde{\mathbf{A}}_n^{(1)}, \cdots, \widetilde{\mathbf{A}}_n^{(P)}$ to refer to the resulting estimators for the shared projection matrix and the low rank dynamics matrices for each of the tasks $p = 1, \cdots, P$ right before the $n$th batch of $P$ trajectories is collected.

We start by proving a series of data dependent bounds on the estimates $\widetilde{\mathbf{B}}, \widetilde{\mathbf{A}}^{(1)}_n, \cdots, \widetilde{\mathbf{A}}^{(P)}_n$ that will serve as the analogous shared-structure versions of Lemma~\ref{lemma::concentration_M}.

Now we show a bound for the data-dependent distance between $\widetilde{\mathbf{B}}_n, \widetilde{\mathbf{A}}_n^{(1)}, \cdots, \widetilde{\mathbf{A}}_n^{(P)}$ and the true parameters $\mathbf{B}_\star, \mathbf{A}^{(1)}_\star, \cdots, \mathbf{A}_\star^{(P)}$.

\begin{restatable}{lemma}{concentrationsharedlemma}\label{lemma:concentration_shared}
For any $\delta \in (0,1)$ the following bound holds,
\begin{small}
\begin{align*}
     \sum_{p \in [P]} \lambda \left\|  \widetilde{\mathbf{A}}_n^{(p)} \right\|_F^2 +  \frac{1}{2}  \left\| \left(\Sigma^{(p)}_n\right)^{1/2} \left(\mathbf{B}_\star \mathbf{A}_\star^{(p)} - \widetilde{\mathbf{B}}_n \widetilde{\mathbf{A}}_n^{(p)} \right) \right\|_F^2 
     \leq  \beta'_{nH}(\delta) +     \sum_{p \in [P]} \lambda \| \mathbf{A}_\star^{(p)}\|_F^2 
\end{align*}
\end{small}
with probability at least $1-\delta$ for all $n \in \mathbb{N}$ and where 
\begin{align*}
\pushleft{\beta'_{nH}(\delta) = 1 + L_\phi S + \frac{b^2}{2R^2} + } \\
\quad (12R^2 +  b) \Big( 2 \ln \ln \left(2 \left(nHP\right)\right) + 3 + \ln\frac{1}{\delta} + 
(dr + rd'P)\left( \ln(5S) + \ln{nHP} + \ln{2RL_\phi} \right) \Big)
\end{align*}
And $b =2R d' S L_{\psi} $.
\end{restatable}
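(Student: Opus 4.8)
The plan is to run the standard ``basic inequality'' for regularized least squares, carried out jointly over all $P$ tasks, and then to control the resulting noise cross-term by a time-uniform Bernstein-type inequality applied over an $\epsilon$-net of the factored parameter set $\mathcal{P}_{d,r}\times\{\mathbf{A}\in\mathbb{R}^{r\times d'}:\|\mathbf{A}\|_F\le\sqrt{d'}S\}^P$. First I would record optimality: $(\widetilde{\mathbf{B}}_n,\widetilde{\mathbf{A}}_n^{(1)},\dots,\widetilde{\mathbf{A}}_n^{(P)})$ minimizes $F$ over the feasible set of \eqref{eq::multitask_least_squares_objective}, and $(\mathbf{B}_\star,\mathbf{A}_\star^{(1)},\dots,\mathbf{A}_\star^{(P)})$ is feasible because $\mathbf{B}_\star\in\mathcal{P}_{d,r}$ and $\|\mathbf{A}_\star^{(p)}\|_F=\|\Mstar^{(p)}\|_F\le\sqrt{d'}S$ by Assumption~\ref{assumption::boundedness}; hence $F(\widetilde{\mathbf{B}}_n,\widetilde{\mathbf{A}}_n^{(\cdot)})\le F(\mathbf{B}_\star,\mathbf{A}_\star^{(\cdot)})$. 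Next I would introduce the martingale-difference residuals $\varepsilon^{(p)}_{n',h}:=\K_\psi^{-1}\psi^{(p)}_{n',h}-(\Mstar^{(p)})^\top\phi^{(p)}_{n',h}$; since $\E[\psi^{(p)}_{n',h}\mid\mathcal{F}_{n',h}]=\K_\psi(\Mstar^{(p)})^\top\phi^{(p)}_{n',h}$ for the natural filtration $\mathcal{F}_{n',h}$, these have conditional mean zero, and each is bounded by a constant multiple of $R$ almost surely.

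Writing $\widetilde{\Delta}_n^{(p)}:=\Mstar^{(p)}-\widetilde{\mathbf{B}}_n\widetilde{\mathbf{A}}_n^{(p)}$ (so $\|\widetilde{\Delta}_n^{(p)}\|_F\le 2\sqrt{d'}S$ by orthonormality of $\widetilde{\mathbf{B}}_n$'s columns), expanding each squared residual inside $F$ around $\Mstar^{(p)}=\mathbf{B}_\star\mathbf{A}_\star^{(p)}$ and cancelling the common term $\sum_{p,\,n'<n,\,h}\|\varepsilon^{(p)}_{n',h}\|_2^2$ turns the basic inequality into
\begin{equation*}
\sum_{p\in[P]}\lambda\|\widetilde{\mathbf{A}}_n^{(p)}\|_F^2+\sum_{p\in[P]}\sum_{n'<n,\,h\le H}\bigl\|(\phi^{(p)}_{n',h})^\top\widetilde{\Delta}_n^{(p)}\bigr\|_2^2\ \le\ \sum_{p\in[P]}\lambda\|\mathbf{A}_\star^{(p)}\|_F^2\ -\ 2T_n,
\end{equation*}
where $T_n:=\sum_{p\in[P]}\sum_{n'<n,\,h\le H}\langle\varepsilon^{(p)}_{n',h},(\phi^{(p)}_{n',h})^\top\widetilde{\Delta}_n^{(p)}\rangle$. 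Since $\sum_{n'<n,h}\|(\phi^{(p)}_{n',h})^\top\widetilde{\Delta}_n^{(p)}\|_2^2$ equals $\|(\Sigma_n^{(p)})^{1/2}\widetilde{\Delta}_n^{(p)}\|_F^2$ up to an additive deterministic term of order $\lambda d'S^2$ coming from the regularization in $\Sigma_n^{(p)}$ (bounded using $\|\widetilde{\Delta}_n^{(p)}\|_F\le 2\sqrt{d'}S$), the claim will follow once $-2T_n$ is bounded by $\tfrac12\sum_{p,n'<n,h}\|(\phi^{(p)}_{n',h})^\top\widetilde{\Delta}_n^{(p)}\|_2^2$ plus $\beta'_{nH}(\delta)$ minus that slack.

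To bound $-2T_n$ uniformly in $n$: for a fixed feasible $\theta=(\mathbf{B},\mathbf{A}^{(\cdot)})$ with $\Delta_\theta^{(p)}:=\Mstar^{(p)}-\mathbf{B}\mathbf{A}^{(p)}$, the partial sums $S_t(\theta):=\sum_p\sum_{(n',h)\le t}\langle\varepsilon^{(p)}_{n',h},(\phi^{(p)}_{n',h})^\top\Delta_\theta^{(p)}\rangle$ form a martingale in $t=(n',h)$ with increments bounded almost surely by $b$ and predictable quadratic variation at most $R^2\sum_p\sum_{(n',h)\le t}\|(\phi^{(p)}_{n',h})^\top\Delta_\theta^{(p)}\|_2^2$ (the per-task noises are conditionally independent across $p$, and the conditional covariance of each $\varepsilon^{(p)}_{n',h}$ has operator norm at most $R^2$). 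I would apply a maximal Freedman/Bernstein inequality to $S_t(\theta)$, make it simultaneous over all $n$ by peeling over dyadic blocks of episodes --- within the $k$-th block using an $\epsilon_k$-net $\mathcal{N}_k$ of the feasible set with $\epsilon_k$ of order $(2^kHP\cdot RL_\phi\sqrt{d'}S)^{-1}$ and union-bounding over $\bigcup_k\mathcal{N}_k$ with weights of order $\delta k^{-2}|\mathcal{N}_k|^{-1}$ --- and then pass from $\widetilde{\theta}_n$ to its nearest net point. The factorization $\mathbf{B}\mathbf{A}^{(p)}$ is precisely what keeps $\ln|\mathcal{N}_k|$ of order $(dr+rd'P)(\ln(5S)+\ln(nHP)+\ln(2RL_\phi))$ rather than $dd'$; the peeling weights produce the $2\ln\ln(2nHP)$ term; and the discretization error in $T_n$ and in the quadratic variation is $O(1)$ by the choice of $\epsilon_k$, which together with the $\lambda d'S^2$ slack accounts for the constants $1+L_\phi S+\tfrac{b^2}{2R^2}$. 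Since the quadratic variation at the nearest net point is (up to $O(1)$) at most $R^2 u$ with $u:=\sum_{p,n'<n,h}\|(\phi^{(p)}_{n',h})^\top\widetilde{\Delta}_n^{(p)}\|_2^2$, a Bernstein bound followed by the AM-GM step $2\sqrt{u v}\le\tfrac12 u+2v$ --- trading the variance term against half of $u$, which already sits on the left of the displayed basic inequality --- gives, with probability at least $1-\delta$ and simultaneously for all $n$, $-2T_n\le\tfrac12 u+(12R^2+b)\mathcal{L}_n+\bigl(1+L_\phi S+\tfrac{b^2}{2R^2}\bigr)$, where $\mathcal{L}_n:=2\ln\ln(2nHP)+3+\ln\tfrac1\delta+(dr+rd'P)(\ln(5S)+\ln(nHP)+\ln(2RL_\phi))$. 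Substituting this into the display, moving $\tfrac12 u$ to the left-hand side, and re-expressing $\tfrac12\sum_p\|(\phi^{(p)}_{n',h})^\top\widetilde{\Delta}_n^{(p)}\|_2^2$ in terms of $\tfrac12\|(\Sigma_n^{(p)})^{1/2}\widetilde{\Delta}_n^{(p)}\|_F^2$ yields exactly the claimed inequality with $\beta'_{nH}(\delta)$ collecting $(12R^2+b)\mathcal{L}_n$ and the slack.

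The step I expect to be the main obstacle is the uniform control of $T_n$, which is simultaneously (i) a supremum over the infinite, non-convex set $\mathcal{P}_{d,r}\times\{\|\mathbf{A}\|_F\le\sqrt{d'}S\}^P$, forcing the net to exploit the $\mathbf{B}\mathbf{A}^{(p)}$ factorization so the dimension count stays at $dr+rd'P$ rather than $dd'$; (ii) an anytime statement over all $n$, forcing the maximal/peeling device responsible for the $\ln\ln$ factor; and (iii) a statement whose natural variance proxy is exactly the quantity $\sum_{p,n'<n,h}\|(\phi^{(p)}_{n',h})^\top\widetilde{\Delta}_n^{(p)}\|_2^2$ that we are trying to bound, so the Bernstein step must be arranged so this term can be moved to the left-hand side without circularity.
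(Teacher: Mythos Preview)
Your proposal is correct and follows essentially the same route as the paper: basic inequality from optimality of the joint least-squares solution, expansion around $\mathbf{B}_\star\mathbf{A}_\star^{(p)}$ to isolate the martingale cross-term, a time-uniform Bernstein bound with predictable variance $R^2\sum_{p,n',h}\|(\phi^{(p)}_{n',h})^\top\Delta^{(p)}\|_2^2$, an $\epsilon$-net over $\mathcal{P}_{d,r}\times\{\|\mathbf{A}\|_F\le\sqrt{d'}S\}^P$ (giving the $dr+rd'P$ dimension count), and the AM-GM step trading the variance for half the quadratic on the left. The only cosmetic difference is that the paper invokes an off-the-shelf anytime empirical Bernstein inequality (Howard et~al.) for a fixed net point and then union bounds over a single net at scale $\epsilon'=(nHP)^{-1}$, whereas you propose to do the anytime part by hand via dyadic peeling with a block-dependent net; both devices produce the same $\ln\ln$ and $(dr+rd'P)\ln(nHP)$ contributions.
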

The proof of Lemma~\ref{lemma:concentration_shared} can be found in Appendix~\ref{section::proof_shared_representation_ridge}. In contrast with the results of Lemma~\ref{lemma::concentration_M}, the guarantees of Lemma~\ref{lemma:concentration_shared} apply to the sum of the errors across all $P$ tasks. As we'll see in the coming discussion this is the main source of difficulties in designing a reinforcement learning algorithm that successfully makes use of this result to construct optimistic value functions. We can use Lemma~\ref{lemma:concentration_shared} to obtain the following high probability confidence interval jointly around $\widetilde{\mathbf{B}}_n$ and  $\{\widetilde{\mathbf{A}}_n^{(p)}\}_{p=1}^P$, which is one of our main results:

\begin{lemma}\label{lemma::concentration_M_shared}
For any $\delta \in (0,1)$ with probability at least $1-\delta$ for all $n \in \mathbb{N}$ simultaneously,
\begin{align*}
   \Big\{ \mathbf{M}_\star^{(p)} =  \mathbf{B}_\star\mathbf{A}_\star^{(p)} \Big\}_{p=1}^P &\in \left\{  \{ \mathbf{B}\mathbf{A}^{(p)}\}_{p=1}^P \text{ s.t. } \sum_{p}  \left\| \left(\Sigma^{(p)}_n\right)^{1/2} \left(\mathbf{B} \mathbf{A}^{(p)} - \widetilde{\mathbf{B}}_n \widetilde{\mathbf{A}}_n^{(p)} \right) \right\|_F^2\leq \gamma_{n}(\delta)    \right \}\\
    &\subseteq \underbrace{\left\{ \{\mathbf{M}^{(p)}\}_{p=1}^P \text{ s.t. } \sum_{p}  \left\| \left(\Sigma^{(p)}_n\right)^{1/2} \left( \mathbf{M}^{(p)} - \widetilde{\mathbf{B}}_n \widetilde{\mathbf{A}}_n^{(p)} \right) \right\|_F^2\leq \gamma_{n}(\delta)    \right \}}_{ := \widetilde{\mathbf{U}}_n^{F}(\delta)}
\end{align*}
where $\gamma_n(\delta)= 2\beta'_{n}(\delta) + 2P \sqrt{d'} S \lambda$ and $\beta'_n$ is defined as in Lemma~\ref{lemma:concentration_shared}.
\end{lemma}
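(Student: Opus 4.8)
The plan is to derive the statement as a fairly direct consequence of Lemma~\ref{lemma:concentration_shared}, whose conclusion is an upper bound on the quantity
\[
\sum_{p \in [P]} \lambda \| \widetilde{\mathbf{A}}_n^{(p)} \|_F^2 + \tfrac{1}{2}\sum_{p\in[P]} \left\| \left(\Sigma^{(p)}_n\right)^{1/2}\left(\mathbf{B}_\star \mathbf{A}_\star^{(p)} - \widetilde{\mathbf{B}}_n \widetilde{\mathbf{A}}_n^{(p)}\right)\right\|_F^2 \le \beta'_{nH}(\delta) + \sum_{p\in[P]}\lambda\|\mathbf{A}_\star^{(p)}\|_F^2,
\]
which holds simultaneously for all $n$ with probability at least $1-\delta$. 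Work on this good event throughout. First I would discard the nonnegative term $\sum_p \lambda\|\widetilde{\mathbf{A}}_n^{(p)}\|_F^2$ on the left-hand side, which only weakens the inequality, to obtain
\[
\tfrac{1}{2}\sum_{p\in[P]} \left\| \left(\Sigma^{(p)}_n\right)^{1/2}\left(\mathbf{B}_\star \mathbf{A}_\star^{(p)} - \widetilde{\mathbf{B}}_n \widetilde{\mathbf{A}}_n^{(p)}\right)\right\|_F^2 \le \beta'_{nH}(\delta) + \sum_{p\in[P]}\lambda\|\mathbf{A}_\star^{(p)}\|_F^2.
\]
Then I would bound the residual regularization term using Assumption~\ref{assumption::boundedness}: since $\|\mathbf{A}_\star^{(p)}\|_F = \|\mathbf{M}_\star^{(p)}\|_F \le \sqrt{d'}S$, we have $\sum_p \lambda\|\mathbf{A}_\star^{(p)}\|_F^2 \le P\lambda d' S^2$. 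Multiplying through by $2$ gives
\[
\sum_{p\in[P]} \left\| \left(\Sigma^{(p)}_n\right)^{1/2}\left(\mathbf{B}_\star \mathbf{A}_\star^{(p)} - \widetilde{\mathbf{B}}_n \widetilde{\mathbf{A}}_n^{(p)}\right)\right\|_F^2 \le 2\beta'_{nH}(\delta) + 2P\lambda d' S^2.
\]
At this point I should reconcile the radius with the stated $\gamma_n(\delta) = 2\beta'_n(\delta) + 2P\sqrt{d'}S\lambda$: either the paper's $\gamma_n$ uses a slightly looser/rescaled form of the boundedness bound (e.g., absorbing $d'S^2$ into $\sqrt{d'}S$ under normalization conventions $S\le 1$, $d'\ge 1$, or a different power convention on $\beta'$), or $\beta'_n$ already incorporates one factor; I would simply set $\gamma_n(\delta)$ to whatever constant makes the final line $\sum_p \|(\Sigma_n^{(p)})^{1/2}(\mathbf{B}_\star\mathbf{A}_\star^{(p)} - \widetilde{\mathbf{B}}_n\widetilde{\mathbf{A}}_n^{(p)})\|_F^2 \le \gamma_n(\delta)$ valid, matching the expression claimed. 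This establishes that $\{\mathbf{B}_\star\mathbf{A}_\star^{(p)}\}_p$ lies in the first set displayed in the lemma.

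For the claimed inclusion of the first set into $\widetilde{\mathbf{U}}_n^F(\delta)$, this is purely set-theoretic and requires no probability: any tuple $\{\mathbf{B}\mathbf{A}^{(p)}\}_p$ belonging to the first set is, in particular, a tuple of matrices $\{\mathbf{M}^{(p)}\}_p := \{\mathbf{B}\mathbf{A}^{(p)}\}_p$ satisfying exactly the defining inequality of $\widetilde{\mathbf{U}}_n^F(\delta)$, since the constraint $\sum_p \|(\Sigma_n^{(p)})^{1/2}(\mathbf{M}^{(p)} - \widetilde{\mathbf{B}}_n\widetilde{\mathbf{A}}_n^{(p)})\|_F^2 \le \gamma_n(\delta)$ only relaxes the requirement by dropping the structural factorization $\mathbf{M}^{(p)} = \mathbf{B}\mathbf{A}^{(p)}$. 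So the inclusion is immediate by definition; it is worth noting explicitly because the reinforcement learning algorithm will optimize over the larger, less-constrained set $\widetilde{\mathbf{U}}_n^F(\delta)$, which is why both sets are recorded. Finally, I would remark that the "for all $n$ simultaneously" quantifier is inherited directly from Lemma~\ref{lemma:concentration_shared}, so no union bound over $n$ is needed here.

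The only genuine obstacle is the bookkeeping reconciliation between $2\beta'_{nH}(\delta) + 2P\lambda d'S^2$ as it naturally comes out of Lemma~\ref{lemma:concentration_shared} and the $\gamma_n(\delta) = 2\beta'_n(\delta) + 2P\sqrt{d'}S\lambda$ written in the statement — i.e., tracking precisely how the boundedness constant and any normalization of $S$ or $\lambda$ enter, and confirming the subscript convention ($\beta'_n$ versus $\beta'_{nH}$ — presumably $\beta'_n := \beta'_{nH}$ under the convention that the $n$th batch comprises $H$ steps per task). Everything else is mechanical: drop a nonnegative term, apply the norm bound from Assumption~\ref{assumption::boundedness}, multiply by $2$, and invoke a trivial set inclusion.
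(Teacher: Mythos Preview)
Your proposal is correct and follows essentially the same route as the paper: invoke Lemma~\ref{lemma:concentration_shared}, drop the nonnegative $\sum_p \lambda\|\widetilde{\mathbf{A}}_n^{(p)}\|_F^2$ term, bound $\sum_p \lambda\|\mathbf{A}_\star^{(p)}\|_F^2$ via Assumption~\ref{assumption::boundedness}, multiply by $2$, and observe the trivial set inclusion. You are also right to flag the bookkeeping mismatch---the paper itself writes $2P\sqrt{d'}S\lambda$ after invoking $\|\mathbf{A}_\star^{(p)}\|_F\le\sqrt{d'}S$, whereas squaring gives $d'S^2$; this appears to be a typo in the stated $\gamma_n(\delta)$ rather than a gap in your argument.
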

\begin{proof}
Lemma~\ref{lemma:concentration_shared} implies that with probability at least $1-\delta$ for all $n \in \mathbb{N}$,
\begin{small}
\begin{align*}
     \sum_{p \in [P]} \lambda \left\|  \widetilde{\mathbf{A}}_n^{(p)} \right\|_F^2 +  \frac{1}{2}  \left\| \left(\Sigma^{(p)}_n\right)^{1/2} \left(\mathbf{B}_\star \mathbf{A}_\star^{(p)} - \widetilde{\mathbf{B}}_n \widetilde{\mathbf{A}}_n^{(p)} \right) \right\|_F^2 
     \leq  \beta'_{nH}(\delta) +     \sum_{p \in [P]} \lambda \| \mathbf{A}_\star^{(p)}\|_F^2 
\end{align*}
\end{small}
Since $\| \mathbf{A}_\star^{(p)} \|_F \leq \sqrt{d'}S$, this implies that 
\begin{align*}
    \sum_{p \in [P] } \left\| \left(\Sigma^{(p)}_n\right)^{1/2} \left(\mathbf{B}_\star \mathbf{A}_\star^{(p)} - \widetilde{\mathbf{B}}_n \widetilde{\mathbf{A}}_n^{(p)} \right) \right\|_F^2    \leq 2\beta'_{nH}(\delta) + 2P\sqrt{d'}S \lambda.
\end{align*}
The result follows.
\end{proof}

From here on we use the name $\mathcal{E}'$ to define the event of Lemma~\ref{lemma::concentration_M_shared} where the sum of the square of the confidence intervals across all tasks is bounded by $\gamma_n(\delta)$. Lemma~\ref{lemma::concentration_M_shared} implies $\mathbb{P}(\mathcal{E}') \geq1-\delta $.

\begin{algorithm}[h]
  \begin{algorithmic}[1]
\State \textbf{Input: } Episodic MDP environments $\{\mathcal{M}^{(p)}\}_{p\in [P]} = ( \mathcal{S}, \mathcal{A}, \mathbb{P}^{(p)}, s_0, r, H)$, features $\phi^{(p)}: \mathcal{S} \times \mathcal{A} \rightarrow \mathbb{R}^d$ and $\psi^{(p)} : \mathcal{S} \rightarrow \mathbb{R}^{d'}$, probability parameter $\delta \in (0,1)$. 
\State \textbf{Initialize: } $\{\Sigma^{(p)}_1 \leftarrow \mathbb{I} \in \mathbb{R}^{d\times d}\}_{p=1}^P$, $\{\M_1^{(p)} \leftarrow \mathbf{0} \in \mathbb{R}^{d \times d'}\}$. 
\State \textbf{For} episode $n=1, \cdots, N$:
\State \qquad Solve Problem~\ref{eq::multitask_least_squares_objective} and compute $\widetilde{\mathbf{B}}_n, \widetilde{\mathbf{A}}_n^{(1)}, \cdots, \widetilde{\mathbf{A}}_n^{(P)}$.
\State \qquad Let $\{ Q^{(p)}_{n,h}\}_{p=1}^P$ be given by $Q^{(p)}_{n, h}(s,a) = Q_{n,h}^{(p)}(s,a, \{ \widebar{\mathbf{M}}_n^{(p)} \}_{p=1}^P)$
\State \qquad where, 
\begin{equation}\label{equation::argmax_parametric_value_functions}
    \{ \widebar{\mathbf{M}}_n^{(p)}\}_{p=1}^P = \argmax_{\{\mathbf{M}^{(p)}\}_{p=1}^P \in \widetilde{\mathbf{U}}_n^F(\delta) }  \sum_p V_{n,1}^{(p)}(s_{n,1}^{(p)}, \{ \mathbf{M}^{(p)}\}_{p=1}^P).
\end{equation}
\State \qquad Where $\{s_{n,1}^{(p)}\}_{p=1}$ is the set of first states seen at the start of their episodes by all tasks.  
\State \qquad \textbf{For} $p=1, \cdots , P$:
\State\qquad \qquad \textbf{For} stage $h=1, \cdots, H$ :
\State \qquad\qquad \quad     Let the current state be $s^{(p)}_{n,h}$.
\State \qquad\qquad \quad    Play action $a^{(p)}_{n,h} = \argmax_{a \in \mathcal{A}} Q^{(p)}_{n,h}(s^{(p)}_{n,h},a)$.
\State \qquad\qquad \quad    Record the next state $s^{(p)}_{n,h+1}$.
\State \qquad\qquad  Update $\Sigma^{(p)}_{n+1} \leftarrow \Sigma^{(p)}_n + \sum_{h\leq H} \left(\phi^{(p)}_{n,h}\right) \left(\phi^{(p)}_{n,h}\right)^\top $ for all $p \in[P]$.

\caption{\textsf{Shared-MatrixRL.}}
\label{alg:MatrixRL_shared}
\end{algorithmic}
\end{algorithm}

We now introduce the \textsf{Shared-MatrixRL} algorithm. In contrast with the simple \textsf{MatrixRL} in Algorithm~\ref{alg:MatrixRL_basic}, \textsf{Shared-MatrixRL} makes use of a shared confidence interval for the $\widetilde{\mathbf{B}}_n,\{\widetilde{\mathbf{A}}_n^{(p)}\}_{p=1}^P$ matrices. We define the following optimistic $Q-$functions for the task family, 
\begin{align*}
\forall  \{ \mathbf{M}^{(p)} \}_{p \in [P]} \text{ and }   \forall (s, a) \in \mathcal{S} \times \mathcal{A}:
 \quad Q^{(p)}_{n, H+1}(s,a, \{ \mathbf{M}^{(p)} \}_{p=1}^P  ) = 0 \quad \forall p \in [P] \text{and }\forall h \in [H]: \\
    Q^{(p)}_{n,h}(s,a, \{ \mathbf{M}^{(p)} \}_{p \in [P]} ) =  r^{(p)}(s_p, a_p) + \phi^{(p)}(s_p, a_p)^\top  \mathbf{M}^{(p)} \left(\boldsymbol{\Psi}^{(p)}\right)^\top V^{(p)}_{n,h+1}( \{ \mathbf{M}^{(p)} \}_{p=1}^{P} ) 
\end{align*}
 where $V^{(p)}_{n,h+1}( \{ \mathbf{M}^{(p)} \}_{p = 1}^{P} )$ is a vector of dimension $|\mathcal{S}|$ corresponding to the value functions of task $p$ under model $\mathbf{M}^{(p)}$. For all $s, a, n, h$,
\begin{align*}
 V^{(p)}_{n,h}(s, \{ \mathbf{M}^{(p)} \}_{p = 1}^P ) =   \Pi_{[0,H]}\left[  \max_a Q^{(p)}_{n,h}(s,a, \{ \mathbf{M}^{(p)} \}_{p = 1}^P )  \right].   
 \end{align*}

The definition of the parametric $Q$ functions $Q_{n, h}^{(p)}(s,a, \{ \mathbf{M}^{(p)} \}_{p=1}^P )$ and value functions $V^{(p)}_{n,h}(s, \{ \mathbf{M}^{(p)} \}_{p = 1}^P )$ is required to define the joint optimistic objective for the set of $P$ tasks of Equation~\ref{equation::argmax_parametric_value_functions}. We define the optimistic value functions as,
 \begin{align*}
    Q^{(p)}_{n, h}(s,a) &= Q_{n,h}^{(p)}(s,a, \{ \widebar{\mathbf{M}}_n^{(p)} \}_{p=1}^P), \qquad
    V_{n, h}^{(p)}(s) =  V_{n, h}^{(p)}(s, \{ \widebar{\mathbf{M}}_n^{(p)} \}_{p=1}^P)
\end{align*}
The optimization problem of Equation~\ref{equation::argmax_parametric_value_functions} requires to solve for $\{ \widebar{\mathbf{M}}_n^{(p)} \}_{p=1}^P$ optimizes the sum of values as `seen' from the initial states $\{s_{n,1}^{(p)}\}_{p \in [P]}$ of the $P$ tasks at the beginning of the $n$th episode. This form of optimism is required to ensure the constraint $\{\widebar{\mathbf{M}}^{(p)}\}_{p=1}^P \subset \widetilde{\mathbf{U}}_n^F(\delta)$ is satisfied.

\paragraph{Limitations.} \textsf{Shared-MatrixRL} works in a similar way to the single task Matrix RL algorithm; a policy is executed in each of the component tasks based on a series of optimistic $Q$ values. The data collected by the learner is then used to update the component models via Equation~\ref{eq::multitask_least_squares_objective}.  The chief difference in our approach to the multi task setting lies in the definition of the shared $Q$ functions. This is what allows us to make use of the shared confidence interval of Lemma~\ref{lemma::concentration_M_shared}. Unfortunately this means the computation of the `optimistic models' $\{ \widebar{\mathbf{M}}_n^{(p)} \}_{p=1}^P$ is intractable since it requires the computation and storage of the $Q$ values $ Q_{n,h}^{(p)}(s,a,  \{ \mathbf{M}_n^{(p)} \}_{p=1}^P)$ for all feasible values of $\{ \mathbf{M}_n^{(p)}\}_{p \in [P]}$ and then solve for $ \{ \widebar{\mathbf{M}}_n^{(p)} \}_{p =1}^P$. This situation is not as severe as it seems since the computation of the optimistic $Q$ functions in the original \textsf{MatrixRL} algorithm (and even in the OFUL algorithm for linear bandits~\cite{abbasi2011improved}) is also an intractable problem. Another potential drawback of Algorithm~\ref{alg:MatrixRL_shared} is its requirement to have knowledge of the initial states $\{s^{(p)}_{n,1}\}_{p=1}^P$. An astute reader may posit it to be possible to overcome this issue by using Thompson Sampling~\cite{agrawal2013thompson,abeille2017linear}. In this case we would sample a set of models $\{ \widebar{\mathbf{M}}_n^{(p)} \}_{p=1}^P$ from block gaussian distribution where each block is centered around each $\widetilde{\mathbf{M}}_n^{(p)}$. Sampling from this posterior does not require knowledge of $\{s_{n,1}^{(p)}\}_{p=1}^P$. Unfortunately, this strategy would cause the degradation of the regret upper bound to a level that is not competitive with the strategy of solving each task independently. We leave the removal of the assumption on $\{ s_{n,1}^{(p)}\}_{p=1}^P$ as future work.

In order to prove the \textsf{Shared-MatrixRL} satisfies a satisfactory sublinear regret guarantee we start by showing optimism holds for the shared representations parameterized by $\{ \widebar{\mathbf{M}}_n^{(p)}\}_{p = 1}^P$.

\begin{lemma}[Optimism]\label{lemma::shared_optimism} Whenever $\mathcal{E}'$ holds,
\begin{equation*}
    \sum_{p \in [P]} V_1^{\pi_\star^{(p)}}(s_{n,1}^{(p)}) \leq \sum_{p \in [P]} V_{n,1}^{(p)}(s_{n,1}^{(p)}).
\end{equation*}
 
\end{lemma}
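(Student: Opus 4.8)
The plan is to establish optimism by induction on the stage $h$, running backwards from $h = H+1$ to $h = 1$, but crucially carrying the induction over the \emph{sum across tasks} of the value functions rather than task by task. The key structural fact we exploit is that $\{\widebar{\mathbf{M}}_n^{(p)}\}_{p=1}^P$ is, by definition in Equation~\ref{equation::argmax_parametric_value_functions}, the maximizer of $\sum_p V_{n,1}^{(p)}(s_{n,1}^{(p)}, \{\mathbf{M}^{(p)}\}_{p=1}^P)$ over the feasible set $\widetilde{\mathbf{U}}_n^F(\delta)$, and that by Lemma~\ref{lemma::concentration_M_shared} (which holds on $\mathcal{E}'$) the true family $\{\mathbf{M}_\star^{(p)} = \mathbf{B}_\star \mathbf{A}_\star^{(p)}\}_{p=1}^P$ is itself feasible, i.e.\ lies in $\widetilde{\mathbf{U}}_n^F(\delta)$. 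Hence
\begin{equation*}
    \sum_{p \in [P]} V_{n,1}^{(p)}(s_{n,1}^{(p)}) = \sum_{p \in [P]} V_{n,1}^{(p)}(s_{n,1}^{(p)}, \{\widebar{\mathbf{M}}_n^{(p)}\}_{p=1}^P) \geq \sum_{p \in [P]} V_{n,1}^{(p)}(s_{n,1}^{(p)}, \{\mathbf{M}_\star^{(p)}\}_{p=1}^P),
\end{equation*}
so it suffices to show that for each task $p$ the \emph{model-$\mathbf{M}_\star^{(p)}$} value function dominates the true optimal value function: $V_{n,1}^{(p)}(s, \{\mathbf{M}_\star^{(p)}\}_{p=1}^P) \geq V_1^{\pi_\star^{(p)}}(s)$ for all $s$.

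This last claim decouples completely across tasks, and for a fixed $p$ it is the standard optimism argument for the exact-model Bellman recursion: I would show by downward induction on $h$ that $V_{n,h}^{(p)}(s, \{\mathbf{M}_\star^{(p)}\}) \geq V_h^{\pi_\star^{(p)}}(s)$ and correspondingly $Q_{n,h}^{(p)}(s,a,\{\mathbf{M}_\star^{(p)}\}) \geq Q_h^{\pi_\star^{(p)}}(s,a)$ for all $s,a$. The base case $h = H+1$ is immediate since both sides are $0$. For the inductive step, observe that $Q_{n,h}^{(p)}(s,a,\{\mathbf{M}_\star^{(p)}\}) = r^{(p)}(s,a) + \phi^{(p)}(s,a)^\top \mathbf{M}_\star^{(p)} (\boldsymbol{\Psi}^{(p)})^\top V_{n,h+1}^{(p)}(\{\mathbf{M}_\star^{(p)}\})$, and since $\Pr^{(p)}(\tilde s \mid s,a) = \phi^{(p)}(s,a)^\top \mathbf{M}_\star^{(p)} \psi^{(p)}(\tilde s)$, the term $\phi^{(p)}(s,a)^\top \mathbf{M}_\star^{(p)} (\boldsymbol{\Psi}^{(p)})^\top V$ is exactly $\mathbb{E}_{\tilde s \sim \Pr^{(p)}(\cdot\mid s,a)}[V(\tilde s)]$; the true transition operator is monotone, so the inductive hypothesis $V_{n,h+1}^{(p)}(\{\mathbf{M}_\star^{(p)}\}) \geq V_{h+1}^{\pi_\star^{(p)}}$ pushes through, and the clipping $\Pi_{[0,H]}$ preserves the inequality because $V_h^{\pi_\star^{(p)}} \in [0,H]$ already. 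Taking $\max_a$ and then $h = 1$, summing over $p$, and chaining with the maximization inequality above yields the statement.

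The main thing to be careful about — and the only genuinely non-routine point — is the interplay between the two inequalities: we cannot argue task-by-task optimism for the \emph{algorithm's} value functions $V_{n,1}^{(p)}(s_{n,1}^{(p)})$ directly, because $\{\widebar{\mathbf{M}}_n^{(p)}\}$ optimizes only the \emph{aggregate} objective and need not make any individual task optimistic; it is only the sum that is controlled. So the order of operations matters: first pass from $\{\widebar{\mathbf{M}}_n^{(p)}\}$ to $\{\mathbf{M}_\star^{(p)}\}$ using feasibility of the truth plus the $\argmax$ property at the level of the summed values at the \emph{initial states} $\{s_{n,1}^{(p)}\}$, and only then invoke the per-task exact-model optimism. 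This is precisely why the algorithm needs knowledge of the initial states, as the paper's ``Limitations'' paragraph notes. One also needs to confirm that the feasible set in Equation~\ref{equation::argmax_parametric_value_functions} is the same $\widetilde{\mathbf{U}}_n^F(\delta)$ for which Lemma~\ref{lemma::concentration_M_shared} guarantees membership of $\{\mathbf{M}_\star^{(p)}\}$ on $\mathcal{E}'$ — it is, by construction — which closes the argument.
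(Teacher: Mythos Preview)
Your proposal is correct and follows essentially the same approach as the paper: first use feasibility of $\{\mathbf{M}_\star^{(p)}\}$ in $\widetilde{\mathbf{U}}_n^F(\delta)$ on $\mathcal{E}'$ together with the $\argmax$ definition of $\{\widebar{\mathbf{M}}_n^{(p)}\}$ to pass to the true-model parametric values, and then observe that the parametric value function under $\{\mathbf{M}_\star^{(p)}\}$ coincides with (the paper asserts equality, you prove $\geq$ by induction) the optimal value function $V_1^{\pi_\star^{(p)}}$. The paper's proof is terser because it states $V_{n,1}^{(p)}(s,\{\mathbf{M}_\star^{(p)}\}) = V_1^{\pi_\star^{(p)}}(s)$ directly, but your backward induction establishes the same fact and is otherwise identical in structure.
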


\begin{proof}
Since $$V_{n,1}^{(p)}(s_{n,1}^{(p)}) = V_{n,1}\left(s_{n,1}^{(p)}, \{ \widebar{\mathbf{M}}_n^{(p)} \}_{p=1}^P\right) $$the definition of $ \{ \widebar{\mathbf{M}}_n^{(p)} \}_{p=1}^P$ implies that,
\begin{align*}
   \sum_{p \in [P]} V_{n,1}(s_{n,1}^{(p)}, \{ \widebar{\mathbf{M}}_n^{(p)} \}_{p=1}^P)  \geq
   \sum_{p \in [P]} V_{n,1}(s_{n,1}^{(p)},  \{\mathbf{B}_\star \mathbf{A}_\star^{(p)} \}_{p=1}^P) 
\end{align*}
Since $V_{n,1}(s_{n,1}^{(p)}, \mathbf{B}_\star, \{ \mathbf{A}_\star^{(p)} \}_{p=1}^P)  =  V_1^{\pi_\star^{(p)}}(s_{n,1}^{(p)})$, the result follows.
\end{proof}

Similarly we can use our confidence interval bounds to prove the following bound on the bellman error.

\begin{restatable}{lemma}{sumoptimismhelper}\label{lemma::upper_bound_bellman_error}
If Assumption~\ref{assumption::stronger_feature_regularity} holds and $\mathcal{E}'$ is true then for $h \in [H]$, 
\begin{align*}
\sum_{p \in [P]} Q^{(p)}_{n,h}(s^{(p)}_{n,h}, a^{(p)}_{n,h}) - 
    \left( r(s^{(p)}_{n,h}, a^{(p)}_{n,h}) + \mathbb{P}^{(p)}(\cdot | s^{(p)}_{n,h}, a^{(p)}_{n,h})^\top V^{(p)}_{n,h+1}\right) \\
    \leq  2C_\psi H  \sqrt{\gamma_{n}(\delta) \sum_{p \in [P]} \| \phi_{n,h}^{(p)}\|^2_{\left(\Sigma_n^{(p)}\right)^{-1}} } 
\end{align*}

\end{restatable}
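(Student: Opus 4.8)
The plan is to bound the per-step Bellman error by expanding the definition of the optimistic $Q$-functions and relating the difference to the error of the optimistic models $\{\widebar{\mathbf{M}}_n^{(p)}\}_{p=1}^P$ relative to the true models $\{\mathbf{B}_\star \mathbf{A}_\star^{(p)}\}_{p=1}^P$. First I would write out, for a fixed $p$ and a fixed $h \in [H]$, the quantity
\[
Q^{(p)}_{n,h}(s^{(p)}_{n,h}, a^{(p)}_{n,h}) - \left( r^{(p)}(s^{(p)}_{n,h}, a^{(p)}_{n,h}) + \mathbb{P}^{(p)}(\cdot \mid s^{(p)}_{n,h}, a^{(p)}_{n,h})^\top V^{(p)}_{n,h+1}\right),
\]
and observe that since the reward terms cancel and $\mathbb{P}^{(p)}(\cdot \mid s,a) = \left(\phi^{(p)}(s,a)\right)^\top \mathbf{B}_\star \mathbf{A}_\star^{(p)} \left(\boldsymbol{\Psi}^{(p)}\right)^\top$, this difference equals $\left(\phi^{(p)}_{n,h}\right)^\top \left(\widebar{\mathbf{M}}_n^{(p)} - \mathbf{B}_\star \mathbf{A}_\star^{(p)}\right) \left(\boldsymbol{\Psi}^{(p)}\right)^\top V^{(p)}_{n,h+1}$, where $\widebar{\mathbf{M}}_n^{(p)}$ is the optimistic model chosen in Equation~\ref{equation::argmax_parametric_value_functions}. (Here $V^{(p)}_{n,h+1}$ is evaluated at the chosen optimistic models, so this is exact, not an inequality.)

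Next I would apply Cauchy--Schwarz in the $\left(\Sigma_n^{(p)}\right)^{-1}$ / $\Sigma_n^{(p)}$ pairing to each summand: bounding the above by $\| \phi^{(p)}_{n,h}\|_{\left(\Sigma_n^{(p)}\right)^{-1}} \cdot \left\| \left(\Sigma_n^{(p)}\right)^{1/2}\left(\widebar{\mathbf{M}}_n^{(p)} - \mathbf{B}_\star \mathbf{A}_\star^{(p)}\right) \left(\boldsymbol{\Psi}^{(p)}\right)^\top V^{(p)}_{n,h+1} \right\|_2$. To handle the second factor I would use $V^{(p)}_{n,h+1} \in [0,H]^{|\mathcal{S}|}$ together with Assumption~\ref{assumption::stronger_feature_regularity}, which gives $\left\| \left(\boldsymbol{\Psi}^{(p)}\right)^\top V^{(p)}_{n,h+1} \right\|_2 \leq C_\psi \|V^{(p)}_{n,h+1}\|_\infty \leq C_\psi H$; more carefully, I would move the $\left(\boldsymbol{\Psi}^{(p)}\right)^\top V^{(p)}_{n,h+1}$ factor out via the operator-norm bound so that the matrix error appears as $\left\| \left(\Sigma_n^{(p)}\right)^{1/2}\left(\widebar{\mathbf{M}}_n^{(p)} - \mathbf{B}_\star \mathbf{A}_\star^{(p)}\right)\right\|_F$ times $C_\psi H$. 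This yields a per-task bound of $2 C_\psi H \,\| \phi^{(p)}_{n,h}\|_{\left(\Sigma_n^{(p)}\right)^{-1}} \left\| \left(\Sigma_n^{(p)}\right)^{1/2}\left(\widebar{\mathbf{M}}_n^{(p)} - \mathbf{B}_\star \mathbf{A}_\star^{(p)}\right)\right\|_F$ (the factor of $2$ absorbing slack; alternatively one keeps it tight at $1$ and adjusts the statement).

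Finally I would sum over $p \in [P]$ and apply Cauchy--Schwarz over the index $p$: $\sum_p \|\phi^{(p)}_{n,h}\|_{\left(\Sigma_n^{(p)}\right)^{-1}} \left\| \left(\Sigma_n^{(p)}\right)^{1/2}\left(\widebar{\mathbf{M}}_n^{(p)} - \mathbf{B}_\star \mathbf{A}_\star^{(p)}\right)\right\|_F \leq \sqrt{\sum_p \|\phi^{(p)}_{n,h}\|^2_{\left(\Sigma_n^{(p)}\right)^{-1}}} \cdot \sqrt{\sum_p \left\| \left(\Sigma_n^{(p)}\right)^{1/2}\left(\widebar{\mathbf{M}}_n^{(p)} - \mathbf{B}_\star \mathbf{A}_\star^{(p)}\right)\right\|_F^2}$. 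Since $\mathcal{E}'$ holds and $\{\widebar{\mathbf{M}}_n^{(p)}\}_{p=1}^P \in \widetilde{\mathbf{U}}_n^F(\delta)$ by construction while $\{\mathbf{B}_\star \mathbf{A}_\star^{(p)}\}_{p=1}^P$ also lies in $\widetilde{\mathbf{U}}_n^F(\delta)$ by Lemma~\ref{lemma::concentration_M_shared}, the triangle inequality in the $\sum_p \|\cdot\|_F^2$ (pseudo-)metric centered at $\widetilde{\mathbf{B}}_n \widetilde{\mathbf{A}}_n^{(p)}$ bounds the second square-root by $2\sqrt{\gamma_n(\delta)}$ (or $\sqrt{\gamma_n(\delta)}$ with a sharper triangle-inequality accounting), which combines with the $2C_\psi H$ prefactor to give the claimed inequality. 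The main obstacle I anticipate is the bookkeeping in the second step: making sure the $\left(\boldsymbol{\Psi}^{(p)}\right)^\top V^{(p)}_{n,h+1}$ term is correctly peeled off so that what remains is exactly the Frobenius-norm quantity controlled by $\widetilde{\mathbf{U}}_n^F(\delta)$, and that the use of Assumption~\ref{assumption::stronger_feature_regularity} (the $\|\cdot\|_2 \leq C_\psi \|\cdot\|_\infty$ version rather than the $\|\cdot\|_\infty$ one) is what lets the $C_\psi H$ bound go through without an extra dimension factor; the two applications of Cauchy--Schwarz and the triangle inequality over tasks are then routine.
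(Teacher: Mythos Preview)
Your proposal is correct and follows essentially the same approach as the paper: expand the Bellman error as $\phi^\top(\widebar{\mathbf{M}}_n^{(p)} - \mathbf{M}_\star^{(p)})\boldsymbol{\Psi}^\top V$, use Assumption~\ref{assumption::stronger_feature_regularity} to bound $\|\boldsymbol{\Psi}^\top V\|_2 \leq C_\psi H$, pass to $\|\phi\|_{(\Sigma_n^{(p)})^{-1}}\,\|(\Sigma_n^{(p)})^{1/2}(\cdot)\|_F$, apply Cauchy--Schwarz over $p$, and use the triangle inequality through the center $\widetilde{\mathbf{B}}_n\widetilde{\mathbf{A}}_n^{(p)}$ to invoke $\widetilde{\mathbf{U}}_n^F(\delta)$ twice. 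The only wrinkle is your bookkeeping of the factor $2$: it arises \emph{solely} from that last triangle-inequality step (the per-task bound before summing is $C_\psi H$, not $2C_\psi H$), so your ``keep it tight at $1$'' option is the correct accounting and yields exactly the stated constant.
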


The proof of Lemma~\ref{lemma::upper_bound_bellman_error} can be found in Appendix~\ref{section::bound_bellman_error}.  Having established that optimism holds, we can use a similar set of techniques as in the proof of Theorem~\ref{lemma::regret_guarantee_simple_matrixrl} to show a regret guarantee. First we derive Corollary~\ref{corollary::n_inverse_to_n_h_inverse_multitask}, an equivalent version to Corollary~\ref{corollary::transform_A_n_to_A_n_h}. This allows us to maintain the $\sqrt{H}$ factor improvement in the multitask setting. This result is a consequence of Lemma~\ref{lemma::every_n_norm_vs_every_n_h_generalized}. 

\begin{corollary}\label{corollary::n_inverse_to_n_h_inverse_multitask}
The following inequalities hold,
\begin{equation}\label{equation::upper_bound_sigma_n_shared}
     \sum_{n=1}^N \sum_{h=1}^H \sqrt{\sum_{p \in [P]} \| \phi^{(p)}_{n,h}\|^2_{\left(\Sigma^{(p)}_n\right)^{-1}} }\leq \sum_{n=1}^N \sum_{h=1}^H  2\sqrt{\sum_{p \in [P]} \| \phi^{(p)} _{n,h}\|^2_{\left(\Sigma^{(p)}_{n,h}\right)^{-1}}} +\frac{2L_\phi HdP}{\sqrt{\lambda}} \log\left( 1+\frac{  N H L_\phi^2 }{\lambda d}\right).
\end{equation}
\end{corollary}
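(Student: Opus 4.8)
The plan is to reduce Corollary~\ref{corollary::n_inverse_to_n_h_inverse_multitask} to a single application of Lemma~\ref{lemma::every_n_norm_vs_every_n_h_generalized} in a suitably enlarged vector space, exactly mirroring how Corollary~\ref{corollary::transform_A_n_to_A_n_h} follows from that lemma in the single-task case. The key observation is that the quantity $\sqrt{\sum_{p\in[P]}\|\phi^{(p)}_{n,h}\|^2_{(\Sigma^{(p)}_n)^{-1}}}$ is itself the $\mathbf{D}_n^{-1}$-norm of a single stacked vector. Concretely, I would define $\mathbf{x}_{n,h} = (\phi^{(1)}_{n,h}, \dots, \phi^{(P)}_{n,h}) \in \mathbb{R}^{dP}$ (the block-concatenation of the per-task features) and the block-diagonal matrix $\mathbf{D}_{n,h} = \mathrm{diag}(\Sigma^{(1)}_{n,h},\dots,\Sigma^{(P)}_{n,h}) \in \mathbb{R}^{dP\times dP}$, with $\mathbf{D}_n = \mathrm{diag}(\Sigma^{(1)}_n,\dots,\Sigma^{(P)}_n)$. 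Then by block structure $\|\mathbf{x}_{n,h}\|^2_{\mathbf{D}_n^{-1}} = \sum_{p\in[P]}\|\phi^{(p)}_{n,h}\|^2_{(\Sigma^{(p)}_n)^{-1}}$ and likewise with $\mathbf{D}_{n,h}$, so the left- and right-hand sums in the corollary are precisely $\sum_{n,h}\|\mathbf{x}_{n,h}\|_{\mathbf{D}_n^{-1}}$ and $\sum_{n,h} 2\|\mathbf{x}_{n,h}\|_{\mathbf{D}_{n,h}^{-1}}$ respectively.

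Next I would verify the hypotheses of Lemma~\ref{lemma::every_n_norm_vs_every_n_h_generalized} for this choice. The norm bound: $\|\mathbf{x}_{n,h}\|_2^2 = \sum_p \|\phi^{(p)}_{n,h}\|_2^2 \leq P L_\phi^2$, so $\|\mathbf{x}_{n,h}\|_2 \leq \sqrt{P}\,L_\phi =: L$. Monotonicity: since each $\Sigma^{(p)}_{n,h}$ is obtained from $\Sigma^{(p)}_n$ by adding rank-one PSD terms (and $\Sigma^{(p)}_{n+1}$ from $\Sigma^{(p)}_{n,H}$ likewise), block-diagonality gives $\lambda\mathbf{I}_{dP} \preceq \mathbf{D}_{n,h} \preceq \mathbf{D}_{n',h'}$ whenever $(n,h)\leq(n',h')$ lexicographically, and $\mathbf{D}_1 = \lambda\mathbf{I}_{dP}$. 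Plugging into \eqref{equation::upper_bound_sigma_n_generalized} with $\tilde d = dP$ yields
\begin{equation*}
 \sum_{n=1}^N\sum_{h=1}^H \|\mathbf{x}_{n,h}\|_{\mathbf{D}_n^{-1}} \leq \sum_{n=1}^N\sum_{h=1}^H 2\|\mathbf{x}_{n,h}\|_{\mathbf{D}_{n,h}^{-1}} + \frac{2H\sqrt{P}L_\phi}{\sqrt{\lambda}}\log\left(\frac{\mathrm{det}(\mathbf{D}_{N+1})}{\mathrm{det}(\lambda\mathbf{I}_{dP})}\right).
\end{equation*}

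Finally I would bound the log-determinant ratio. By block-diagonality, $\mathrm{det}(\mathbf{D}_{N+1})/\mathrm{det}(\lambda\mathbf{I}_{dP}) = \prod_{p\in[P]} \mathrm{det}(\Sigma^{(p)}_{N+1})/\mathrm{det}(\lambda\mathbf{I}_d)$, and Lemma~\ref{lemma:det_lemma} (Equation~\eqref{equation::bounding_log_det_ratio}) applied to each task's feature sequence of length $NH$ with bound $L_\phi$ gives $\log\bigl(\mathrm{det}(\Sigma^{(p)}_{N+1})/\mathrm{det}(\lambda\mathbf{I}_d)\bigr) \leq d\log(1 + NHL_\phi^2/(\lambda d))$; summing over $p$ gives a factor $P$. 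Since $\sqrt{P}\cdot P \cdot d \leq dP\cdot\sqrt{P}$, to get the stated bound with the cleaner prefactor $\frac{2L_\phi HdP}{\sqrt\lambda}$ I would simply bound $\sqrt{P}\,L_\phi$ by $\sqrt{P}\,L_\phi$ in the lemma step but recombine: $\frac{2H\sqrt{P}L_\phi}{\sqrt\lambda}\cdot Pd\log(\cdots)$, and note that substituting instead the (valid, since features are bounded blockwise) coarser estimate $\|\mathbf{x}_{n,h}\|_2\le \sqrt P L_\phi$ together with $\sqrt P \le P$ gives the claimed $\frac{2L_\phi HdP}{\sqrt\lambda}\log(1+NHL_\phi^2/(\lambda d))$. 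I do not expect any real obstacle here; the only point requiring a little care is the monotonicity check for the block-diagonal $\mathbf{D}_{n,h}$ and confirming the lexicographic ordering convention matches that of Lemma~\ref{lemma::every_n_norm_vs_every_n_h_generalized}, and making the constant in the additive term come out as stated (which at worst costs a harmless $\sqrt P$-versus-$P$ slack).
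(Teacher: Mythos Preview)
Your approach is essentially identical to the paper's: stack the per-task features into $\mathbf{x}_{n,h}\in\mathbb{R}^{dP}$, form the block-diagonal $\mathbf{D}_{n,h}=\mathrm{diag}(\Sigma^{(1)}_{n,h},\dots,\Sigma^{(P)}_{n,h})$, apply Lemma~\ref{lemma::every_n_norm_vs_every_n_h_generalized}, and then split the log-determinant as $\sum_p\log\bigl(\det(\Sigma^{(p)}_{N+1})/\det(\lambda\mathbf{I}_d)\bigr)$ before invoking Lemma~\ref{lemma:det_lemma} taskwise. One small remark on the constant: you correctly observe that $\|\mathbf{x}_{n,h}\|_2\le\sqrt{P}\,L_\phi$, which yields an additive term $\tfrac{2H\sqrt{P}L_\phi}{\sqrt{\lambda}}\cdot Pd\log(\cdots)$; the paper's proof simply writes $L_\phi$ in place of $\sqrt{P}L_\phi$ at that step and thereby arrives at the stated $\tfrac{2L_\phi HdP}{\sqrt{\lambda}}$ prefactor, so your attempt to ``recover'' the displayed constant via $\sqrt{P}\le P$ goes in the wrong direction (it would give $P^{3/2}$, not $P$). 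This is a cosmetic discrepancy in the constant rather than a flaw in your argument.
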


\begin{proof}
Define $NHP$ variables $\mathbf{x}_{n,h,p} \in \mathbb{R}^{d P}$ ordered lexicographically and satisfying $\mathbf{x}_{n,h} = (\phi_{n,h}^{(1)}, \cdots, \phi_{n,h}^{(P)})$ where $\phi_{n,h}^{(p)}$ is located in the $p-$th $d$ dimensional slot of $\mathbf{x}_{n,h}$ for all $p \in [P]$. In this case, $\mathbf{D}_{n,h}$ is a block diagonal matrix (with $d \times d$ diagonal blocks equal to $\Sigma_{n,h}$) such that $\| \mathbf{x}_{n,h} \|_{\mathbf{D}^{-1}_{n,h}} = \sqrt{\sum_{p \in [P]} \| \phi_{n,h}^{(p)} \|^2_{\left(\Sigma_{n,h}^{(p)}\right)^{-1}}}$. By definition $\| \mathbf{x}_{n,h} \|_{\mathbf{D}^{-1}_n}  = \sqrt{ \sum_{p \in [P]} \| \phi_{n,h}^{(p)} \|^2_{\left(\Sigma_n^{(p)}\right)^{-1}}}$. As a consequence of Lemma~\ref{lemma::every_n_norm_vs_every_n_h_generalized},
\begin{equation*}
     \sum_{n=1}^N \sum_{h=1}^H \sqrt{\sum_{p \in [P]} \| \phi^{(p)}_{n,h}\|^2_{\left(\Sigma^{(p)}_n\right)^{-1}} } \leq \sum_{n=1}^N \sum_{h=1}^H 2\sqrt{\sum_{p \in [P]} \| \phi^{(p)} _{n,h}\|^2_{\left(\Sigma^{(p)}_{n,h}\right)^{-1}}} +\frac{2L_\phi H}{\sqrt{\lambda}} \log\left(\frac{\mathrm{det}( \mathbf{D}_{N+1} ) }{\mathrm{det}(\lambda \mathbf{I}_{dP})}\right).
\end{equation*}

Where we have used the notation $\mathbf{I}_{s}$ to denote the $s\times s$ dimensional identity matrix. By definition of $\mathbf{D}_{N+1}$ we see that $\mathrm{det}( \mathbf{D}_{N+1}) = \prod_{p =1}^P \mathrm{det}( \Sigma^{(p)}_{N+1} )$ and therefore, 

\begin{equation*}
    \log\left(\frac{\mathrm{det}( \mathbf{D}_{N+1} ) }{\mathrm{det}(\lambda \mathbf{I}_{dP})}\right) = \sum_{p=1}^P \log\left(\frac{\mathrm{det}( \Sigma^{(p)}_{N+1} ) }{\mathrm{det}(\lambda \mathbf{I}_{d})}\right) \leq Pd \log\left( 1 + \frac{NHL_\phi^2}{\lambda d}\right).
\end{equation*}
Where the last inequality follows from Equation~\ref{equation::bounding_log_det_ratio} in Lemma~\ref{lemma:det_lemma}. The result follows.
\end{proof}

Similar to Corollary~\ref{corollary::transform_A_n_to_A_n_h}, the result of Corollary~\ref{corollary::n_inverse_to_n_h_inverse_multitask} allows us to transform inverse norms defined by the matrices $\left(\Sigma_n^{(p)}\right)^{-1}$, into inverse norms defined by the matrices   $\left(\Sigma_{n,h}^{(p)}\right)^{-1}$, at a constant multiplicative cost plus a logarithmic term with a $dHP$ multiplier.

\begin{restatable}{theorem}{maintheoremshared}\label{theorem::regret_guarantee_shared}
The regret of \textsf{Shared-MatrixRL} satisfies, %
\begin{align*}
    R_P(NH) &\leq  H \sqrt{NHP\log\left( \frac{6\log NH}{\delta}\right)  } + 4C_\psi H^2 dP \left(1+ \frac{L_\phi^2}{\sqrt{\lambda}}\right) \log\left( 1 + \frac{NHL_\phi^2}{\lambda d}\right) \sqrt{\gamma_N(\delta)}  + \\
    &\quad  2C_\psi H \sqrt{\gamma_N(\delta)NH Pd \left(1+ \frac{L_\phi^2}{\sqrt{\lambda}}\right) \log\left( 1 + \frac{NHL_\phi^2}{\lambda d}\right)  }.
\end{align*}

 With probability at least $1-2\delta$. 
\end{restatable}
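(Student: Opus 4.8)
The plan is to run the usual optimistic regret decomposition, adapted in three places to the multitask structure: optimism is invoked on the \emph{aggregate} value $\sum_{p}V_{n,1}^{(p)}$ through Lemma~\ref{lemma::shared_optimism}; the per-step error is controlled by the \emph{joint} Bellman-error bound of Lemma~\ref{lemma::upper_bound_bellman_error}; and the extra $\sqrt{H}$ is shaved by converting $\Sigma_n^{(p)}$-norms into $\Sigma_{n,h}^{(p)}$-norms via Corollary~\ref{corollary::n_inverse_to_n_h_inverse_multitask}. Throughout I condition on the event $\mathcal{E}'$ of Lemma~\ref{lemma::concentration_M_shared}, which holds with probability at least $1-\delta$, so that the feasible set $\widetilde{\mathbf{U}}_n^F(\delta)$ contains the true models and $\{\widebar{\mathbf{M}}_n^{(p)}\}_p$ is a legitimate comparator.

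First I would use Lemma~\ref{lemma::shared_optimism} to replace each optimal value by its optimistic surrogate,
\begin{equation*}
R_P(NH)\;\le\;\sum_{n=1}^{N}\sum_{p=1}^{P}\Big(V_{n,1}^{(p)}(s_{n,1}^{(p)})-V_{1}^{\pi_n^{(p)}}(s_{n,1}^{(p)})\Big),
\end{equation*}
and then, for each fixed $(n,p)$, unroll the value difference along the executed trajectory. Since $\pi_n^{(p)}$ is greedy for $Q_{n,\cdot}^{(p)}$ and $Q_{n,\cdot}^{(p)}$ satisfies the Bellman recursion of the fixed model $\widebar{\mathbf{M}}_n^{(p)}$, and after absorbing the $\Pi_{[0,H]}$ clipping with the standard observation that clipping can only shrink the difference (and episodes with a negative contribution may simply be replaced by zero), one gets $V_{n,1}^{(p)}(s_{n,1}^{(p)})-V_1^{\pi_n^{(p)}}(s_{n,1}^{(p)})\le\sum_{h=1}^H\zeta_{n,h}^{(p)}+\sum_{h=1}^H\xi_{n,h}^{(p)}$, where $\zeta_{n,h}^{(p)}$ is the one-step Bellman error $Q_{n,h}^{(p)}(s_{n,h}^{(p)},a_{n,h}^{(p)})-r(s_{n,h}^{(p)},a_{n,h}^{(p)})-\mathbb{P}^{(p)}(\cdot\mid s_{n,h}^{(p)},a_{n,h}^{(p)})^{\top}V_{n,h+1}^{(p)}$ and $\xi_{n,h}^{(p)}$ is a martingale difference bounded by $H$. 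Arranging the $NPH$ increments $\xi_{n,h}^{(p)}$ into one filtration and applying the same time-uniform Azuma/Freedman inequality used in Theorem~\ref{lemma::regret_guarantee_simple_matrixrl} bounds $\sum_{n,p,h}\xi_{n,h}^{(p)}\le H\sqrt{NHP\log(6\log NH/\delta)}$ with probability $1-\delta$, which is the first term of the claim.

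For the Bellman-error sum I would swap the order of summation, $\sum_{n,p,h}\zeta_{n,h}^{(p)}=\sum_{n,h}\big(\sum_{p}\zeta_{n,h}^{(p)}\big)$, apply Lemma~\ref{lemma::upper_bound_bellman_error} to the inner sum, and pull $\sqrt{\gamma_n(\delta)}\le\sqrt{\gamma_N(\delta)}$ out, yielding $\sum_{n,p,h}\zeta_{n,h}^{(p)}\le 2C_\psi H\sqrt{\gamma_N(\delta)}\sum_{n,h}\sqrt{\sum_{p}\|\phi_{n,h}^{(p)}\|^2_{(\Sigma_n^{(p)})^{-1}}}$. Corollary~\ref{corollary::n_inverse_to_n_h_inverse_multitask} then replaces the $\Sigma_n^{(p)}$-norms by $\Sigma_{n,h}^{(p)}$-norms at the cost of an additive $\tfrac{2L_\phi HdP}{\sqrt{\lambda}}\log(1+\tfrac{NHL_\phi^2}{\lambda d})$; the converted term is bounded by Cauchy--Schwarz over the $NH$ timesteps and then, per task, by the determinant lemma (Lemma~\ref{lemma:det_lemma}, second inequality with $b=L_\phi^2/\lambda$, using $\|\phi_{n,h}^{(p)}\|^2_{(\Sigma_{n,h}^{(p)})^{-1}}\le L_\phi^2/\lambda$) summed over $p$, giving $\sum_{n,h}\sqrt{\sum_p\|\phi_{n,h}^{(p)}\|^2_{(\Sigma_{n,h}^{(p)})^{-1}}}\le\sqrt{NH}\,\sqrt{Pd(1+L_\phi^2/\lambda)\log(1+\tfrac{NHL_\phi^2}{\lambda d})}$. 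Substituting, distributing the factor $2C_\psi H\sqrt{\gamma_N(\delta)}$ across the two pieces, and crudely bounding $L_\phi/\sqrt{\lambda}\le 1+L_\phi^2/\sqrt{\lambda}$ produces exactly the remaining two terms; a union bound over the failure of $\mathcal{E}'$ and of the martingale inequality gives the stated $1-2\delta$.

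The step I expect to need the most care is the value-difference decomposition: one must check that the joint choice of the models $\{\widebar{\mathbf{M}}_n^{(p)}\}_p$ (which maximize the \emph{sum} of task values rather than each task's value) does not obstruct the per-task telescoping. It does not, because the unrolling only uses that $\pi_n^{(p)}$ is greedy for $Q_{n,\cdot}^{(p)}$ and that $Q_{n,\cdot}^{(p)}$ solves the Bellman equations of the single fixed model $\widebar{\mathbf{M}}_n^{(p)}$, both true per task by construction; the joint selection enters only through the aggregate optimism of Lemma~\ref{lemma::shared_optimism}. The only genuine subtlety is the sign bookkeeping around the clipping operator for tasks/episodes in which $V_{n,1}^{(p)}<V_1^{\pi_n^{(p)}}$, which is handled by discarding such nonpositive contributions before unrolling. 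Everything else is routine manipulation with the determinant lemma.
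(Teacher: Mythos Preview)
Your proposal is correct and follows essentially the same route as the paper's proof: aggregate optimism via Lemma~\ref{lemma::shared_optimism}, recursive value-difference decomposition with the joint Bellman-error bound of Lemma~\ref{lemma::upper_bound_bellman_error}, a time-uniform martingale bound on the $\xi$-terms, and then Corollary~\ref{corollary::n_inverse_to_n_h_inverse_multitask} followed by Cauchy--Schwarz and the determinant lemma. Your explicit remarks on clipping and on why the joint model selection does not obstruct per-task telescoping are more careful than the paper's presentation, but the argument is the same.
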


The proof can be found in Appendix~\ref{appendix:proofs}. Since $\gamma_N(\delta)\approx dr+rP$ (up to logarithmic factors and ignoring polynomial dependencies on $d'$) Theorem~\ref{theorem::regret_guarantee_shared} implies,
\begin{corollary}
The regret of Algorithm~\ref{alg:MatrixRL_shared} satisfies,
\begin{align*}
    R_P(NH) &\leq \widetilde{\mathcal{O}}\left( H\sqrt{NHP} + H\sqrt{dr + rP} \sqrt{NHPd} \right) = \widetilde{\mathcal{O}}\left( \left(  Hd\sqrt{rP} +  HP\sqrt{rd}\right)\sqrt{NH} \right).
\end{align*}
 With probability at least $1-2\delta$. 
\end{corollary}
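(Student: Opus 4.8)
The plan is to run the standard optimism-based regret decomposition, but carried out simultaneously over all $P$ tasks, feeding in the shared Bellman-error bound of Lemma~\ref{lemma::upper_bound_bellman_error} and the ``lazy'' determinant estimate of Corollary~\ref{corollary::n_inverse_to_n_h_inverse_multitask}. All of the argument takes place on the intersection of the good event $\mathcal{E}'$ from Lemma~\ref{lemma::concentration_M_shared} and a martingale event, so a union bound accounts for the overall failure probability $2\delta$.

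\emph{Step 1: from regret to Bellman errors plus a martingale.} On $\mathcal{E}'$, Lemma~\ref{lemma::shared_optimism} gives $\sum_{p}V_1^{\pi_\star^{(p)}}(s_{n,1}^{(p)})\le\sum_p V_{n,1}^{(p)}(s_{n,1}^{(p)})$ for every $n$, hence $R_P(NH)\le\sum_{n=1}^N\sum_{p\in[P]}\big(V_{n,1}^{(p)}(s_{n,1}^{(p)})-V_1^{\pi_n^{(p)}}(s_{n,1}^{(p)})\big)$. Since $\pi_n^{(p)}$ is greedy with respect to $Q_{n,h}^{(p)}$ and the clip $\Pi_{[0,H]}$ only moves $V_{n,h}^{(p)}$ toward the true value (which lies in $[0,H]$), the standard telescoping value-difference identity gives, for each $n,p$, $V_{n,1}^{(p)}(s_{n,1}^{(p)})-V_1^{\pi_n^{(p)}}(s_{n,1}^{(p)})=\sum_{h=1}^H\delta_{n,h}^{(p)}+\sum_{h=1}^H\zeta_{n,h}^{(p)}$, where $\delta_{n,h}^{(p)}:=Q_{n,h}^{(p)}(s_{n,h}^{(p)},a_{n,h}^{(p)})-r^{(p)}(s_{n,h}^{(p)},a_{n,h}^{(p)})-\mathbb{P}^{(p)}(\cdot\mid s_{n,h}^{(p)},a_{n,h}^{(p)})^\top V_{n,h+1}^{(p)}$ is the on-trajectory Bellman error and $\zeta_{n,h}^{(p)}$ is a martingale difference of magnitude $O(H)$ with respect to the filtration obtained by ordering the triples $(n,p,h)$ lexicographically (the $P$ tasks are interleaved within each episode). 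Summing over all $n,h,p$ and applying an anytime Azuma--Hoeffding bound (peeled over the number of elapsed steps, which produces the $\log\log$ factor) controls $\sum_{n,h,p}\zeta_{n,h}^{(p)}\le H\sqrt{NHP\log(6\log(NH)/\delta)}$ with probability at least $1-\delta$.

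\emph{Step 2: bounding the Bellman-error sum and assembling.} For fixed $(n,h)$, Lemma~\ref{lemma::upper_bound_bellman_error} (which invokes $\mathcal{E}'$ and Assumption~\ref{assumption::stronger_feature_regularity}) yields $\sum_{p\in[P]}\delta_{n,h}^{(p)}\le 2C_\psi H\sqrt{\gamma_n(\delta)\sum_{p\in[P]}\|\phi_{n,h}^{(p)}\|_{(\Sigma_n^{(p)})^{-1}}^2}$. Since $\gamma_n(\delta)$ is nondecreasing in $n$ (its defining expression $\beta'_{nH}(\delta)$ contains only nondecreasing logarithmic terms in $n$) we replace $\gamma_n$ by $\gamma_N$ and factor it out, reducing matters to bounding $\sum_{n,h}\sqrt{\sum_p\|\phi_{n,h}^{(p)}\|_{(\Sigma_n^{(p)})^{-1}}^2}$. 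Corollary~\ref{corollary::n_inverse_to_n_h_inverse_multitask} converts the $\Sigma_n^{(p)}$-norms into within-episode $\Sigma_{n,h}^{(p)}$-norms at the price of a factor $2$ and an additive $\tfrac{2L_\phi HdP}{\sqrt\lambda}\log(1+NHL_\phi^2/(\lambda d))$ --- this is exactly the step that preserves the $\sqrt H$ improvement. Then Cauchy--Schwarz over the $NH$ pairs $(n,h)$ followed by the block-diagonal second inequality of the determinant Lemma~\ref{lemma:det_lemma} applied per task (with $b$ chosen to dominate $\|\phi_{n,h}^{(p)}\|^2_{(\Sigma_{n,h}^{(p)})^{-1}}$, using $\Sigma_{n,h}^{(p)}\succeq\lambda\mathbf{I}$) gives $\sum_{n,h}\sqrt{\sum_p\|\phi_{n,h}^{(p)}\|_{(\Sigma_{n,h}^{(p)})^{-1}}^2}\le\sqrt{NH\cdot Pd\,(1+L_\phi^2/\lambda)\log(1+NHL_\phi^2/(\lambda d))}$. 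Substituting these estimates back into the bound of Step~1 and collecting terms (absorbing the mild $\lambda$-dependence, e.g. $1+L_\phi^2/\lambda\le 1+L_\phi^2/\sqrt\lambda$) produces exactly the three summands claimed: the martingale term, the term proportional to $H^2dP(1+L_\phi^2/\sqrt\lambda)\log(\cdots)\sqrt{\gamma_N(\delta)}$ coming from the additive part of Corollary~\ref{corollary::n_inverse_to_n_h_inverse_multitask}, and the term proportional to $H\sqrt{\gamma_N(\delta)NHPd(1+L_\phi^2/\sqrt\lambda)\log(\cdots)}$ coming from the Cauchy--Schwarz/determinant estimate; intersecting $\mathcal{E}'$ with the martingale event gives probability at least $1-2\delta$.

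I expect the only genuinely delicate point to be Step~1: making the martingale structure precise --- the $\zeta_{n,h}^{(p)}$ must be assembled into a single martingale difference sequence with respect to a filtration that interleaves the $P$ tasks within each episode, and the uniform-in-$n$ Azuma bound responsible for the $\log\log$ factor must be set up carefully --- together with the bookkeeping needed to justify the telescoping identity under $[0,H]$-clipping. Everything downstream is essentially algebra, since the two structural ideas that make the multitask argument work --- joint optimism producing a Bellman error controlled by $\sqrt{\sum_p\|\phi^{(p)}\|^2}$, and the block-diagonal ``lazy determinant'' trick --- are already isolated in Lemma~\ref{lemma::shared_optimism}, Lemma~\ref{lemma::upper_bound_bellman_error} and Corollary~\ref{corollary::n_inverse_to_n_h_inverse_multitask}.
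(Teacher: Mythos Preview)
Your proposal is correct and follows essentially the same route as the paper: the paper proves the corollary by invoking Theorem~\ref{theorem::regret_guarantee_shared}, whose proof is exactly the optimism $\to$ recursive Bellman decomposition $\to$ martingale bound $\to$ Corollary~\ref{corollary::n_inverse_to_n_h_inverse_multitask} $\to$ Cauchy--Schwarz plus determinant-lemma pipeline you describe (with the same notational roles for $\delta_{n,h}^{(p)}$ and your $\zeta_{n,h}^{(p)}$ simply swapped). The only step you have not written explicitly is the final, trivial substitution $\gamma_N(\delta)=\widetilde{\mathcal{O}}(dr+rd'P)$ (the paper suppresses the $d'$ dependence) into the Theorem~\ref{theorem::regret_guarantee_shared} bound you obtained, which collapses the three summands to the $\widetilde{\mathcal{O}}$ expression in the corollary.
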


This result improves upon the shared regret of order $\widetilde{\mathcal{O}}(HdP\sqrt{NH})$ achieved by using the \textsf{MatrixRL} algorithm to learn each task independently. Interestingly, learning the tasks' shared structure only becomes beneficial when
$r \ll d$ and $r \ll P$. To explain this phenomenon observe that the degrees of freedom (i.e. the number of parameters to learn) in \textsf{Shared-MatrixRL} equals $dr + Pr$. The degrees of freedom of running $P$ independent copies of \textsf{MatrixRL} in contrast equals $dPd'$. For shared representation learning to be more efficient than learning each task alone, we require $dr + Prd' \ll dPd'$. This is why for \textsf{Shared-MatrixRL} learning to be truly beneficial (and attain a smaller regret upper bound than running $P$ tasks independently) we require $dr \ll dPd'$ and $Prd' \ll dPd'$. For example when the number of tasks is small and $P \ll r$, learning the shared matrix $\mathbf{B}_\star$ may require more data than learning the $dPd'$ parameters of estimating the models for all $P$ tasks independently. Although we have not developed a lower bound for the specific \textsf{MatrixRL} setting, the results of~\citet{yang2020impact} provide evidence to posit the regret upper bound for \textsf{Shared-MatrixRL} in Theorem~\ref{theorem::regret_guarantee_shared} is optimal.

\section{Computationally Efficient \textsf{Shared-MatrixRL}}\label{section::computational_efficient_algorithm}

Algorithm~\ref{alg:MatrixRL_shared} has two computationally intensive components. First, solving for $\widetilde{\mathbf{B}}_n, \widetilde{\mathbf{A}}_n^{(1)}, \cdots, \widetilde{\mathbf{A}}_n^{(P)}$ and second, solving for Equation~\ref{equation::argmax_parametric_value_functions}. The first objective may be difficult to solve because it involves solving a bilinear quadratic optimization problem. The second one can prove even more challenging first because it requires a way to `store' the parametric value functions $V_{n,1}^{(p)}(s, \{ \mathbf{M}^{(p)}\}_{p=1}^P) $ (these functions may be highly non-linear), and second because solving for Equation~\ref{equation::argmax_parametric_value_functions} involves optimizing a non-convex objective. 

In this section we show that, given access to a computational oracle for Problem~\ref{eq::multitask_least_squares_objective} and assuming $\mathcal{S}, \mathcal{A}$ are finite, there exists a computationally efficient procedure for solving for the joint optimistic objective of Equation~\ref{equation::argmax_parametric_value_functions} of Algorithm~\ref{alg:MatrixRL_shared}. As it is mentioned in the discussion surrounding Equation~7 of~\cite{yang2020reinforcement}, the confidence bonus of Equation~\ref{equation::optimistic_q_function} can be substituted by
\begin{align*}
    Q_{n,h}(s,a) &= r(s,a)  + \phi(s,a)^\top \widetilde{\M}_n \boldsymbol{\Psi}^\top V_{n,h+1} + 2L_{\Psi}  H \sqrt{\beta_n} \|   \phi(s,a) \|_{\Sigma^{-1}_n }
\end{align*}
This corresponds to explicitly solving for the optimistic model maximizing the $Q$ values at state action pair $(s,a)$ and in-episode time $h$. Let $\tau^{(p)}$ be a set of $P$ confidence radii. In the multi-task setting, let's consider enforcing,
\begin{equation*}
    \widebar{\mathbf{M}}_n^{(p)} \in \left\{ \left\| \left(\Sigma^{(p)}_n\right)^{1/2} \left(\mathbf{M}^{(p)} - \widetilde{\mathbf{B}}_n \widetilde{\mathbf{A}}_n^{(p)} \right) \right\|_F \leq \tau^{(p)} \right \} := \widetilde{\mathbf{U}}_n^{F}(\delta, p, \tau^{(p)})
\end{equation*}
If $\sum_{p =1}^P (\tau^{(p)})^2 \leq \gamma_n(\delta)$, we can allow for the per-state maximization of the optimistic models as in the single task setting (see Equation~\ref{equation::optimistic_q_function}) and obviate solving for problem~\ref{equation::argmax_parametric_value_functions} in Algorithm~\ref{alg:MatrixRL_shared}. If we call $\widebar{\mathbf{M}}_n^{(p)}(s,a)$ the model in $\mathcal{\mathbf{U}}_n^F(\delta, p, \tau^{(p)})$ achieving the argmax in the definition $Q^{(p)}_{n,h}(s,a, \tau^{(p)}) = r(s,a) +  \max_{\M \in  \mathbf{U}_n^F(\delta, p, \tau^{(p)})} \phi(s,a)^\top \mathbf{M} \Psi^\top V^{(p)}_{n,h+1}(\tau^{(p)})$.  This is because restricting the individual confidence radii for model $p$ to be upper bounded by $\tau^{(p)}$ for all state action pairs ensures that,
\begin{align*}
   \sum_{p \in [P]} Q^{(p)}_{n,h}( s^{(p)}_{n,h}, a^{(p)}_{n,h}, \tau^{(p)}) &- \left( r(s^{(p)}_{n,h}, a^{(p)}_{n,h}) + \mathbb{P}^{(p)}(\cdot | s_{n,h}, a_{n,h})^\top V^{(p)}_{n,h+1}(\tau^{(p)})\right)  \\
    &\leq \sum_{p \in [P]} \left\|\left(\phi^{(p)}_{n,h}\right)^\top \left( \widebar{\M}^{(p)}_n - \M_\star^{(p)} \right) \right\|_2 \left\| \left(\Psi^{(p)}\right)^\top V^{(p)}_{n,h+1}( \tau^{(p)})\right\|_2 \\
    &\leq \sum_{p \in [P]} C_\psi \left\| V^{(p)}_{n,h+1}(\tau^{(p)})\right\|_\infty \left\| \left(\phi^{(p)}_{n,h}\right)^\top \left( \widebar{\M}^{(p)}_n - \M^{(p)}_\star \right) \right\|_2
    \end{align*}
If $\{ \tau^{(p)}\}_{p=1}^P$ are defined such that $\sum_{p=1}^P \left( \tau^{(p)}\right)^2 \leq \gamma_n(\delta)$ the same arguments as in the proof of Lemma~\ref{lemma::upper_bound_bellman_error} imply,
\begin{align*}
    \sum_{p \in [P]} Q^{(p)}_{n,h}( s^{(p)}_{n,h}, a^{(p)}_{n,h}, \tau^{(p)}) - \left( r(s^{(p)}_{n,h}, a^{(p)}_{n,h}) + \mathbb{P}^{(p)}(\cdot | s_{n,h}, a_{n,h})^\top V^{(p)}_{n,h+1}(\tau^{(p)})\right)  \\
    \leq  2C_\psi H  \sqrt{\gamma_{n}(\delta) \sum_{p \in [P]} \| \phi_{n,h}^{(p)}\|^2_{\left(\Sigma_n^{(p)}\right)^{-1}} } \label{equation::bellman_decomposition_q_tau_values}
\end{align*}

The $Q$ functions $Q_{n,h}^{(p)}(\cdot, \cdot, \tau^{(p)})$ satisfy,
\begin{align*}
    Q^{(p)}_{n,h}(s,a, \tau^{(p)}) &= 
        r^{(p)}(s,a) +  \phi^{(p)}(s,a)^\top  \widetilde{\mathbf{B}}_n \widetilde{\mathbf{A}}_n^{(p)} \left(\boldsymbol{\Psi}^{(p)} \right)^\top V^{(p)}_{n,h+1} (\tau^{(p)} )+ \\
    &\quad 2L_{\Psi} H \tau^{(p)} \| \phi^{(p)}(s,a) \|_{\left(\Sigma^{(p)}_n\right)^{-1}}
\end{align*}
Where $$V^{(p)}_{n,h+1}(\tau^{(p)}) = \Pi_{[0,H]}\left[  \max_a Q^{(p)}_{n,h}(s,a, \tau^{(p)})  \right] \quad \forall s, a, n, h.$$ 

If $\mathcal{S}, \mathcal{A}$ are finite sets, then for any fixed set of thresholds $\{ \tau^{(p)}\}$, solving for $Q^{(p)}(s,a, \tau^{(p)})$ can be expressed as the solution to a linear program in the variables $V^{(p)}_{n,h+1}$ and $Q_{n,h}^{(p)}$. By adding a quadratic constraint of the form $\sum_{p \in [P]} \left(\tau^{(p)}\right)^2 \leq \gamma_n(\delta)$   the resulting optimization problem over all tasks $p \in [P]$ becomes the convex Quadratically Constrained Linear Program (QCLP),
\begin{align*}
    \max_{p \in [P]} &\quad \sum_{p=1}^P V^{(p)}_{n, 1}(s_{n,1}^{(p)}, \tau^{(p)})\text{ s.t.} \sum_{p \in [P]} \left(\tau^{(p)}\right)^2 \leq \gamma_n(\delta),
\end{align*}
and thus it will take $\mathrm{poly}\left(\frac{1}{NH}\right)$ operations to arrive at an $\frac{1}{N^2H^2}$ approximate solution for this problem. This is enough to guarantee optimism up to an overall error of order $\frac{1}{NH}$. See the discussion in Chapter 4 of~\cite{boyd2004convex} on how to solve QCLP problems efficiently. %

\section{Conclusion}
In this work we are the first to analyze the problem of joint training across a set of related Markov Decision Processes. We show that when the training tasks' transition dynamics can be embedded in a  common low-dimensional subspace of dimension $r$, a joint training algorithm can obtain regret $\widetilde{\mathcal{O}}\left( \left(  Hd\sqrt{rP} +  HP\sqrt{rd}\right)\sqrt{NH} \right)$ as opposed to $\widetilde{\mathcal{O}}(HdP\sqrt{NH})$ -- the regret of learning each task separately ignoring the shared task structure. Our training method solves a quadratic optimization problem that jointly penalizes the shared and  task-dependent model parameters (see Equation~\ref{eq::multitask_least_squares_objective}). We expect the techniques we have introduced in this work, including the multitask least squares objective of Equation~\ref{eq::multitask_least_squares_objective} and the parametric $Q$ functions $Q^{(p)}_{n,h}(s,a, \{ \mathbf{M}^{(p)} \}_{p \in [P]} )$, to have applications in other MDP models with function approximation--such as Linear MDPs~\cite{jun2019bilinear,zhou2021nearly} amongst others.

\begin{ack}
This was conducted as a result of the Google BAIR Commons program at UC Berkeley.
\end{ack}

\bibliographystyle{abbrvnat}
\bibliography{ref}

\begin{thebibliography}{34}
\providecommand{\natexlab}[1]{#1}
\providecommand{\url}[1]{\texttt{#1}}
\expandafter\ifx\csname urlstyle\endcsname\relax
  \providecommand{\doi}[1]{doi: #1}\else
  \providecommand{\doi}{doi: \begingroup \urlstyle{rm}\Url}\fi

\bibitem[Abbasi-Yadkori et~al.(2011)Abbasi-Yadkori, P{\'a}l, and
  Szepesv{\'a}ri]{abbasi2011improved}
Y.~Abbasi-Yadkori, D.~P{\'a}l, and C.~Szepesv{\'a}ri.
\newblock Improved algorithms for linear stochastic bandits.
\newblock \emph{Advances in neural information processing systems},
  24:\penalty0 2312--2320, 2011.

\bibitem[Abeille and Lazaric(2017)]{abeille2017linear}
M.~Abeille and A.~Lazaric.
\newblock Linear thompson sampling revisited.
\newblock In \emph{Artificial Intelligence and Statistics}, pages 176--184.
  PMLR, 2017.

\bibitem[Agarwal et~al.(2020)Agarwal, Kakade, Krishnamurthy, and
  Sun]{agarwal2020flambe}
A.~Agarwal, S.~Kakade, A.~Krishnamurthy, and W.~Sun.
\newblock Flambe: Structural complexity and representation learning of low rank
  mdps.
\newblock \emph{arXiv preprint arXiv:2006.10814}, 2020.

\bibitem[Agarwal et~al.(2022)Agarwal, Song, Sun, Wang, Wang, and
  Zhang]{agarwal2022provable}
A.~Agarwal, Y.~Song, W.~Sun, K.~Wang, M.~Wang, and X.~Zhang.
\newblock Provable benefits of representational transfer in reinforcement
  learning.
\newblock \emph{arXiv preprint arXiv:2205.14571}, 2022.

\bibitem[Agrawal and Goyal(2013)]{agrawal2013thompson}
S.~Agrawal and N.~Goyal.
\newblock Thompson sampling for contextual bandits with linear payoffs.
\newblock In \emph{International conference on machine learning}, pages
  127--135. PMLR, 2013.

\bibitem[Baxter(1995)]{baxter1995learning}
J.~Baxter.
\newblock Learning internal representations.
\newblock In \emph{Proceedings of the eighth annual conference on Computational
  learning theory}, pages 311--320, 1995.

\bibitem[Baxter(2000)]{baxter2000model}
J.~Baxter.
\newblock A model of inductive bias learning.
\newblock \emph{Journal of artificial intelligence research}, 12:\penalty0
  149--198, 2000.

\bibitem[Ben-David and Schuller(2003)]{ben2003exploiting}
S.~Ben-David and R.~Schuller.
\newblock Exploiting task relatedness for multiple task learning.
\newblock In \emph{Learning theory and kernel machines}, pages 567--580.
  Springer, 2003.

\bibitem[Bertsekas(2019)]{bertsekas2019reinforcement}
D.~P. Bertsekas.
\newblock \emph{Reinforcement learning and optimal control}.
\newblock Athena Scientific Belmont, MA, 2019.

\bibitem[Boyd et~al.(2004)Boyd, Boyd, and Vandenberghe]{boyd2004convex}
S.~Boyd, S.~P. Boyd, and L.~Vandenberghe.
\newblock \emph{Convex optimization}.
\newblock Cambridge university press, 2004.

\bibitem[Cheng et~al.(2022)Cheng, Feng, Yang, Zhang, and
  Liang]{cheng2022provable}
Y.~Cheng, S.~Feng, J.~Yang, H.~Zhang, and Y.~Liang.
\newblock Provable benefit of multitask representation learning in
  reinforcement learning.
\newblock \emph{arXiv preprint arXiv:2206.05900}, 2022.

\bibitem[D'Eramo et~al.(2019)D'Eramo, Tateo, Bonarini, Restelli, and
  Peters]{d2019sharing}
C.~D'Eramo, D.~Tateo, A.~Bonarini, M.~Restelli, and J.~Peters.
\newblock Sharing knowledge in multi-task deep reinforcement learning.
\newblock In \emph{International Conference on Learning Representations}, 2019.

\bibitem[Du et~al.(2020)Du, Hu, Kakade, Lee, and Lei]{du2020few}
S.~S. Du, W.~Hu, S.~M. Kakade, J.~D. Lee, and Q.~Lei.
\newblock Few-shot learning via learning the representation, provably.
\newblock \emph{arXiv preprint arXiv:2002.09434}, 2020.

\bibitem[Du et~al.(2021)Du, Kakade, Lee, Lovett, Mahajan, Sun, and
  Wang]{du2021bilinear}
S.~S. Du, S.~M. Kakade, J.~D. Lee, S.~Lovett, G.~Mahajan, W.~Sun, and R.~Wang.
\newblock Bilinear classes: A structural framework for provable generalization
  in rl.
\newblock \emph{arXiv preprint arXiv:2103.10897}, 2021.

\bibitem[Howard et~al.(2020)Howard, Ramdas, McAuliffe, Sekhon,
  et~al.]{howard2020time}
S.~R. Howard, A.~Ramdas, J.~McAuliffe, J.~Sekhon, et~al.
\newblock Time-uniform chernoff bounds via nonnegative supermartingales.
\newblock \emph{Probability Surveys}, 17:\penalty0 257--317, 2020.

\bibitem[Howard et~al.(2021)Howard, Ramdas, McAuliffe, and
  Sekhon]{howard2018uniform}
S.~R. Howard, A.~Ramdas, J.~McAuliffe, and J.~Sekhon.
\newblock Time-{U}niform, {N}onparametric, {N}onasymptotic {C}onfidence
  {S}equences.
\newblock \emph{The Annals of Statistics}, 49\penalty0 (2):\penalty0
  1055--1080, 2021.

\bibitem[Hu et~al.(2021)Hu, Chen, Jin, Li, and Wang]{hu2021near}
J.~Hu, X.~Chen, C.~Jin, L.~Li, and L.~Wang.
\newblock Near-optimal representation learning for linear bandits and linear
  rl.
\newblock In \emph{International Conference on Machine Learning}, pages
  4349--4358. PMLR, 2021.

\bibitem[Jun et~al.(2019)Jun, Willett, Wright, and Nowak]{jun2019bilinear}
K.-S. Jun, R.~Willett, S.~Wright, and R.~Nowak.
\newblock Bilinear bandits with low-rank structure.
\newblock \emph{arXiv preprint arXiv:1901.02470}, 2019.

\bibitem[Kalashnikov et~al.(2021)Kalashnikov, Varley, Chebotar, Swanson,
  Jonschkowski, Finn, Levine, and Hausman]{kalashnikov2021mt}
D.~Kalashnikov, J.~Varley, Y.~Chebotar, B.~Swanson, R.~Jonschkowski, C.~Finn,
  S.~Levine, and K.~Hausman.
\newblock Mt-opt: Continuous multi-task robotic reinforcement learning at
  scale.
\newblock \emph{arXiv preprint arXiv:2104.08212}, 2021.

\bibitem[Lu et~al.(2022)Lu, Zhao, Du, and Huang]{lu2022provable}
R.~Lu, A.~Zhao, S.~S. Du, and G.~Huang.
\newblock Provable general function class representation learning in multitask
  bandits and mdps.
\newblock \emph{arXiv preprint arXiv:2205.15701}, 2022.

\bibitem[Maurer(2006)]{maurer2006bounds}
A.~Maurer.
\newblock Bounds for linear multi-task learning.
\newblock \emph{The Journal of Machine Learning Research}, 7:\penalty0
  117--139, 2006.

\bibitem[Moskovitz et~al.(2022)Moskovitz, Arbel, Parker-Holder, and
  Pacchiano]{moskovitz2022towards}
T.~Moskovitz, M.~Arbel, J.~Parker-Holder, and A.~Pacchiano.
\newblock Towards an understanding of default policies in multitask policy
  optimization.
\newblock In \emph{International Conference on Artificial Intelligence and
  Statistics}, pages 10661--10686. PMLR, 2022.

\bibitem[M{\"u}ller and Pacchiano(2022)]{muller2022meta}
R.~M{\"u}ller and A.~Pacchiano.
\newblock Meta learning mdps with linear transition models.
\newblock In \emph{International Conference on Artificial Intelligence and
  Statistics}, pages 5928--5948. PMLR, 2022.

\bibitem[Nachum and Yang(2021)]{nachum2021provable}
O.~Nachum and M.~Yang.
\newblock Provable representation learning for imitation with contrastive
  fourier features.
\newblock \emph{arXiv preprint arXiv:2105.12272}, 2021.

\bibitem[Pacchiano et~al.(2020)Pacchiano, Dann, Gentile, and
  Bartlett]{pacchiano2020regret}
A.~Pacchiano, C.~Dann, C.~Gentile, and P.~Bartlett.
\newblock Regret bound balancing and elimination for model selection in bandits
  and rl.
\newblock \emph{arXiv preprint arXiv:2012.13045}, 2020.

\bibitem[Puterman(1990)]{puterman1990markov}
M.~L. Puterman.
\newblock Markov decision processes.
\newblock \emph{Handbooks in operations research and management science},
  2:\penalty0 331--434, 1990.

\bibitem[Sutton(1992)]{sutton1992introduction}
R.~S. Sutton.
\newblock Introduction: The challenge of reinforcement learning.
\newblock In \emph{Reinforcement Learning}, pages 1--3. Springer, 1992.

\bibitem[Teh et~al.(2017)Teh, Bapst, Czarnecki, Quan, Kirkpatrick, Hadsell,
  Heess, and Pascanu]{teh2017distral}
Y.~W. Teh, V.~Bapst, W.~M. Czarnecki, J.~Quan, J.~Kirkpatrick, R.~Hadsell,
  N.~Heess, and R.~Pascanu.
\newblock Distral: Robust multitask reinforcement learning.
\newblock \emph{arXiv preprint arXiv:1707.04175}, 2017.

\bibitem[Tripuraneni et~al.(2020)Tripuraneni, Jordan, and Jin]{tripa}
N.~Tripuraneni, M.~Jordan, and C.~Jin.
\newblock On the theory of transfer learning: The importance of task diversity.
\newblock In H.~Larochelle, M.~Ranzato, R.~Hadsell, M.~F. Balcan, and H.~Lin,
  editors, \emph{Advances in Neural Information Processing Systems}, volume~33,
  pages 7852--7862. Curran Associates, Inc., 2020.
\newblock URL
  \url{https://proceedings.neurips.cc/paper/2020/file/59587bffec1c7846f3e34230141556ae-Paper.pdf}.

\bibitem[Tripuraneni et~al.(2021)Tripuraneni, Jin, and Jordan]{tripb}
N.~Tripuraneni, C.~Jin, and M.~I. Jordan.
\newblock Provable meta-learning of linear representations.
\newblock In M.~Meila and T.~Zhang, editors, \emph{Proceedings of the 38th
  International Conference on Machine Learning, {ICML} 2021, 18-24 July 2021,
  Virtual Event}, volume 139 of \emph{Proceedings of Machine Learning
  Research}, pages 10434--10443. {PMLR}, 2021.
\newblock URL \url{http://proceedings.mlr.press/v139/tripuraneni21a.html}.

\bibitem[Yang et~al.(2020)Yang, Hu, Lee, and Du]{yang2020impact}
J.~Yang, W.~Hu, J.~D. Lee, and S.~S. Du.
\newblock Impact of representation learning in linear bandits.
\newblock In \emph{International Conference on Learning Representations}, 2020.

\bibitem[Yang and Wang(2020)]{yang2020reinforcement}
L.~Yang and M.~Wang.
\newblock Reinforcement learning in feature space: Matrix bandit, kernels, and
  regret bound.
\newblock In \emph{International Conference on Machine Learning}, pages
  10746--10756. PMLR, 2020.

\bibitem[Yu et~al.(2020)Yu, Quillen, He, Julian, Hausman, Finn, and
  Levine]{yu2020meta}
T.~Yu, D.~Quillen, Z.~He, R.~Julian, K.~Hausman, C.~Finn, and S.~Levine.
\newblock Meta-world: A benchmark and evaluation for multi-task and meta
  reinforcement learning.
\newblock In \emph{Conference on Robot Learning}, pages 1094--1100. PMLR, 2020.

\bibitem[Zhou et~al.(2021)Zhou, Gu, and Szepesvari]{zhou2021nearly}
D.~Zhou, Q.~Gu, and C.~Szepesvari.
\newblock Nearly minimax optimal reinforcement learning for linear mixture
  markov decision processes.
\newblock In \emph{Conference on Learning Theory}, pages 4532--4576. PMLR,
  2021.

\end{thebibliography}

\appendix

\section{Proofs from Section~\ref{section::preliminaries}}\label{section::preliminaries_proofs}

\subsection{Proof of Lemma~\ref{lemma::every_n_norm_vs_every_n_h_generalized}}\label{section::additional_technical_results}

\lemmaupperboundinversenormsgeneralized*

\begin{proof}
Define $e_{n,h} = \mathbf{1}\left(  \| \mathbf{x}_{n,h} \|_{\mathbf{D}_{n}^{-1}}   \leq 2\|  \mathbf{x}_{n,h}\|_{\mathbf{D}^{-1}_{n, h}} \right) $. We define $e_{n,h}^c =1-e_{n,h} = \mathbf{1}\left(  \|  \mathbf{x}_{n,h} \|_{\mathbf{D}_{n}^{-1}}   \geq 2\|  \mathbf{x}_{n,h}\|_{\mathbf{D}^{-1}_{n, h}} \right)$. Thus,

\begin{align*}
    \sum_{n=1}^N \sum_{h=1}^H \| \mathbf{x}_{n,h}\|_{\mathbf{D}_n^{-1}} &=\sum_{n=1}^N \sum_{h=1}^H e_{n,h} \| \mathbf{x}_{n,h}\|_{\mathbf{D}_n^{-1}} +  \sum_{n=1}^N \sum_{h=1}^H e^c_{n,h} \| \mathbf{x}_{n,h}\|_{\mathbf{D}_n^{-1}} \\
    &\leq \sum_{n=1}^N \sum_{h=1}^H 2\| \mathbf{x}_{n,h}\|_{\mathbf{D}_{n,h}^{-1}} +   \sum_{n=1}^N \sum_{h=1}^H e^c_{n,h} \frac{L}{\sqrt{\lambda}}.
\end{align*}
Where the inequality holds because $e_{n,h} \| \mathbf{x}_{n,h}\|_{\mathbf{D}_{n}^{-1}} \leq 2\| \mathbf{x}_{n,h}\|_{\mathbf{D}_{n,h}^{-1}}$ and because for all $n,h$ the bound $\| \mathbf{x}_{n,h} \|_{\mathbf{D}_{n}^{-1}} \leq \frac{L}{\sqrt{\lambda}}$ holds. If $e_{n,h}^c =1$ (and therefore $\| \mathbf{x}_{n,h} \|_{\mathbf{D}_{n}^{-1}}   \geq 2\| \mathbf{x}_{n,h}\|_{\mathbf{D}^{-1}_{n, h}}$), Lemma~\ref{lemma::supporting_lin_alg_result}, (setting $\mathbf{B} = \mathbf{D}_{n}^{-1}$ and $\mathbf{C} = \mathbf{D}_{n,h}^{-1}$. These satisfy $\mathbf{B} \succeq \mathbf{C} \succ \mathbf{0}$) implies

\begin{equation*}
4\leq    \frac{\| \mathbf{x}_{n,h} \|^2_{\mathbf{D}^{-1}_{n}}}{\| \mathbf{x}_{n,h} \|^2_{\mathbf{D}^{-1}_{n,h}}} \leq \sup_{\mathbf{x} \neq \mathbf{0} } \frac{\mathbf{x}^\top \left(\mathbf{D}_{n}^{-1}\right) \mathbf{x}}{\mathbf{x}^\top \left(\mathbf{D}_{n,h}^{-1}\right) \mathbf{x}} \leq \frac{\mathrm{det}( \mathbf{D}^{-1}_{n}) }{\mathrm{det}( \mathbf{D}^{-1}_{n,h})} = \frac{\mathrm{det}( \mathbf{D}_{n,h}) }{\mathrm{det}( \mathbf{D}_n)}.
\end{equation*}

Define $E_n = \mathbf{1}\left( \sum_{h} e^c_{n,h} \geq 1\right)$. Notice that $HE_n \geq \sum_{h} e^c_{n,h}$ for all $n$. For all $n$ denote by $h_n = \min \{ h \text{ s.t } e^c_{n,h}=1\}$. In case $\{ h \text{ s.t. } e_{n,h}^c = 1\} = \emptyset$ we define $h_n = H+1$ so that $\mathbf{D}_{n,h_n} = \mathbf{D}_{n+1}$. 

The following telescoping relation holds. 

\begin{equation*}
 \prod_{n=1}^{N} \underbrace{\frac{\mathrm{det}(\mathbf{D}_{n,h_n}) }{\mathrm{det}(\mathbf{D}_{n})}}_{\geq 4^{E_n}} \cdot \underbrace{ \frac{\mathrm{det}(\mathbf{D}_{n+1}) }{\mathrm{det}(\mathbf{D}_{n, h_n})}}_{\geq 1} = \frac{\mathrm{det}(\mathbf{D}_{N+1})}{\mathrm{det}(\mathbf{D}_1)} = \frac{\mathrm{det}(\mathbf{D}_{N+1})}{\mathrm{det}(\lambda \mathbf{I})} 
\end{equation*}
Therefore it must be the case that,
\begin{equation*}
    4^{\sum_{n} E_n} \leq  \frac{\mathrm{det}( \mathbf{D}_{N+1} ) }{\mathrm{det}(\lambda \mathbf{I})}.
\end{equation*}
Thus, $\sum_{n=1}^N E_n \leq 2\log\left(\frac{\mathrm{det}( \mathbf{D}_{N+1} ) }{\mathrm{det}(\lambda \mathbf{I}}\right)$. Finally, this implies, 
$$\sum_{n=1}^N \sum_{h=1}^H e_{n,h}^c \leq H \sum_{n=1}^N E_n \leq 2H \log\left(\frac{\mathrm{det}( \mathbf{D}_{N+1} ) }{\mathrm{det}(\lambda \mathbf{I})}\right).$$
The result follows.
\end{proof}

Recall that we denote the 'good' event that all confidence intervals $\{ \mathbf{U}_n^{1,2} \}_n$ hold at all times $n \in \mathbb{N}$ as $\mathcal{E}$.

\subsection{Proof of Lemma~\ref{lemma::concentration_M}}

\confidenceboundssimplelemma*

\begin{proof}
By definition, $\mathbb{E}\left[ \mathbf{K}_\psi^{-1}  \psi_{n',h}   | s_{n',h}, a_{n',h} \right] =  \left(\M^\star\right)^\top \phi_{n',h} \in \mathbb{R}^{d'}$. Let $D_{n',h}(i) = \psi_{n',h}^\top\mathbf{K}_\psi^{-1}[:, i] -  \phi_{n',h}^\top\left(\M^\star\right)^\top [:,i]$. Notice that,
\begin{equation*}
    | D_{n',h}(i) | \leq  \| \mathbf{K}_\psi^{-1} \| L_\psi + S L_\phi  := R.  \quad \forall n' \in [N], h \in [H].
\end{equation*}
Observe that $\mathbb{E}[ D_{n',h}(i) | F_{n',h}] = 0$ for all $i \in [d],n' \in [N],h \in [H]$. Since $R < \infty$, this implies all the $D_{n',h}(i)$ random variables are conditionally subgaussian with parameter $R$. Problem~\ref{eq::least_squares_objective} can be decomposed into $d'$ independent Ridge Regression problems for each column of $\M_\star$,
\begin{equation*}
   \widetilde{\M}_n[:, i] = \argmin_{\M} \sum_{n'<n, h\leq H} \left\| \psi^\top_{n', h}\K_{\psi}^{-1}[:,i] - \phi_{n', h}^\top \M[:, i] \right\|_2^2 + \lambda \left\| \M[:,i] \right\|_2^2.
\end{equation*}

As a consequence of Theorem~\ref{thm:conf_ellipse_linucb} we see that with probability at least $1-\delta$,

\begin{equation*}
    \underbrace{\| \widetilde{\M}_n[:, i] - \M_\star[:, i] \|_{\Sigma_n} }_{= \| (\Sigma_n)^{1/2}\left( \widetilde{\M}_n[:, i] - \M_\star[:, i]\right) \|_2}\leq R\sqrt{ d \log\left( \frac{d'+ d' n H L_\phi^2/\lambda}{\delta}\right) } + \sqrt{\lambda}S := \sqrt{\beta_{n}}.
\end{equation*}

For all $i \in [d']$ and all $n \in [N]$. Therefore, 

\begin{align*}
\left    \| (\Sigma_n)^{1/2} \left( \widetilde{\M}_n- \M_\star  \right)\right\|_{2, 1} &\leq d' \sqrt{\beta_n}, \qquad \qquad \qquad    \left\| (\Sigma_n)^{1/2} \left( \widetilde{\M}_n- \M_\star  \right)\right\|_{F} &\leq \sqrt{d' \beta_n}
\end{align*}
\end{proof}

\subsection{Regret Guarantees for MatrixRL}

\begin{lemma}[Optimism]\label{lemma::optimism} Suppose $\mathcal{E}$ holds. Then for all $h \in [H]$ and $(s,a) \in \mathcal{S} \times \mathcal{A}$, we have
\begin{equation*}
    Q^\star_h(s,a) \leq Q_{n,h}(s,a)
\end{equation*}

\end{lemma}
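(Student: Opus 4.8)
The plan is to prove the optimism lemma for \textsf{MatrixRL} by backward induction on $h$, running from $h = H+1$ down to $h = 1$. The base case $h = H+1$ is immediate since $Q^\star_{H+1}(s,a) = 0 = Q_{n,H+1}(s,a)$ by definition. For the inductive step, I would assume $Q^\star_{h+1}(s,a) \leq Q_{n,h+1}(s,a)$ for all $(s,a)$, which (after taking $\max_a$ and noting that $V^\star_{h+1}$ takes values in $[0,H]$ so the clipping $\Pi_{[0,H]}$ only helps) gives $V^\star_{h+1}(s) \leq V_{n,h+1}(s)$ for all $s$, i.e. $\boldsymbol{\Psi}^\top V^\star_{h+1} \leq \boldsymbol{\Psi}^\top V_{n,h+1}$ coordinatewise is not quite what we want — rather we compare the scalar quantities $\phi(s,a)^\top \M \boldsymbol{\Psi}^\top V$ for the relevant $V$.

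The key step is to show that the true Bellman update is dominated by the optimistic one. Write $Q^\star_h(s,a) = r(s,a) + \Pr(\cdot \mid s,a)^\top V^\star_{h+1} = r(s,a) + \phi(s,a)^\top \Mstar \boldsymbol{\Psi}^\top \K_\psi^{-1} \cdot (\text{something})$ — more precisely, using $\Pr(\ts \mid s,a) = \phi(s,a)^\top \Mstar \psi(\ts)$, one has $\Pr(\cdot \mid s,a)^\top V^\star_{h+1} = \phi(s,a)^\top \Mstar \boldsymbol{\Psi}^\top V^\star_{h+1}$. By the inductive hypothesis $V^\star_{h+1} \leq V_{n,h+1}$ coordinatewise, and since the vector $\boldsymbol{\Psi}^\top(\cdot)$ appears contracted against $\Mstar$ from the left by $\phi(s,a)^\top$... here one must be careful: monotonicity in $V$ holds because $\Pr(\cdot\mid s,a)$ is a nonnegative (probability) vector, so $\Pr(\cdot\mid s,a)^\top V^\star_{h+1} \leq \Pr(\cdot\mid s,a)^\top V_{n,h+1} = \phi(s,a)^\top \Mstar \boldsymbol{\Psi}^\top V_{n,h+1}$. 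Then, on the event $\mathcal{E}$, Lemma~\ref{lemma::concentration_M} guarantees $\Mstar \in \mathbf{U}_n^{1,2}$, so $\Mstar$ is a feasible choice in the maximization defining $Q_{n,h}$, whence $\phi(s,a)^\top \Mstar \boldsymbol{\Psi}^\top V_{n,h+1} \leq \max_{\M \in \mathbf{U}_n^{1,2}} \phi(s,a)^\top \M \boldsymbol{\Psi}^\top V_{n,h+1}$. Adding $r(s,a)$ to both sides yields $Q^\star_h(s,a) \leq Q_{n,h}(s,a)$, completing the induction.

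The main obstacle — or rather the only subtle point — is the step passing from $Q^\star_{h+1} \leq Q_{n,h+1}$ to $V^\star_{h+1} \leq V_{n,h+1}$ and then using this monotonically inside the Bellman operator. One must observe that $V^\star_{h+1}(s) = \max_a Q^\star_{h+1}(s,a) \in [0,H]$ automatically (rewards in $[0,1]$, horizon $H$), so applying $\Pi_{[0,H]}$ to it is a no-op, and $\Pi_{[0,H]}$ is monotone nondecreasing; combined with $\max_a$ being monotone, $V^\star_{h+1} \leq V_{n,h+1}$ follows. The other place requiring a word is that the transition "matrix" $\Pr(\cdot\mid s,a)$ has nonnegative entries summing to one, which is exactly $\phi(s,a)^\top\Mstar\boldsymbol{\Psi}^\top$ being a valid distribution, so dotting against a larger vector gives a larger value. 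Everything else is bookkeeping, and the feasibility of $\Mstar$ in the confidence set on $\mathcal{E}$ is precisely the content of Lemma~\ref{lemma::concentration_M}.
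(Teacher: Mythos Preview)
Your proposal is correct and is exactly the standard backward-induction argument the paper defers to: the paper's own proof simply states that ``the same argument as in Lemma~4 of~\cite{yang2020reinforcement}'' applies, and what you have written is precisely that argument spelled out (base case at $h=H+1$, monotonicity through $\max_a$ and the clipping $\Pi_{[0,H]}$ using $V^\star_{h+1}\in[0,H]$, monotonicity of the true Bellman operator via nonnegativity of $\Pr(\cdot\mid s,a)$, and finally feasibility of $\Mstar$ in $\mathbf{U}_n^{1,2}$ on $\mathcal{E}$).
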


\begin{proof}
The same argument as in Lemma 4 of~\cite{yang2020reinforcement} can be used to show this result. 
\end{proof}

Next we show that the confidence balls $\mathbf{U}_n^{1,2}$ and $\mathbf{U}_n^{F}$ can be used to give a strong upper bound for the estimation error. 
\begin{lemma}

For any $\M \in \mathbf{U}_n^{1,2}$ we have,
\begin{align*}
    \| \phi_{s,a}^\top \left(   \M - \widetilde{\M}_n       \right) \|_1 &\leq d' \sqrt{\beta_n} \| \phi_{s,a} \|_{\Sigma_n^{-1}} 
    \end{align*}
For any $\M \in \mathbf{U}_n^F$,
\begin{align*}
    \| \phi_{s,a}^\top \left(   \M - \widetilde{\M}_n       \right) \|_2 &\leq \sqrt{d' \beta_n} \| \phi_{s,a}\|_{\Sigma^{-1}_n}
\end{align*}
\end{lemma}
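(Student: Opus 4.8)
The plan is to prove both inequalities by a single Cauchy–Schwarz argument in the inner-product space induced by $\Sigma_n$, exploiting the definitions of the confidence sets $\mathbf{U}_n^{1,2}$ and $\mathbf{U}_n^F$ from Lemma~\ref{lemma::concentration_M}. First I would write $\phi_{s,a}^\top(\M - \widetilde{\M}_n) = \left(\Sigma_n^{-1/2}\phi_{s,a}\right)^\top \left(\Sigma_n^{1/2}(\M - \widetilde{\M}_n)\right)$, inserting $\Sigma_n^{-1/2}\Sigma_n^{1/2} = \mathbf{I}$. This rewrites the $d'$-vector $\phi_{s,a}^\top(\M - \widetilde{\M}_n)$ coordinatewise as $\left[\phi_{s,a}^\top(\M - \widetilde{\M}_n)\right]_i = \left\langle \Sigma_n^{-1/2}\phi_{s,a},\; \Sigma_n^{1/2}(\M - \widetilde{\M}_n)[:,i]\right\rangle$.

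For the $\ell_1$ bound, I would apply Cauchy–Schwarz to each coordinate: $\left|\left[\phi_{s,a}^\top(\M - \widetilde{\M}_n)\right]_i\right| \le \left\|\Sigma_n^{-1/2}\phi_{s,a}\right\|_2 \cdot \left\|\Sigma_n^{1/2}(\M - \widetilde{\M}_n)[:,i]\right\|_2 = \|\phi_{s,a}\|_{\Sigma_n^{-1}} \cdot \left\|\Sigma_n^{1/2}(\M - \widetilde{\M}_n)[:,i]\right\|_2$. Summing over $i \in [d']$ and using that $\M \in \mathbf{U}_n^{1,2}$ means exactly $\sum_i \left\|\Sigma_n^{1/2}(\M - \widetilde{\M}_n)[:,i]\right\|_2 = \|\Sigma_n^{1/2}(\M - \widetilde{\M}_n)\|_{2,1} \le d'\sqrt{\beta_n}$ gives the first claim.

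For the $\ell_2$ bound, I would instead bound $\left\|\phi_{s,a}^\top(\M - \widetilde{\M}_n)\right\|_2^2 = \sum_i \left\langle \Sigma_n^{-1/2}\phi_{s,a},\; \Sigma_n^{1/2}(\M - \widetilde{\M}_n)[:,i]\right\rangle^2 \le \|\phi_{s,a}\|_{\Sigma_n^{-1}}^2 \sum_i \left\|\Sigma_n^{1/2}(\M - \widetilde{\M}_n)[:,i]\right\|_2^2 = \|\phi_{s,a}\|_{\Sigma_n^{-1}}^2 \cdot \|\Sigma_n^{1/2}(\M - \widetilde{\M}_n)\|_F^2$, and then use $\M \in \mathbf{U}_n^F$, i.e. $\|\Sigma_n^{1/2}(\M - \widetilde{\M}_n)\|_F \le \sqrt{d'\beta_n}$, and take square roots. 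There is no real obstacle here; the only thing to be careful about is the bookkeeping identity $\|\mathbf{B}\|_F^2 = \sum_i \|\mathbf{B}[:,i]\|_2^2$ and matching the norm conventions ($\|\cdot\|_{2,1}$ versus $\|\cdot\|_F$) to the respective confidence sets, so that each chain of inequalities terminates in the stated radius.
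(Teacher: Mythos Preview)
Your proposal is correct and is essentially the same argument as the paper's proof: both insert $\Sigma_n^{-1/2}\Sigma_n^{1/2}$ and then use Cauchy--Schwarz together with the defining radii of $\mathbf{U}_n^{1,2}$ and $\mathbf{U}_n^F$. The only cosmetic difference is that the paper states the bound directly via the matrix-norm inequalities $\|v^\top A\|_1 \le \|v\|_2\,\|A\|_{2,1}$ and $\|v^\top A\|_2 \le \|v\|_2\,\|A\|_F$, whereas you unpack these into the equivalent coordinate-wise Cauchy--Schwarz computations.
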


\begin{proof}
The following inequalities hold,
\begin{align*}
    \| \phi_{s,a}^\top \left(   \M - \widetilde{\M}_n       \right) \|_1 &= \| \phi_{s,a}^\top \left(  \Sigma_n\right)^{-1/2} \left( \Sigma_n \right)^{1/2} \left(   \M - \widetilde{\M}_n       \right)       \|_1 \\
    &\leq \| \phi_{s,a}^\top \left(  \Sigma_n\right)^{-1/2} \|_2 \| \left( \Sigma_n \right)^{1/2} \left(   \M - \widetilde{\M}_n       \right)       \|_{2,1} \\
    &\leq d' \sqrt{\beta_n} \| \phi_{s,a} \|_{\Sigma_n^{-1}} 
\end{align*}
Similarly,
\begin{align*}
    \| \phi_{s,a}^\top \left(   \M - \widetilde{\M}_n       \right) \|_2 &= \| \phi_{s,a}^\top \left(  \Sigma_n\right)^{-1/2} \left( \Sigma_n \right)^{1/2} \left(   \M - \widetilde{\M}_n       \right)       \|_2 \\
    &\leq \| \phi_{s,a}^\top \left(  \Sigma_n\right)^{-1/2} \|_2 \| \left( \Sigma_n \right)^{1/2} \left(   \M - \widetilde{\M}_n       \right)       \|_{F} \\
    &\leq \sqrt{d' \beta_n} \| \phi_{s,a}\|_{\Sigma^{-1}_n}
\end{align*}
\end{proof}

The following lemma holds, 
\begin{lemma}\label{lemma::recursive_bounding_Q}
Suppose that $\mathcal{E}$ holds. Then for $h \in [H]$, we have,
\begin{enumerate}
    \item Under Assumption~\ref{assumption::feature_regularity},
\begin{equation*}
    Q_{n,h}(s_{n,h}, a_{n,h}) - \left( r(s_{n,h}, a_{n,h}) + P(\cdot | s_{n,h}, a_{n,h})^\top V_{n,h+1}\right) \leq  2C_\psi H d' \sqrt{\beta_n} \| \phi_{n,h}\|_{\Sigma_n^{-1}}.
\end{equation*}

\item Under the stronger Assumption~\ref{assumption::stronger_feature_regularity},
\begin{equation*}
    Q_{n,h}(s_{n,h}, a_{n,h}) - \left( r(s_{n,h}, a_{n,h}) + P(\cdot | s_{n,h}, a_{n,h})^\top V_{n,h+1}\right) \leq  2C_\psi H \sqrt{d' \beta_n} \| \phi_{n,h}\|_{\Sigma_n^{-1}}. 
\end{equation*}

\end{enumerate}

\end{lemma}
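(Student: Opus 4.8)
The plan is to reduce the left-hand side to a single bilinear form in $\phi_{n,h}$ and $\Psi^\top V_{n,h+1}$, and then bound it by a product of dual norms. The first step is to rewrite the one-step transition term via the linear model: since $\Pr(\tilde s\mid s,a)=\phi(s,a)^\top\Mstar\psi(\tilde s)$, we have $P(\cdot\mid s_{n,h},a_{n,h})^\top V_{n,h+1}=\phi_{n,h}^\top\Mstar\,\Psi^\top V_{n,h+1}$. By the definition of the optimistic $Q$-function, $Q_{n,h}(s_{n,h},a_{n,h})=r(s_{n,h},a_{n,h})+\max_{\M\in\mathbf{U}_n^{1,2}}\phi_{n,h}^\top\M\,\Psi^\top V_{n,h+1}$, so letting $\bar{\M}$ denote a maximizer, the quantity to bound equals $\phi_{n,h}^\top(\bar{\M}-\Mstar)\,\Psi^\top V_{n,h+1}$. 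On the event $\mathcal{E}$, Lemma~\ref{lemma::concentration_M} guarantees $\Mstar\in\mathbf{U}_n^{1,2}$ (and $\Mstar\in\mathbf{U}_n^F$), so both $\bar{\M}$ and $\Mstar$ lie in the relevant confidence ball; this makes the bracket nonnegative (the optimism of Lemma~\ref{lemma::optimism}), and it remains only to upper bound its absolute value.

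Next I would apply H\"older's inequality on the $d'$-dimensional vectors: for part~1, $|\phi_{n,h}^\top(\bar{\M}-\Mstar)\,\Psi^\top V_{n,h+1}|\le\|\phi_{n,h}^\top(\bar{\M}-\Mstar)\|_1\,\|\Psi^\top V_{n,h+1}\|_\infty$, and for part~2, $\le\|\phi_{n,h}^\top(\bar{\M}-\Mstar)\|_2\,\|\Psi^\top V_{n,h+1}\|_2$. For the first factor I insert $\widetilde{\M}_n$ and use the triangle inequality, then invoke the estimation-error lemma stated immediately before Lemma~\ref{lemma::recursive_bounding_Q}: for any $\M\in\mathbf{U}_n^{1,2}$, $\|\phi_{n,h}^\top(\M-\widetilde{\M}_n)\|_1\le d'\sqrt{\beta_n}\,\|\phi_{n,h}\|_{\Sigma_n^{-1}}$, and for any $\M\in\mathbf{U}_n^F$, $\|\phi_{n,h}^\top(\M-\widetilde{\M}_n)\|_2\le\sqrt{d'\beta_n}\,\|\phi_{n,h}\|_{\Sigma_n^{-1}}$. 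Since both $\bar{\M}$ and $\Mstar$ are in the ball, this costs a factor $2$, giving $\|\phi_{n,h}^\top(\bar{\M}-\Mstar)\|_1\le 2d'\sqrt{\beta_n}\,\|\phi_{n,h}\|_{\Sigma_n^{-1}}$ and $\|\phi_{n,h}^\top(\bar{\M}-\Mstar)\|_2\le 2\sqrt{d'\beta_n}\,\|\phi_{n,h}\|_{\Sigma_n^{-1}}$ respectively.

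For the second factor I would use that $V_{n,h+1}$ is clipped into $[0,H]$ by $\Pi_{[0,H]}$, hence $\|V_{n,h+1}\|_\infty\le H$; then Assumption~\ref{assumption::feature_regularity} yields $\|\Psi^\top V_{n,h+1}\|_\infty\le C_\psi\|V_{n,h+1}\|_\infty\le C_\psi H$, while Assumption~\ref{assumption::stronger_feature_regularity} yields $\|\Psi^\top V_{n,h+1}\|_2\le C_\psi\|V_{n,h+1}\|_\infty\le C_\psi H$. Multiplying the two factors gives $2C_\psi Hd'\sqrt{\beta_n}\,\|\phi_{n,h}\|_{\Sigma_n^{-1}}$ under Assumption~\ref{assumption::feature_regularity} and $2C_\psi H\sqrt{d'\beta_n}\,\|\phi_{n,h}\|_{\Sigma_n^{-1}}$ under Assumption~\ref{assumption::stronger_feature_regularity}, which are precisely the two claimed inequalities.

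The argument is essentially bookkeeping, so there is no genuine obstacle; the one point requiring care is which confidence ball the optimistic $Q$-function is built from. Part~2 goes through only if the maximizer $\bar{\M}$ obeys the sharper $\ell_2$ estimation bound, i.e. if $\bar{\M}\in\mathbf{U}_n^F$ rather than merely $\bar{\M}\in\mathbf{U}_n^{1,2}$; since $\mathbf{U}_n^F\subseteq\mathbf{U}_n^{1,2}$ (by Cauchy--Schwarz, $\|\mathbf{B}\|_{2,1}\le\sqrt{d'}\,\|\mathbf{B}\|_F$) and Lemma~\ref{lemma::concentration_M} also certifies $\Mstar\in\mathbf{U}_n^F$, one may legitimately run the optimistic construction with $\mathbf{U}_n^F$ under Assumption~\ref{assumption::stronger_feature_regularity} while retaining optimism. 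Everything else reduces to the triangle inequality together with H\"older.
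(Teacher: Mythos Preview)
Your proposal is correct and follows essentially the same approach as the paper: write the Bellman residual as $\phi_{n,h}^\top(\bar{\M}-\Mstar)\Psi^\top V_{n,h+1}$, apply H\"older with the $\ell_1/\ell_\infty$ pairing (part~1) or the $\ell_2/\ell_2$ pairing (part~2), bound $\|\Psi^\top V_{n,h+1}\|$ via the relevant feature-regularity assumption together with $\|V_{n,h+1}\|_\infty\le H$, and bound $\|\phi_{n,h}^\top(\bar{\M}-\Mstar)\|$ by inserting $\widetilde{\M}_n$ and invoking the preceding estimation-error lemma twice. Your closing remark about needing $\bar{\M}\in\mathbf{U}_n^F$ for part~2 is a valid observation that the paper's own proof leaves implicit; the resolution you give (run the optimistic construction over $\mathbf{U}_n^F$, which still contains $\Mstar$ on $\mathcal{E}$ and is contained in $\mathbf{U}_n^{1,2}$) is exactly the right fix.
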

\begin{proof}
Let $\widetilde{\M} = \argmax_{ \M \in \mathbf{U}_{n}^{1,2}} \phi_{n,h}^\top \M \Psi^\top V_{n,h+1}$. We start by proving the result under Assumption~\ref{assumption::feature_regularity},
\begin{align*}
    Q_{n,h}( s_{n,h}, a_{n,h}) &- \left( r(s_{n,h}, a_{n,h}) + P(\cdot | s_{n,h}, a_{n,h})^\top V_{n,h+1}\right)  \\
    &= \phi_{n,h}^\top \left( \widetilde{\M} - \M_\star\right)\Psi^\top V_{n,h+1}\\
    &\leq \| \phi_{n,h}^\top \left( \widetilde{\M} - \M_\star \right) \|_1 \| \Psi^\top V_{n,h+1}\|_\infty \\
    &\stackrel{(i)}{\leq} C_\psi \| V_{n,h+1}\|_\infty \| \phi_{n,h}^\top \left( \widetilde{\M} - \M_\star \right) \|_1 \\
    &\stackrel{(ii)}{\leq} C_\psi H \| \phi_{n,h}^\top \left( \widetilde{\M} - \M_\star \right) \|_1 \\
    &\leq C_\psi H \left(\| \phi_{n,h}^\top \left( \widetilde{\M} - \widetilde{\M}_n \right) \|_1 + \| \phi_{n,h}^\top \left( \widetilde{\M}_n - \M_\star \right) \|_1  \right) \\
    &\stackrel{(ii)}{\leq} 2C_\psi H d' \sqrt{\beta_n} \| \phi_{n,h}\|_{\Sigma_n^{-1}}
\end{align*}
Inequality $(i)$ follows by Assumption~\ref{assumption::feature_regularity}, and inequality $(ii)$ holds because the range of $V_{n,h+1}$ is bounded by $H$. Finally $(iii)$ is a consequence of conditioning on $\mathcal{E}$ and suing the definition of $\mathbf{U}_n^{1,2}$. 

If instead Assumption~\ref{assumption::stronger_feature_regularity} holds, 

\begin{align*}
    Q_{n,h}( s_{n,h}, a_{n,h}) &- \left( r(s_{n,h}, a_{n,h}) + P(\cdot | s_{n,h}, a_{n,h})^\top V_{n,h+1}\right)  \\
    &= \phi_{n,h}^\top \left( \widetilde{\M} - \M_\star\right)\Psi^\top V_{n,h+1}\\
    &\leq \| \phi_{n,h}^\top \left( \widetilde{\M} - \M_\star \right) \|_2 \| \Psi^\top V_{n,h+1}\|_2 \\
    &\stackrel{(i)}{\leq} C_\psi \| V_{n,h+1}\|_\infty \| \phi_{n,h}^\top \left( \widetilde{\M} - \M_\star \right) \|_2 \\
    &\leq C_\psi H \| \phi_{n,h}^\top \left( \widetilde{\M} - \M_\star \right) \|_2 \\
    &\leq C_\psi H \left(\| \phi_{n,h}^\top \left( \widetilde{\M} - \widetilde{\M}_n \right) \|_2 + \| \phi_{n,h}^\top \left( \widetilde{\M}_n - \M_\star \right) \|_2  \right) \\
    &\stackrel{(ii)}{\leq} 2C_\psi H \sqrt{d' \beta_n} \| \phi_{n,h}\|_{\Sigma_n^{-1}}
\end{align*}

Inequality $(i)$ follows by Assumption~\ref{assumption::stronger_feature_regularity}, and inequality $(ii)$

\end{proof}

\subsection{Proof of Theorem~\ref{lemma::regret_guarantee_simple_matrixrl}}\label{section::proof_lemma_simple_matrixrl}

\theoremmatrixrlrefined*

\begin{proof}
Let's start by conditioning on $\mathcal{E}$. Let $\sqrt{\gamma_{n}} = 2C_\psi H d' \beta_n$ if Assumption~\ref{assumption::feature_regularity} holds and $\sqrt{\gamma_n }= 2C_\psi H \sqrt{d'} \beta_n$ if the stronger Assumption~\ref{assumption::stronger_feature_regularity} holds instead. 

Recall that $R(NH) = \sum_{n=1}^N  V_1^\star( s_{n,1} ) - V_1^{\pi_n}(s_{n,1})$. The optimism property of Lemma~\ref{lemma::optimism} implies that,
\begin{align*}
    R(NH) &\stackrel{(i)}{\leq} \sum_{n=1}^N V_{n,1}(s_{n,1}) - V^{\pi_n}_1(s_{n,1}) \\
    &\stackrel{(ii)}{\leq}\sum_{n=1}^N \sqrt{\gamma_n} \| \phi_{n,1}\|_{\Sigma_n^{-1}} + P(\cdot | s_{n,h}, a_{n,h})^\top \left( V_{n,2}  - V^{\pi_n}_2 \right) \\
    &= \sum_{n=1}^N \sqrt{\gamma_n} \| \phi_{n,1}\|_{\Sigma_n^{-1}} + \underbrace{P(\cdot | s_{n,h}, a_{n,h})^\top \left( V_{n,2}  - V^{\pi_n}_2 \right) - \left(  V_{n,2}(s_{n,2}) - V_{2}^{\pi_n}(s_{n,2}) \right)}_{\delta_{n,2}} +  \\
    &\qquad \quad V_{n,2}(s_{n,2}) - V_{2}^{\pi_n}(s_{n,2})
\end{align*}
Inequality $(i)$ holds by Optimism. Inequality $(ii)$ holds by Lemma~\ref{lemma::recursive_bounding_Q}. A recursive application of this decomposition yields,
\begin{equation*}
    R(NH) \leq \sum_{n=1}^N \sum_{h=1}^H \sqrt{\gamma_n} \| \phi_{n,h}\|_{\Sigma_n^{-1}}  + \delta_{n, h} \leq \sqrt{\gamma_N} \times \underbrace{\sum_{n=1}^N \sum_{h=1}^H  \| \phi_{n,h}\|_{\Sigma_n^{-1}} }_{\mathbf{I}} + \delta_{n, h} 
\end{equation*}

Where the right inequality holds because $\gamma_n$ is increasing (in $n$). The sum  $\sum_{n=1}^N \sum_{h=1}^H \delta_{n,h}$ can easily be bounded by invoking Lemma~\ref{lemma::matingale_concentration_anytime} by observing that $| \delta_{n,h} | \leq 4H$ for all $n,h$. Thus, with probability at least $1-\delta$ for all $n$,

\begin{equation*}
    \sum_{n=1}^N \sum_{h=1}^H \delta_{n,h} \leq 8H \sqrt{NH\log\left( \frac{6\log NH}{\delta}\right)  }.
 \end{equation*}

We proceed to bound $\mathbf{I}$.  By Corollary~\ref{corollary::transform_A_n_to_A_n_h},  equation~\ref{equation::upper_bound_sigma_n}, 
\begin{equation*}
\mathbf{I} \leq \sum_{n=1}^N \sum_{h=1}^H 2\| \phi_{n,h}\|_{\Sigma_{n,h}^{-1}} +\frac{2L_\phi Hd}{\sqrt{\lambda}} \log\left( 1+\frac{  N H L_\phi^2 }{\lambda d}\right).
\end{equation*}

By Lemma~\ref{lemma:det_lemma}, 

\begin{equation*}
   \sum_{n=1}^N \sum_{h=1}^H 2\| \phi_{n,h}\|_{\Sigma_{n,h}^{-1}} \leq  2\sqrt{2NHd\log\left( 1+\frac{NHL_\phi^2}{\lambda d}\right)}.
\end{equation*}

We then conclude that with probability at least $1-\delta$, whenever $\mathcal{E}$ holds,

\begin{align*}
    R(NH) &\leq  8H \sqrt{NH\log\left( \frac{6\log NH}{\delta}\right)  } +  2\sqrt{2\gamma_NNHd\log\left( 1+\frac{NHL_\phi^2}{\lambda d}\right)}    +   \\
    &\quad \quad 2L_\phi Hd\sqrt{\frac{\gamma_N}{\lambda}}  \log\left( 1+\frac{  N L_\phi^2 }{\lambda d}\right)
\end{align*}

Since $\mathcal{E}$ holds with probability at least $1-\delta$ the result follows by a simple union bound.

\end{proof}

\section{Proofs of Section~\ref{section::shared_structure}}\label{appendix:proofs}

Recall that for an isolated task in order to recover an estimator $\widetilde{\M}$ of $\Mstar$ given $n-1$ trajectories of horizon $H$ we run $d'$ independent Ridge Regression problems (one per column) as defined by Equation~\ref{eq::least_squares_objective}.
\begin{equation*}
   \widetilde{\M}_n[:, i] = \argmin_{\M} \widetilde{\mathcal{L}}_n( \M ).
\end{equation*}
Where 
\begin{align*}
\widetilde{\mathcal{L}}_n( \M ) &=   \sum_{n'<n, h\leq H} \left\| \psi^\top_{n', h}\K_{\psi}^{-1}[:,i] - \phi_{n', h}^\top \M[:, i] \right\|_2^2  + \lambda \left\| \M[:,i] \right\|_2^2    
\end{align*}

In order to avoid notational clutter let's call $\widetilde{\psi}_{n', h}^{(p)} = \mathbf{K}_{\psi}^{-1}\psi_{n', h}^{(p)} $. Under this notation, problem~\ref{eq::multitask_least_squares_objective} can be rewritten as,

\begin{align}\label{equation::simplified_multitask_least_squares_objective_appendix}. 
\argmin_{\mathbf{B} \in \mathcal{P}_{d,r}, \mathbf{A}^{(1)}, \cdots, \mathbf{A}^{(P)} \in \mathbb{R}^{r \times d'} }  F(\mathbf{B}, \mathbf{A}^{(1)}, \cdots, \mathbf{A}^{(P)}) \qquad \qquad \qquad\qquad \qquad \qquad \qquad & \\
\qquad  F(\mathbf{B}, \mathbf{A}^{(1)}, \cdots, \mathbf{A}^{(P)}) = \sum_{p \in [P]}  \lambda \| \mathbf{A}^{(p)} \|_F^2 +
 \sum_{n'<n, h\leq H} \| \widetilde{\psi}^{(p)}_{n', h} - \left(\mathbf{B} \mathbf{A}^{(p)} \right)^\top \phi^{(p)}_{n', h} \|_2^2& \notag.
\end{align}

We will make use of the following standard bound on the covering number of the $l_2$ ball.

\begin{lemma}\label{lemma::covering_number_unit_ball}
For any $\epsilon \in(0,1]$ the $\epsilon-$covering number of the Euclidean ball in $\mathbb{R}^d$ with radius $r > 0$ i.e.. $\{ \mathbf{x} \in \mathbb{R}^d : \| \mathbf{x} \|_2 \leq r \} $ is upper bounded by $\left(\frac{1+2r}{\epsilon}\right)^d$.
\end{lemma}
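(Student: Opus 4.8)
The statement to prove is the standard volumetric covering-number bound for the Euclidean ball: the $\epsilon$-covering number of $\{\mathbf{x}\in\mathbb{R}^d:\|\mathbf{x}\|_2\le r\}$ is at most $\left(\frac{1+2r}{\epsilon}\right)^d$. The plan is to use the classical greedy (maximal-packing) argument together with a volume comparison. First I would take a maximal set $\{\mathbf{x}_1,\dots,\mathbf{x}_M\}$ of points inside the ball $B(0,r)$ that are pairwise at distance strictly greater than $\epsilon$; such a maximal packing exists by a greedy construction (or Zorn's lemma for the non-finite case, though finiteness will fall out of the volume bound). By maximality, every point of $B(0,r)$ lies within distance $\epsilon$ of some $\mathbf{x}_i$ — otherwise we could add it to the packing — so $\{\mathbf{x}_1,\dots,\mathbf{x}_M\}$ is an $\epsilon$-net, and hence the covering number is at most $M$.

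Next I would bound $M$ by volumes. Since the points $\mathbf{x}_i$ are pairwise more than $\epsilon$ apart, the open balls $B(\mathbf{x}_i,\epsilon/2)$ are pairwise disjoint. Each such ball is contained in the enlarged ball $B(0, r+\epsilon/2)$ because $\|\mathbf{x}_i\|_2\le r$. Comparing Lebesgue measures and using that the volume of a Euclidean ball of radius $\rho$ in $\mathbb{R}^d$ scales as $\rho^d$ times a dimension-dependent constant $\omega_d$, disjointness gives
\begin{equation*}
M\cdot\omega_d\left(\frac{\epsilon}{2}\right)^d \;\le\; \omega_d\left(r+\frac{\epsilon}{2}\right)^d,
\end{equation*}
so $M\le\left(\frac{r+\epsilon/2}{\epsilon/2}\right)^d=\left(\frac{2r+\epsilon}{\epsilon}\right)^d$. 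Finally, since $\epsilon\le 1$ we have $2r+\epsilon\le 1+2r$ (using $\epsilon\le 1$), hence $M\le\left(\frac{1+2r}{\epsilon}\right)^d$, which is the claimed bound.

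There is no real obstacle here — the only mild subtlety is making sure the packing argument is phrased so that it works whether or not $B(0,r)$ is interpreted as open or closed, and being careful that the $\epsilon/2$-balls around packing points are genuinely disjoint (this needs the packing distance to be $>\epsilon$, not $\ge\epsilon$, or equivalently one uses closed/open balls consistently). The step that carries the actual content is the volume comparison, and it is entirely routine. I would also remark that the slightly lossy final step (replacing $2r+\epsilon$ by $1+2r$) is harmless and only invoked to match the clean form of the statement; one could alternatively state the sharper bound $\left(\frac{2r+\epsilon}{\epsilon}\right)^d$ and note $\left(1+\frac{2r}{\epsilon}\right)^d$ as an even simpler upper bound valid for all $\epsilon>0$.
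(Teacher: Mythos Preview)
Your argument is correct and is the standard maximal-packing/volume-comparison proof. Note that the paper does not actually supply its own proof of this lemma: it simply cites an external reference (Lemma~D.1 of \cite{du2021bilinear}), so there is nothing to compare against beyond observing that your self-contained derivation matches the usual textbook treatment.
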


(See for example Lemma D.1 in~\cite{du2021bilinear} )

We will also make use of the following bound on the covering number of the space of $(\mathbf{B}, \mathbf{A}^{(1)}, \cdots, \mathbf{A}^{(p)}\})$ matrices under the norm,
\begin{small}
\begin{align*}
    \| (\mathbf{B}, \mathbf{A}^{(1)}, \cdots, \mathbf{A}^{(p)}\}) - (\bar{\mathbf{B}}, \bar{\mathbf{A}}^{(1)}, \cdots, \bar{\mathbf{A}}^{(p)}\}) \| = 
    \max\left( \| \mathbf{B} - \bar{\mathbf{B}}\|, \| \mathbf{A}^{(1)} - \bar{\mathbf{A}}^{(1)}\| , \cdots,  \| \mathbf{A}^{(p)} - \bar{\mathbf{A}}^{(p)}\|\right) 
\end{align*}
\end{small}

\begin{lemma}\label{lemma::cover_space_matrices_w_projection}
For any $\epsilon \in (0,1)$ the $\epsilon-$covering number under the norm computed by taking the $l_\infty$ norm over $l_2$ norms of the set of matrices $(\mathbf{B}, \mathbf{A}^{(1)}, \cdots, \mathbf{A}^{(p)})$ such that $\mathbf{B} \in \mathbb{R}^{d\times r}$ is a projection matrix with $r$ orthonormal columns and $\mathbf{A}^{(p)} \in \mathbb{R}^{r \times d'}$ are such that $\M^{(p)} = \mathbf{B} \mathbf{A}^{(p)}$ and $\| \M[:, i]\|_2 \leq S$ for all $i \in [d']$ is upper bounded by
\begin{equation*}
    \left( \frac{3}{\epsilon}\right)^{d \times r} \left( \frac{1 + 2S}{\epsilon}  \right)^{r\times d' \times P}
\end{equation*}
\end{lemma}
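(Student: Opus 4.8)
The plan is to build the cover as a product of two independent covers: one for the projection matrix $\mathbf{B}$, and one for the tuple $(\mathbf{A}^{(1)},\dots,\mathbf{A}^{(P)})$ of coefficient matrices. Since the norm on the tuple $(\mathbf{B},\mathbf{A}^{(1)},\dots,\mathbf{A}^{(P)})$ is the $\ell_\infty$ (max) over the individual $\ell_2$/Frobenius norms of the blocks, an $\epsilon$-cover of the whole space is obtained by taking the Cartesian product of an $\epsilon$-cover of the $\mathbf{B}$-part with $\epsilon$-covers of each $\mathbf{A}^{(p)}$-part, and the covering number is the product of the individual covering numbers. So the statement reduces to bounding (i) the $\epsilon$-covering number of the set of $d\times r$ matrices with orthonormal columns, and (ii) the $\epsilon$-covering number of the set of $\mathbf{A}^{(p)}\in\R^{r\times d'}$ arising as a factor with $\|\mathbf{B}\mathbf{A}^{(p)}[:,i]\|_2\le S$.

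For part (i), I would note that a $d\times r$ matrix with orthonormal columns has all entries bounded and in particular lies in the Frobenius ball of radius $\sqrt{r}$ in $\R^{d\times r}\cong\R^{dr}$; actually each column is a unit vector, so the matrix lies in a product of $r$ unit spheres, hence in the Frobenius ball of radius $\sqrt r$. Applying Lemma~\ref{lemma::covering_number_unit_ball} with ambient dimension $dr$ and radius $\sqrt r$ would give a bound of $\bigl(\tfrac{1+2\sqrt r}{\epsilon}\bigr)^{dr}$, which is slightly worse than the claimed $\bigl(\tfrac{3}{\epsilon}\bigr)^{dr}$. To get the cleaner constant I would instead cover each of the $r$ unit columns separately: by Lemma~\ref{lemma::covering_number_unit_ball} with $r=1$ (radius one, ambient dimension $d$), each unit column admits an $\epsilon$-cover of size $\bigl(\tfrac{3}{\epsilon}\bigr)^{d}$, and taking the product over the $r$ columns yields a cover of $\mathcal{P}_{d,r}$ (as a subset of all $d\times r$ matrices, before even using orthonormality of the \emph{cover} points — we only need the cover to be $\epsilon$-dense, not itself to consist of projections) of size $\bigl(\tfrac{3}{\epsilon}\bigr)^{dr}$. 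Since the $\ell_\infty$-over-$\ell_2$ column norm of a $d\times r$ matrix is the max over columns, $\epsilon$-closeness columnwise implies $\epsilon$-closeness in this norm, so this is a valid cover.

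For part (ii), the key observation is that if $\mathbf{B}$ has orthonormal columns then $\|\mathbf{A}^{(p)}[:,i]\|_2=\|\mathbf{B}\mathbf{A}^{(p)}[:,i]\|_2\le S$ for each column $i\in[d']$, so $\mathbf{A}^{(p)}$ lies in the set of $r\times d'$ matrices whose every column has $\ell_2$ norm at most $S$. Such a matrix lies in the Frobenius ball of radius $S\sqrt{d'}$; but again, to match the stated bound, I would cover each of the $d'$ columns (each living in the radius-$S$ ball in $\R^r$) separately, obtaining by Lemma~\ref{lemma::covering_number_unit_ball} an $\epsilon$-cover of size $\bigl(\tfrac{1+2S}{\epsilon}\bigr)^{r}$ per column, hence $\bigl(\tfrac{1+2S}{\epsilon}\bigr)^{rd'}$ for one $\mathbf{A}^{(p)}$, and $\bigl(\tfrac{1+2S}{\epsilon}\bigr)^{rd'P}$ for the product over $p\in[P]$. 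Multiplying the two parts gives the claimed bound $\bigl(\tfrac{3}{\epsilon}\bigr)^{dr}\bigl(\tfrac{1+2S}{\epsilon}\bigr)^{rd'P}$.

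The one genuine subtlety — and the step I would be most careful about — is the consistency of the norm used to glue the covers together with the norms used on the pieces: I must check that for the tuple norm $\max(\|\Delta\mathbf{B}\|,\|\Delta\mathbf{A}^{(1)}\|,\dots,\|\Delta\mathbf{A}^{(P)}\|)$, where each block norm is itself an $\ell_\infty$-over-columns-of-$\ell_2$ norm, the product of columnwise $\epsilon$-covers is indeed $\epsilon$-dense; this holds because all the combining operations are maxima, so a uniform columnwise bound of $\epsilon$ propagates to an overall bound of $\epsilon$ without any dimension-dependent blowup. A secondary point is that the cover points need not themselves satisfy the orthonormality or norm constraints (we are covering the constraint set by points in the ambient space), which is standard and harmless for the subsequent union-bound arguments; if one wanted the cover to lie inside the set, a routine doubling of the radius handles it, but that is not needed here.
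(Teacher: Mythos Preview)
Your proposal is correct and follows essentially the same approach as the paper: cover the columns of $\mathbf{B}$ (each a unit vector) and the columns of each $\mathbf{A}^{(p)}$ (each of $\ell_2$ norm at most $S$ by the isometry property of $\mathbf{B}$) separately via Lemma~\ref{lemma::covering_number_unit_ball}, then take products. You are in fact a bit more careful than the paper about the norm-consistency step and about the cover points not needing to lie in the constraint set, but the argument is the same.
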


\begin{proof}

Observe that $\mathbf{B} \in \mathbb{R}^{d \times r}$. The $r$ columns of $\mathbf{B}$ form an orthonormal set. Observe that $\mathbf{B} \in \left\{ \mathbf{X} \in \mathbb{R}^{d \times r } \text{ s.t. }    \| \mathbf{X}[:, i] \|_2 \leq 1 \quad \forall i \in [r]      \right\}$. The results of Lemma~\ref{lemma::covering_number_unit_ball} imply that the covering number of $\left\{ \mathbf{X} \in \mathbb{R}^{d \times r } \text{ s.t. }    \| \mathbf{X}[:, i] \|_2 \leq 1 \quad \forall i \in [r]      \right\} $ is upper bounded by $\left( \frac{3}{\epsilon}\right)^{d \times r}$. 

Recall we are working under the assumption that for all columns $i \in [d']$ of $\M^{(p)} = \mathbf{B}\mathbf{A}^{(p)}$, we have $\| \M^{(p)}[:, i]\|_2\leq S$ for some (known) $S > 0$.

This implies the rows of $\mathbf{A}^{(p)}$ satisfy $\|\mathbf{A}^{(p)}[i, :]\|_2 \leq S$. This in turn implies an upper bound for the $\epsilon-$covering number of the set $\left(\mathbf{A}^{(1)}, \cdots, \mathbf{A}^{(P)} \right)$ of $\left( \frac{1 + 2S}{\epsilon}  \right)^{r\times d' \times P}$.

Putting these together yields an upper bound of $\left( \frac{3}{\epsilon}\right)^{d \times r} \left( \frac{1 + 2S}{\epsilon}  \right)^{r\times d' \times P}$ for the $\epsilon-$covering number of the space of $(\mathbf{B}, \mathbf{A}^{(1)}, \cdots, \mathbf{A}^{(p)}\})$ matrices.

\end{proof}

\subsection{Proof of Lemma~\ref{lemma:concentration_shared}}\label{section::proof_shared_representation_ridge}
\concentrationsharedlemma*

\begin{proof}
For readability, recall that

\begin{align*}
\beta'_{nH}(\delta) &= 1 + L_\phi S + \frac{b^2}{2R^2} + 
(12R^2 +  b) \Big( 2 \ln \ln \left(2 \left(nHP\right)\right) + 3 + \ln\frac{1}{\delta} +\\
&\quad (dr + rd'P)\left( \ln(5S) + \ln{nHP} + \ln{2RL_\phi} \right) \Big)
\end{align*}

By definition $\widetilde{\mathbf{B}}_n, \widetilde{\mathbf{A}}_n^{(1)}, \cdots, \widetilde{\mathbf{A}}_n^{(P)}$ satisfy:

\begin{align*}
    \sum_{p \in [P]} \lambda \| \widetilde{\mathbf{A}}_n^{(p)} \|_F^2 + \sum_{n'<n, h\leq H} \| \widetilde{\psi}^{(p)}_{n', h} - \left(\widetilde{\mathbf{B}}_n \widetilde{\mathbf{A}}_n^{(p)} \right)^\top \phi^{(p)}_{n', h} \|_2^2 \qquad \qquad \qquad \qquad\qquad\qquad\qquad \\
    \qquad \leq \sum_{p \in [P]} \lambda \| \mathbf{A}_\star^{(p)} \|_F^2 + \sum_{n'<n, h\leq H} \| \widetilde{\psi}^{(p)}_{n', h} - \left(\mathbf{B}_\star \mathbf{A}_\star^{(p)} \right)^\top \phi^{(p)}_{n', h} \|_2^2 .
\end{align*}

Let's write $\widetilde{\psi}_{n', h}^{(p)} = \left(\mathbf{B}_\star \mathbf{A}_\star^{(p)} \right)^\top\phi^{(p)}_{n',h} + D^{(p)}_{n', h}$. The random variable $D^{(p)}_{n', h}(i) =  \left(\psi^{(p)}_{n',h}\right)^\top\mathbf{K}_\psi^{-1}[:, i] - \left( \phi^{(p)}_{n',h}\right)^\top\left(\mathbf{B}_\star \mathbf{A}_\star^{(p)}\right)^\top [:,i] \in \mathbb{R}^{d'}$ is conditionally zero mean. Substituting this definition in the formula above and rearranging the resulting terms,

\begin{align}
     \sum_{p \in [P]} \lambda \|  \widetilde{\mathbf{A}}_n^{(p)} \|_F^2 +  \sum_{n'<n, h\leq H}  \| \left(\mathbf{B}_\star \mathbf{A}_\star^{(p)} - \widetilde{\mathbf{B}}_n \widetilde{\mathbf{A}}_n^{(p)} \right)^\top \phi_{n', h}^{(p)}\|^2 \leq \notag \qquad \qquad \qquad \qquad \qquad \\
     \sum_{p \in [P]} \lambda \| \mathbf{A}_\star^{(p)}\|_F^2 + \underbrace{\sum_{n' < n, h \leq H}2 \left\langle D_{n',h}^{(p)} , \left( \widetilde{\mathbf{B}}_n\widetilde{\mathbf{A}}_n^{(p)} - \mathbf{B}_\star \mathbf{A}_\star^{(p)} \right)^\top \phi_{n', h}^{(p)} \right\rangle}_{\mathrm{I}(p)} \label{equation::upper_bound_regularized}
\end{align}

In order to bound the last expression we make use of a covering argument. %

Let $p \in [P]$ and let's focus on obtaining a high probability bound of term $\mathrm{I}(p)$. Let $\epsilon \in (0,1)$ be a number to be defined later. Let's pick a \emph{fixed} element $(\mathbf{B}, \mathbf{A}^{(1)}, \cdots, \mathbf{A}^{(P)})$.

Let's re-index time and instead use tuples $n,h$ with $n \in \mathbb{N}$ and $h \in [H]$ with a lexicographic ordering and use the natural task ordering provided by the task indexes and let's define the martingale difference sequence $\{ Z^{(p)}_{n,h} \}_{n,h\leq H, p\in [P]}$  where $Z^{(p)}_{n,h} = \langle D_{n,h}^{(p)}, \left( \mathbf{B} \mathbf{A}^{(p)} - \mathbf{B}_\star \mathbf{A}_\star^{(p)} \right)^\top \phi_{n,h}^{(p)} \rangle$ for all $p \in [P]$. Recall that $D^{(p)}_{n', h} \in \mathbb{R}^{d'}$ and taht
\begin{equation*}
    | D^{(p)}_{n',h}(i) | \leq  \| \mathbf{K}_\psi^{-1} \| L_\psi + S L_\phi  := R.  \quad \forall n' \in [N], h \in [H], i\in [d'].
\end{equation*}
Observe that $| Z_{n,h}^{(p)}| \leq 2R d' S L_{\phi} = b$ for all $p \in [P]$. %

It is easy to see that $\mathbb{E}\left[ Z^{(p)}_{n,h} | \mathcal{F}_{n,h-1}\right] = 0$ whenever $h > 1$ and $\mathbb{E}\left[ Z^{(p)}_{n,1} | \mathcal{F}_{n-1,H}\right] = 0$ when $h=1$ (and for all $p \in [P]$). For simplicity we use the notation $\{(n', h') \leq (n,h) \}$ to denote all the integer pairs $n \in \mathbb{N}$ and $h \in [H]$ that are less than $(n,h)$ in the lexicographic order. We use the notation $(n', h'-1)$ to denote the preceding point in the lexicographic order to the pair $(n', h')$. This will also hold true for the boundary points so that $(n', 1-1) = (n'-1, H)$.

A simple application of the Cauchy-Schwartz inequality implies the second moments of $\{ Z_{n,h}^{(p)}\}_{n,h \leq h}$ satisfy the following bound for all $p \in [P]$

\begin{align}
 \mathbb{E}\left[   \left(Z^{(p)}_{n,h}\right)^2 | \mathcal{F}_{n, h-1}    \right]  \leq R^2 \left\| \left(\mathbf{B} \mathbf{A}^{(p)} - \mathbf{B}_\star \mathbf{A}_\star^{(p)} \right)^\top \phi_{n', h}^{(p)}\right\|^2  \label{equation::upper_bound_variance1}
\end{align}

Where $R = \| \mathbf{K}_\psi^{-1} \| L_\psi + S L_\phi$ is the probability one upper bound on the magnitude of $D$ defined in Lemma~\ref{lemma::concentration_M}. We will apply the Empirical Bernstein bound of Lemma~\ref{lem:uniform_emp_bernstein} to the martingale sequence $\{ Z^{(p)}_{n,h} \}_{n,h\leq H, p\in [P]}$. 

Let $W_{n,h} = \sum_{(n', h') \leq (n,h) } \sum_{p \in [P] } \mathrm{Var}_{n'-1, h'-1}(  Z^{(p)}_{n',h'}   )  $ be the variance process. As a consequence of Equation~\ref{equation::upper_bound_variance1} we conclude that

\begin{align}
    W_{n,h} &\stackrel{(i)}{\leq} \sum_{(n', h') \leq (n,h) } \sum_{p \in [P]} \mathbb{E}\left[  \left( Z^{(p)}_{n',h'}\right)^2 | \mathcal{F}_{n', h'-1}    \right]  \notag \\
    &\leq \sum_{(n', h') \leq (n,h) } \sum_{p \in [P] } R^2 \left\| \left(\mathbf{B} \mathbf{A}^{(p)} - \mathbf{B}_\star \mathbf{A}_\star^{(p)} \right)^\top \phi_{n', h}^{(p)}\right\|^2 \label{equation::upper_bound_variance}
\end{align}

Where inequality $(i)$ holds because the variance is upper bounded by the second moment. We are ready to use the bound of Lemma~\ref{lem:uniform_emp_bernstein} (with $c= b$). By equating $S_{n,h} = \sum_{(n', h') \leq (n,h)} \sum_{p \in [P]} Z^{(p)}_{n',h'}$ in the definition of the problem and using the upper bound on $W_{n,h}$ from Equation~\ref{equation::upper_bound_variance} yields that for all $\delta \in (0,1)$ with probability at least $1-\delta$ for all $(n,h) \in \mathbb{N} \times [H]$ and all $p \in [P]$. 

\begin{align}
    S_{n,h} &\leq  1.44 \sqrt{\max(W_{n,h} , m) \left( 1.4 \ln \ln \left(2 \left(\max\left(\frac{W_{n,h}}{m} , 1 \right)\right)\right) + \ln \frac{5.2}{\delta}\right)} \notag\\
   & \qquad + 0.41 c  \left( 1.4 \ln \ln \left(2 \left(\max\left(\frac{W_{n,h}}{m} , 1\right)\right)\right) + \ln \frac{5.2}{\delta}\right) \notag \\
   &\stackrel{(i)}{\leq} \frac{1}{4R^2} \max(W_{n,h}, m) + (2.5*4R^2 + 0.41 b) \left( 1.4 \ln \ln \left(2 \left(\max\left(\frac{W_{n,h}}{m} , 1\right)\right)\right) + \ln \frac{5.2}{\delta}\right) \notag \\
   &\stackrel{(ii)}{\leq} \frac{1}{4R^2} \max(W_{n,h}, b^2)  + (2.5 *4R^2 + 0.41 b) \left( 1.4 \ln \ln \left(2 \left(nHP\right)\right) + \ln \frac{5.2}{\delta}\right) \notag \\
   &\leq  \frac{W_{n,h}}{4R^2}  + \frac{b^2}{2R^2} + (2.5*4R^2 + 0.41 b) \left( 1.4 \ln \ln \left(2 \left(nHP\right)\right) + \ln \frac{5.2}{\delta}\right) \label{equation::bounding_S_W}
\end{align}

Where inequality $(i)$ holds because for all $\alpha,\beta \in \mathbb{R}$, $\alpha \beta \leq \alpha^2 + \beta^2$, and $c = b$. Inequality $(ii)$ holds by setting $m = b^2$ (recall $b = 2R d' S L_{\psi}$). Observe now that by Eequation~\ref{equation::upper_bound_variance} we have that 

\begin{equation*}
    \frac{W_{n,h}}{4R^2} \leq \frac{1}{4} \sum_{(n', h') \leq (n,h) } \sum_{p \in [P]}  \left\| \left(\mathbf{B} \mathbf{A}^{(p)} - \mathbf{B}_\star \mathbf{A}_\star^{(p)} \right)^\top \phi_{n', h}^{(p)}\right\|^2.
\end{equation*}

Let $\epsilon' > 0$ and define $\epsilon = \frac{\epsilon'}{2RL_\phi}$.

Notice that for any $(\mathbf{B}, \mathbf{A}^{(1)}, \cdots, \mathbf{A}^{(P)})$ there exists an element of the cover $( \bar{\mathbf{B}}, \bar{\mathbf{A}}^{(1)}, \cdots, \bar{\mathbf{A}}^{(P)})$ such that

 $$\left\| \left(\mathbf{B} \mathbf{A}^{(p)} - \bar{\mathbf{B}}\bar{\mathbf{A}}^{(p)} \right)^\top \phi_{n', h}^{(p)}\right\| \leq \epsilon' \qquad \forall p \in [P]$$ 

 And therefore 

\begin{equation}
\left| \left\| \left(\mathbf{B} \mathbf{A}^{(p)} -\mathbf{B}_\star \mathbf{A}_\star^{(p)} \right)^\top \phi_{n', h}^{(p)}\right\| -  \left\| \left(\bar{\mathbf{B}} \bar{\mathbf{A}}^{(p)} -\mathbf{B}_\star \mathbf{A}_\star^{(p)} \right)^\top \phi_{n', h}^{(p)}\right\| \right| \leq \epsilon' \qquad \forall p \in [P] \label{equation::triangle_inequality_covering}
\end{equation}

Let $\epsilon' = \frac{1}{nHP}$, $\delta \in (0,1)$ and $\delta = \frac{\delta}{\left( \frac{3}{\epsilon}\right)^{d \times r} \left( \frac{1 + 2S}{\epsilon}  \right)^{r\times d' \times P}}$. Denote as $\widebar{\mathbf{B}}_n, \widebar{\mathbf{A}}_n^{(1)}, \cdots, \widebar{\mathbf{A}}_n^{(P)}$ as the point in the covering that is closest to the \emph{random} point $\widetilde{\mathbf{B}}_n, \widetilde{\mathbf{A}}_n^{(1)}, \cdots, \widetilde{\mathbf{A}}_n^{(P)}$. We can conclude that with probability at least $1-\delta$ 

\begin{align*}
    \sum_{p \in [P]} \mathrm{I}(p) &= \sum_{n' < n, h \leq H}  \sum_{p\in [P]} 2 \left\langle D_{n',h}^{(p)} , \left( \widetilde{\mathbf{B}}_n\widetilde{\mathbf{A}}_n^{(p)} - \mathbf{B}_\star \mathbf{A}_\star^{(p)} \right)^\top \phi_{n', h}^{(p)} \right\rangle \\
     &\stackrel{(i)}{\leq} \sum_{n' < n, h \leq H}\sum_{p\in[P]} 2 \left\langle D_{n',h}^{(p)} , \left( \widebar{\mathbf{B}}_n\widebar{\mathbf{A}}_n^{(p)} - \mathbf{B}_\star \mathbf{A}_\star^{(p)} \right)^\top \phi_{n', h}^{(p)} \right\rangle + 2nHRL_\phi P \epsilon \\
     &\stackrel{(ii)}{\leq}\frac{1}{2}  \sum_{(n', h') \leq (n,H) }  \sum_{p \in [P]} \left\| \left(\widebar{\mathbf{B}}_n \widebar{\mathbf{A}}^{(p)}_n - \mathbf{B}_\star \mathbf{A}_\star^{(p)} \right)^\top \phi_{n', h}^{(p)}\right\|^2 + 2nHRL_\phi P \epsilon + \\
     &\qquad \frac{b^2}{2R^2} + (2.5*4R^2 + 0.41 b) \left( 1.4 \ln \ln \left(2 \left(nHP\right)\right) + \ln \frac{5.2}{\delta}\right)\\
     &\stackrel{(iii)}{\leq}\frac{1}{2}  \sum_{(n', h') \leq (n,H) } \sum_{p\in[P]}  \left\| \left(\widetilde{\mathbf{B}}_n \widetilde{\mathbf{A}}_n^{(p)} - \mathbf{B}_\star \mathbf{A}_\star^{(p)} \right)^\top \phi_{n', h}^{(p)}\right\|^2 + 2nHRL_\phi P \epsilon + nH L_\phi S P\epsilon' + \\
      & \qquad \frac{b^2}{2R^2} + (2.5*4R^2 + 0.41 b) \left( 1.4 \ln \ln \left(2 \left(nHP\right)\right) + \ln \frac{5.2}{\delta}\right) \\
     &= \frac{1}{2}  \sum_{(n', h') \leq (n,H) }  \sum_{p\in[P]} \left\| \left(\widetilde{\mathbf{B}}_n \widetilde{\mathbf{A}}_n^{(p)} - \mathbf{B}_\star \mathbf{A}_\star^{(p)} \right)^\top \phi_{n', h}^{(p)}\right\|^2 + 1 + L_\phi S + \\
     &\qquad  \frac{b^2}{2R^2} + (2.5*4R^2 + 0.41 b) \left( 1.4 \ln \ln \left(2 \left(nHP\right)\right) + \ln \frac{5.2}{\delta}\right).
\end{align*}

Inequality $(i)$ holds because $\widebar{\mathbf{B}}_n$ and $\widetilde{\mathbf{B}}_n$ are $\epsilon$ close in the norm resulting of computing the $l_\infty$ norm over the $l_2$ norms of collections of matrices of the form  $(\mathbf{B}, \mathbf{A}_1, \cdots, \mathbf{A}_P) $ and holds with probability $1$. Inequality $(ii)$ holds as a consequence of Equation~\ref{equation::bounding_S_W} and an application of the union bound over the $\epsilon-$cover. Inequality $(iii)$ is a consequence of Equation~\ref{equation::triangle_inequality_covering}. Where the last equality holds because $\epsilon' = \frac{1}{nHP}$ and $\epsilon = \frac{\epsilon'}{2RL_\phi}$ (recall $b = 2R d' S L_{\psi}$). 

Notice that $\delta  = \frac{\delta}{\left( \frac{3}{\epsilon}\right)^{d \times r} \left( \frac{1 + 2S}{\epsilon}  \right)^{r\times d' \times P}  } \leq \frac{\delta}{\left( \frac{3 + 2S}{\epsilon}  \right)^{d\times r + r\times d' \times P}}$ and therefore $\ln \left( \frac{1}{\delta} \right) \leq \ln(1/\delta) + \log(\frac{3+2S}{\epsilon})*(dr + rd' P) $. Having applied the union bound over the $\epsilon$ cover over tuples $(\mathbf{B}, \mathbf{A}^{(1)}, \cdots, \mathbf{A}^{(P)})$ in the previous discussion and plugging in the definition of $\delta$ in the display above yields that with probability at least $1-\delta$ 
\begin{align*}
  \sum_{p\in[P]}  I(p) &\leq \frac{1}{2}  \sum_{(n', h') \leq (n,H) } \sum_{p\in[P]} \left\| \left(\widetilde{\mathbf{B}}_n \widetilde{\mathbf{A}}_n^{(p)} - \mathbf{B}_\star \mathbf{A}_\star^{(p)} \right)^\top \phi_{n', h}^{(p)}\right\|^2 + 1 + L_\phi S + \frac{b^2}{2R^2} +\\
    &\qquad  (2.5*4R^2 + 0.41 b) \Big( 1.4 \ln \ln \left(2 \left(nHP\right)\right) + \ln{5.2} + \ln\frac{1}{\delta} + \\
    &\qquad  (dr + rd'P)\left( \ln(3+2S) + \ln{nHP} + \ln{2RL_\phi} \right) \Big) \\
\end{align*}
Let the radius $\beta'_{nH}(\delta)$ be defined as,
\begin{align*}
\beta'_{nH}(\delta) &= 1 + L_\phi S + \frac{b^2}{2R^2} +  (2.5*4R^2 + 0.41 b) \Big( 1.4 \ln \ln \left(2 \left(nHP\right)\right) + \ln{5.2} + \ln\frac{1}{\delta} +  \\
&  \quad  (dr + rd'P)\left( \ln(3+2S) + \ln{nHP} + \ln{2RL_\phi} \right) \Big)
\end{align*}
Where $b = 2R d' S L_{\psi}$. Combining this upper bound with Equation~\ref{equation::upper_bound_regularized} we obtain that with probability at least $1-\delta$:

\begin{align*}
     \sum_{p \in [P]} \lambda \|  \widetilde{\mathbf{A}}_n^{(p)} \|_F^2 +  \frac{1}{2} \sum_{n'<n, h\leq H}  \| \left(\mathbf{B}_\star \mathbf{A}_\star^{(p)} - \widetilde{\mathbf{B}}_n \widetilde{\mathbf{A}}_n^{(p)} \right)^\top \phi_{n', h}^{(p)}\|^2  \leq     \beta'_{nH}(\delta) + \sum_{p \in [P]} \lambda \| \widetilde{\mathbf{A}}_\star^{(p)}\|_F^2 
\end{align*}

By definition 

$$\sum_{n'<n, h\leq H}  \left\| \left(\mathbf{B}_\star \mathbf{A}_\star^{(p)} - \widetilde{\mathbf{B}}_n \widetilde{\mathbf{A}}_n^{(p)} \right)^\top \phi_{n', h}^{(p)}\right\|^2  = \left\| \left(\Sigma^{(p)}_n\right)^{1/2} \left(\mathbf{B}_\star \mathbf{A}_\star^{(p)} - \widetilde{\mathbf{B}}_n \widetilde{\mathbf{A}}_n^{(p)} \right) \right\|_F^2.$$

And therefore

\begin{equation*}
     \sum_{p \in [P]} \lambda \left\|  \widetilde{\mathbf{A}}_n^{(p)} \right\|_F^2 +   \frac{1}{2}  \left\| \left(\Sigma^{(p)}_n\right)^{1/2} \left(\mathbf{B}_\star \mathbf{A}_\star^{(p)} - \widetilde{\mathbf{B}}_n \widetilde{\mathbf{A}}_n^{(p)} \right) \right\|_F^2  \leq  \beta'_{nH}(\delta) +     \sum_{p \in [P]} \lambda \left\| \mathbf{A}_\star^{(p)}\right\|_F^2 
\end{equation*}

\end{proof}

We can derive a version of Lemma~\ref{lemma::recursive_bounding_Q} adapted to the Shared structure setting in this Section.

\subsection{Proof of Lemma~\ref{lemma::upper_bound_bellman_error}}\label{section::bound_bellman_error}

\sumoptimismhelper*

\begin{proof}

If Assumption~\ref{assumption::stronger_feature_regularity} holds,

\begin{align*}
   \sum_{p \in [P]} Q^{(p)}_{n,h}( s^{(p)}_{n,h}, a^{(p)}_{n,h}) &- \left( r(s^{(p)}_{n,h}, a^{(p)}_{n,h}) + \mathbb{P}^{(p)}(\cdot | s_{n,h}, a_{n,h})^\top V^{(p)}_{n,h+1}\right)  \\
    &\leq \sum_{p \in [P]} \left\|\left(\phi^{(p)}_{n,h}\right)^\top \left( \widebar{\M}^{(p)}_n - \M_\star^{(p)} \right) \right\|_2 \left\| \left(\Psi^{(p)}\right)^\top V^{(p)}_{n,h+1}\right\|_2 \\
    &\stackrel{(i)}{\leq} \sum_{p \in [P]} C_\psi \left\| V^{(p)}_{n,h+1}\right\|_\infty \left\| \left(\phi^{(p)}_{n,h}\right)^\top \left( \widebar{\M}^{(p)}_n - \M^{(p)}_\star \right) \right\|_2 \\
    &\stackrel{(ii)}{\leq} C_\psi H \sum_{p \in [P]}\left\| \left(\phi^{(p)}_{n,h}\right)^\top \left( \widebar{\M}^{(p)}_n - \M_\star^{(p)} \right) \right\|_2 \\
    &\stackrel{(iii)}{\leq} C_\psi H \sum_{p \in [P]}\left(\left\| \left(\phi^{(p)}_{n,h}\right)^\top \left( \widebar{\M}^{(p)}_n - \widetilde{\M}^{(p)}_n \right) \right\|_2 + \left\| \left(\phi^{(p)}_{n,h}\right)^\top \left( \widetilde{\M}^{(p)}_n - \M^{(p)}_\star \right) \right\|_2  \right) \\
\end{align*}

Inequality $(i)$ follows by Assumption~\ref{assumption::feature_regularity}, and inequality $(ii)$ holds because the range of $V_{n,h+1}$ is bounded by $H$. Finally $(iii)$ is a consequence of the triangle inequality. We focus now on bunding the right hand side of the last inequality,

\begin{align*}
    \sum_{p \in [P]}\left(\left\| \left(\phi^{(p)}_{n,h}\right)^\top \left( \widebar{\M}^{(p)}_n - \widetilde{\M}^{(p)}_n \right) \right\|_2 + \left\| \left(\phi^{(p)}_{n,h}\right)^\top \left( \widetilde{\M}^{(p)}_n - \M^{(p)}_\star \right) \right\|_2  \right)  \leq \qquad \qquad \qquad \qquad  \qquad \qquad \\
    \qquad \qquad \qquad \sum_{p \in [P]} \| \phi_{n,h}^{(p)}\|_{\left(\Sigma_n^{(p)}\right)^{-1}} \left(    \left\|  \left(\Sigma^{(p)}_n\right)^{1/2} \left( \widebar{\M}^{(p)}_n - \widetilde{\M}^{(p)}_n \right)    \right \|_F  + \left\|\left(\Sigma^{(p)}_n\right)^{1/2} \left(\widetilde{\M}^{(p)}_n - \M^{(p)}_\star\right)\right\|_F  \right)  
\end{align*}
Let's bound each of the two summands. 
\begin{small}
\begin{align*}
    \sum_{p \in [P]} \| \phi_{n,h}^{(p)}\|_{\left(\Sigma_n^{(p)}\right)^{-1}}  \left\|  \left(\Sigma^{(p)}_n\right)^{1/2} \left( \widebar{\M}^{(p)}_n - \widetilde{\M}^{(p)}_n \right)     \right\|_F  &\leq 
    \sqrt{ \sum_{p \in P} \| \phi_{n,h}^{(p)}\|^2_{\left(\Sigma_n^{(p)}\right)^{-1}}    } \sqrt{ \sum_{p \in [P]} \left\|  \left(\Sigma^{(p)}_n\right)^{1/2} \left( \widebar{\M}^{(p)}_n - \widetilde{\M}^{(p)}_n \right)     \right\|^2_F } \\
    &\leq 
    \sqrt{ \sum_{p \in P} \| \phi_{n,h}^{(p)}\|^2_{\left(\Sigma_n^{(p)}\right)^{-1}}    } \sqrt{ \sum_{p \in [P]} \left\|  \left(\Sigma^{(p)}_n\right)^{1/2} \left( \widebar{\M}^{(p)}_n - \widetilde{\M}^{(p)}_n \right)     \right\|^2_F } \\
    &\leq \sqrt{  \sum_{p \in P} \| \phi_{n,h}^{(p)}\|^2_{\left(\Sigma_n^{(p)}\right)^{-1}} }\sqrt{ \gamma_n(\delta) }  
\end{align*}
\end{small}
The first inequality holds by Holder and the last inequality holds because $  \widebar{\mathbf{M}}_n^{(1)}, \cdots, \widebar{\mathbf{M}}_n^{(P)} $ is in $\widetilde{\mathbf{U}}_n^F$. Similarly,
\begin{small}
\begin{align*}
      \sum_{p \in [P]} \| \phi_{n,h}^{(p)}\|_{\left(\Sigma_n^{(p)}\right)^{-1}}  \|\left(\Sigma^{(p)}_n\right)^{1/2} \left(\widetilde{\M}^{(p)}_n - \M^{(p)}_\star\right)\|_F  
     &\leq\sqrt{\sum_{p \in [P]} \| \phi_{n,h}^{(p)}\|^2_{\left(\Sigma_n^{(p)}\right)^{-1}}}  \sqrt{ \sum_{p \in [P]} \left\| \left(\Sigma^{(p)}_n\right)^{1/2} \left(\widetilde{\M}^{(p)}_n - \M^{(p)}_\star\right)\right\|^2_F } \\
     &\leq\sqrt{ \sum_{p \in [P]} \| \phi_{n,h}^{(p)}\|^2_{\left(\Sigma_n^{(p)}\right)^{-1}}}  \sqrt{ \sum_{p \in [P]} \left\| \left(\Sigma^{(p)}_n\right)^{1/2} \left(\widetilde{\M}^{(p)}_n - \M^{(p)}_\star\right)\right\|^2_F } \\
     &\leq    \sqrt{ \sum_{p \in [P]} \| \phi_{n,h}^{(p)}\|^2_{\left(\Sigma_n^{(p)}\right)^{-1}}} \sqrt{\gamma_{n}(\delta)}  \qquad \qquad \qquad  \qquad \qquad \qquad 
\end{align*}
\end{small}
The first inequality holds by Holder and the last inequality holds because with high probability $ \mathbf{M}_\star^{(1)}, \cdots, \mathbf{M}_\star^{(P)} $ is in $\widetilde{\mathbf{U}}_n^F(\delta)$.

The result follows.

\end{proof}

\subsection{Proof of Theorem~\ref{theorem::regret_guarantee_shared}}

\maintheoremshared*

\begin{proof}

Recall the shared regret equals $ R_P(NH) = \sum_{n=1}^N \sum_{p \in [P] } V_1^{\pi_\star^{(p)}}(s_{n,1}^{(p)}) - V_1^{\pi_n^{(p)} }(s^{(p)}_{n,1})$. The optimism property of Lemma~\ref{lemma::shared_optimism} implies that,

\begin{align*}
    R_P(NH) &\leq \sum_{n=1}^N \sum_{p \in [P] } V_{n,1}^{(p)}(s_{n,1}^{(p)}) - V_1^{\pi_n^{(p)} }(s^{(p)}_{n,1}) \\
    &\stackrel{(i)}{\leq} \sum_{n=1}^N \Big(  2C_\psi H  \sqrt{\gamma_{n}(\delta)}\sqrt{ \sum_{p \in [P]} \| \phi_{n,h}^{(p)}\|^2_{\left(\Sigma_n^{(p)}\right)^{-1}}}  + \sum_{p \in [P]} \mathbb{P}^{(p)}( \cdot  | s_{n,h}^{(p)}, a_{n,h}^{(p)}   )^\top \left(  V_{n,2}^{(p)} - V_{2}^{\pi^{(p)}_n}       \right)    \Big)\\
    &= \sum_{n=1}^N \Big(  2C_\psi H  \sqrt{\gamma_{n}(\delta)}\sqrt{ \sum_{p \in [P]} \| \phi_{n,h}^{(p)}\|^2_{\left(\Sigma_n^{(p)}\right)^{-1}} }  + \\
    &\quad \sum_{p \in [P]} \underbrace{\mathbb{P}^{(p)}( \cdot  | s_{n,h}^{(p)}, a_{n,h}^{(p)}   )^\top \left(  V_{n,2}^{(p)} - V_{2}^{\pi^{(p)}_n}       \right)   - \left( V_{n,2}(s_{n,2}^{(p)})  - V_2^{\pi_{n}^{(p)}}(s_{n,2}^{(p)}) \right)}_{\delta^{(p)}_{n,2}} + \\
    &\quad \sum_{p \in [P]} V_{n,2}(s_{n,2}^{(p)})  - V_2^{\pi_{n}^{(p)}}(s_{n,2}^{(p)}) \Big)
\end{align*}
Where $(i)$ is a consequence of Lemma~\ref{lemma::upper_bound_bellman_error}. A recursive use of this decomposition yields,
\begin{align*}
     R_P(NH)  &\leq \sum_{n=1}^N  \sum_{h=1}^H 2C_\psi H  \sqrt{\gamma_{n}(\delta)}\sqrt{\sum_{p \in [P]} \| \phi_{n,h}^{(p)}\|^2_{\left(\Sigma_n^{(p)}\right)^{-1}}} + \sum_{p \in [P]} \delta^{(p)}_{n,h} \\
     &\leq \underbrace{ \sum_{n=1}^N  \sum_{h=1}^H 2C_\psi H  \sqrt{\gamma_{N}(\delta)}\sqrt{\sum_{p \in [P]} \| \phi_{n,h}^{(p)}\|^2_{\left(\Sigma_n^{(p)}\right)^{-1}}}}_{\mathbf{I}} + \sum_{p \in [P]} \delta^{(p)}_{n,h}
\end{align*}

Where the last inequality holds because $\gamma_n$ is increasing in $n$.

 The sum  $\sum_{n=1}^N \sum_{p \in [P]} \sum_{h=1}^H \delta^{(p)}_{n,h}$ can easily be bounded by invoking Lemma~\ref{lemma::matingale_concentration_anytime} by observing that $| \delta^{(p)}_{n,h} | \leq 4H$ for all $n,h, p$. Thus, with probability at least $1-\delta$ for all $n$,

\begin{equation*}
    \sum_{n=1}^N \sum_{h=1}^H \sum_{p \in [P]} \delta^{(p)}_{n,h} \leq 8H \sqrt{NHP\log\left( \frac{6\log NH}{\delta}\right)  }.
 \end{equation*}

By Corollary~\ref{corollary::n_inverse_to_n_h_inverse_multitask},

\begin{equation*}
 \sum_{n=1}^N \sum_{h=1}^H   \sqrt{ \sum_{p \in [P]} \| \phi_{n,h}^{(p)}\|^2_{\left(\Sigma_n^{(p)}\right)^{-1}}}  \leq \sum_{n=1}^N \sum_{h=1}^H   2\sqrt{ \sum_{p \in [P]} \| \phi_{n,h}^{(p)}\|^2_{\left(\Sigma_{n,h}^{(p)}\right)^{-1}}} + \frac{2L_\phi HdP}{\sqrt{\lambda}} \log\left( 1+\frac{  N H L_\phi^2 }{\lambda d}\right).
\end{equation*}

By Cauchy-Schwartz,

\begin{equation*}
 \sum_{n=1}^N \sum_{h=1}^H   \sqrt{ \sum_{p \in [P]} \| \phi_{n,h}^{(p)}\|^2_{\left(\Sigma_{n,h}^{(p)}\right)^{-1}}}  \leq \sqrt{NH \sum_{n=1}^N \sum_{h=1}^H \sum_{p \in [P]}  \| \phi_{n,h}^{(p)}\|^2_{\left(\Sigma_{n,h}^{(p)}\right)^{-1}}}  
\end{equation*}

By Lemma~\ref{lemma:det_lemma} (setting $b = \frac{L_{\phi}}{\sqrt{\lambda}}$),

\begin{equation*}
   \sum_{n=1}^N \sum_{h=1}^H \sum_{p \in [P]} 2\| \phi^{(p)}_{n,h}\|^2_{\Sigma_{n,h}^{-1}} \leq P \left(1+ \frac{L_\phi^2}{\sqrt{\lambda}}\right)d \log\left( 1 + \frac{NHL_\phi^2}{\lambda d}\right) 
\end{equation*}

Therefore 
\begin{align*}
     \sum_{n=1}^N \sum_{h=1}^H   \sqrt{ \sum_{p \in [P]} \| \phi_{n,h}^{(p)}\|^2_{\left(\Sigma_n^{(p)}\right)^{-1}}} &\leq \sqrt{NH P \left(1+\frac{L_\phi^2}{\sqrt{\lambda}}\right)d \log\left( 1 + \frac{NHL_\phi^2}{\lambda d}\right)  } + \\
     &\quad \frac{2L_\phi HdP}{\sqrt{\lambda}} \log\left( 1+\frac{  N H L_\phi^2 }{\lambda d}\right).
\end{align*}

We then conclude that with probability at least $1-\delta$, whenever $\mathcal{E}'$ holds,

\begin{align*}
    R_P(NH) &\leq  H \sqrt{NHP\log\left( \frac{6\log NH}{\delta}\right)  } + \\
    &\quad 2C_\psi H\sqrt{ \gamma_N(\delta)} \sqrt{NH P \left(1+ \frac{L_\phi^2}{\sqrt{\lambda}}\right)d \log\left( 1 + \frac{NHL_\phi^2}{\lambda d}\right)  } +\\
    &\quad \frac{4C_\psi L_\phi H^2dP\sqrt{\gamma_N(\delta)}}{\sqrt{\lambda}} \log\left( 1+\frac{  N H L_\phi^2 }{\lambda d}\right).
\end{align*}

\end{proof}

\section{Supporting Technical Lemmas}

\begin{lemma}\label{lemma::matingale_concentration_anytime}
Let $\{X_t\}_{t=1}^\infty$ be a martingale difference sequence with $| X_n | \leq \zeta$ and $\E \left[X_n\right] = 0$,  and let $\delta \in (0,1]$. Then with probability $1-\delta$ for all $N \in \mathbb{N}$
\begin{equation*}
    \sum_{t=1}^N X_n \leq 2\zeta \sqrt{N \ln \left(\frac{6\ln N}{\delta}\right) }.
\end{equation*}
\end{lemma}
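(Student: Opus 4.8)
The plan is to establish this as a \textbf{time-uniform} bound via a \textbf{peeling (dyadic blocking) argument} layered on top of a maximal version of the Azuma--Hoeffding inequality. The point of the peeling is the rate: a naive union bound over every $N$ forces per-$N$ failure probabilities $\delta_N \propto \delta/N^2$, which only yields a bound of order $\zeta\sqrt{N\log N}$, whereas the stated boundary is of iterated-logarithm type — note $\log(6\log N/\delta) = \log(6/\delta) + \log\log N$ — and so one must pay the union-bound price only \emph{once per dyadic block} of indices, not once per index.

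First I would record the maximal inequality I need. For $\lambda>0$, Hoeffding's lemma applied conditionally (each $X_t$ is conditionally mean-zero and supported in $[-\zeta,\zeta]$) shows that $M_n^{(\lambda)} := \exp\!\big(\lambda\sum_{t=1}^n X_t - \tfrac12\lambda^2\zeta^2 n\big)$ is a nonnegative supermartingale with $\E[M_0^{(\lambda)}]=1$. Fix a horizon $m$; on the event $\{\max_{n\le m}\sum_{t=1}^n X_t \ge u\}$, evaluating $M^{(\lambda)}$ at the maximizing index gives $\max_{n\le m} M_n^{(\lambda)} \ge \exp\!\big(\lambda u - \tfrac12\lambda^2\zeta^2 m\big)$. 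Doob's maximal inequality then yields $\Pr\!\big(\max_{n\le m}\sum_{t=1}^n X_t \ge u\big) \le \exp\!\big(-\lambda u + \tfrac12\lambda^2\zeta^2 m\big)$, and optimizing $\lambda = u/(\zeta^2 m)$ gives $\exp(-u^2/(2\zeta^2 m))$. Equivalently, for any $\delta'\in(0,1)$, with probability at least $1-\delta'$ we have $\max_{n\le m}\sum_{t=1}^n X_t \le \zeta\sqrt{2m\log(1/\delta')}$.

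Next comes the peeling. Partition $\N$ into blocks $\mathcal I_k = \{2^k,\dots,2^{k+1}-1\}$ for $k\ge 0$, apply the maximal inequality on block $k$ with horizon $m_k = 2^{k+1}$ and confidence $\delta_k = \tfrac{\delta}{2(k+1)^2}$, and union bound over $k$; since $\sum_{k\ge0}\delta_k = \tfrac{\delta}{2}\cdot\tfrac{\pi^2}{6} < \delta$, we get that with probability at least $1-\delta$, simultaneously for all $k$ and all $N\in\mathcal I_k$,
\begin{equation*}
  \sum_{t=1}^N X_t \;\le\; \zeta\sqrt{\,2\cdot 2^{k+1}\,\log\!\big(2(k+1)^2/\delta\big)\,}.
\end{equation*}
Because $2^k\le N$ one has $2^{k+1}\le 2N$, and because $k\le\log_2 N$ one can bound $(k+1)^2$ by a constant multiple of $(\log N)^2$; substituting and collecting constants brings the right-hand side into the form $2\zeta\sqrt{N\log(c\log N/\delta)}$, and the finitely many small values of $N$ below the threshold where this is meaningful are handled directly using $|\sum_{t=1}^N X_t|\le N\zeta$.

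\textbf{The main obstacle} is not conceptual but is the constant bookkeeping in the final conversion: the crude dyadic union naturally produces $(\log N)^2$ (equivalently $\log\log N$ with a coefficient strictly larger than one) inside the logarithm rather than $\log N$, and no choice of summable block weights $\delta_k$ can remove the extra factor, since that would require $\delta_k \gtrsim \delta/k$. To land exactly on the stated boundary $2\zeta\sqrt{N\log(6\log N/\delta)}$ one therefore passes from the plain block-union to the finer \emph{stitching}/method-of-mixtures construction (à la Howard--Ramdas et al.), which replaces the union over blocks by a union over a geometric grid of mixing parameters $\lambda$ and gives an explicit iterated-logarithm boundary with small enough absolute constants to be dominated by $2\zeta\sqrt{N\log(6\log N/\delta)}$; the remaining work is then just verifying those numerical constants and the small-$N$ regime.
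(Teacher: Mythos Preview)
Your proposal is correct and converges on essentially the same approach as the paper: both ultimately invoke the time-uniform iterated-logarithm boundary of Howard et al.\ (the paper cites it directly as Equation~(11) of \cite{howard2020time} and then rounds the constants up). Your dyadic-peeling warm-up and diagnosis of its shortfall are accurate extra motivation, but the paper skips straight to the citation.
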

\begin{proof}Observe that $\frac{\left|x_n\right|}{\zeta}\le 1$. By invoking a time-uniform Hoeffding-style concentration inequality \citep[][Equation~(11)]{howard2020time} we find that
\begin{align*}
    \Pr\left[\forall \;  n \in \N\;:\; \sum_{t=1}^{N} \frac{X_n}{\zeta} \le 1.7\sqrt{N \left(\log\log(T)+0.72\log\left(\frac{5.2}{\delta}\right)\right)} \right]\ge 1-\delta.
\end{align*}
Rounding up the constants for the sake of simplicity we get
\begin{align*}
      \Pr\left[\forall \; t \in \N\;:\; \sum_{t=1}^{T} X_t \le 2\zeta\sqrt{T \left(\log\left(\frac{6\log(T)}{\delta}\right)\right)} \right]\ge 1-\delta,  
\end{align*}
which establishes our claim.
\end{proof}

It can be shown that with high probability and for all $t$ simultaneously all $\widetilde{\M}_n$ lie in a vicinity of $\M$, for that we will make use of the following Theorem,

\begin{theorem}\label{thm:conf_ellipse_linucb}[Theorem 1 in \cite{abbasi2011improved}]
Let $\{\x_i\}_{i=1}^\infty \subset \mathbb{R}^{m}$ with $\| \x_i \| \leq L$ for all $i \in [n]$. And let $\{y_i\}_{i=1}^\infty$ be response random variables satisfying $y_\ell = \btheta_\star^\top \x_\ell + \eta_\ell$ where $\| \btheta\| \leq S$ such that $\mathbb{E}\left[ \eta_\ell  | \mathcal{F}_{\ell-1} \right] = 0$ and $\eta_\ell$ is $R-$subgaussian. Let $\widehat{\btheta}_t$ be the ridge regression estimator of $\btheta_\star$ with covariates $\{ \x_i \}_{i=1}^{t-1}$, responses $\{ y_i \}_{i=1}^{t-1}$, and regularizer $\lambda > 0$. For all $t \in \mathbb{N}$,
\begin{equation*}
    \norm{\hat{\mathbf{\theta}}_{t}-\thetastar}_{V_t} \leq \sqrt{\beta_t(\delta)} 
\end{equation*}
with probability at least $1-\delta$. Where $\beta_t(\delta) = R\sqrt{ m \log\left( \frac{1+ tL^2/\lambda}{\delta}\right) } + \sqrt{\lambda}S$ and $V_t = \lambda \mathbb{I} + \sum_{t=1}^T \x_i\x_i^\top$.
\end{theorem}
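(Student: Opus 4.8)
The plan is to reproduce the self-normalized martingale argument of \citet{abbasi2011improved}; throughout I adopt the natural reading $V_t = \lambda\mathbb{I} + \sum_{i=1}^{t-1}\x_i\x_i^\top$ and $\hat\btheta_t = V_t^{-1}\sum_{i=1}^{t-1}\x_i y_i$, treating the displayed summation-index clash as a typo. First I would reduce the claim to a bound on a pure noise term. Writing $S_t = \sum_{i=1}^t \eta_i\x_i$ and substituting $y_i = \x_i^\top\thetastar + \eta_i$ into the normal equations gives $V_t(\hat\btheta_t - \thetastar) = S_{t-1} - \lambda\thetastar$, so that
\begin{equation*}
\norm{\hat\btheta_t - \thetastar}_{V_t} = \norm{S_{t-1} - \lambda\thetastar}_{V_t^{-1}} \le \norm{S_{t-1}}_{V_t^{-1}} + \lambda\norm{\thetastar}_{V_t^{-1}}.
\end{equation*}
The bias term is deterministic: since $V_t^{-1} \preceq \tfrac1\lambda\mathbb{I}$ and $\norm{\thetastar}\le S$, we get $\lambda\norm{\thetastar}_{V_t^{-1}}\le\sqrt{\lambda}S$. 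Thus the theorem reduces to a time-uniform high-probability bound on the self-normalized quantity $\norm{S_{t-1}}_{V_t^{-1}}$.

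Next, the method of mixtures. For each fixed direction $\mathbf{u}\in\R^m$ I would introduce
\begin{equation*}
M_t^{\mathbf{u}} = \exp\!\left(\frac1R\,\mathbf{u}^\top S_t - \frac12\,\mathbf{u}^\top\Big(\textstyle\sum_{i\le t}\x_i\x_i^\top\Big)\mathbf{u}\right),
\end{equation*}
and verify, using the conditional $R$-subgaussianity $\E[e^{s\eta_i/R}\mid\mathcal{F}_{i-1}]\le e^{s^2/2}$, that $M_t^{\mathbf{u}}$ is a nonnegative supermartingale with $\E[M_t^{\mathbf{u}}]\le 1$. Averaging over $\mathbf{u}$ against a centered Gaussian density $h$ with covariance $\lambda^{-1}\mathbb{I}$ produces the mixture $\bar M_t = \int M_t^{\mathbf{u}}\,dh(\mathbf{u})$, which is again a supermartingale with $\E[\bar M_t]\le1$; the Gaussian integral is exact and yields the closed form
\begin{equation*}
\bar M_t = \left(\frac{\det(\lambda\mathbb{I})}{\det(\bar V_t)}\right)^{1/2}\exp\!\left(\frac{1}{2R^2}\norm{S_t}_{\bar V_t^{-1}}^2\right),\qquad \bar V_t = \lambda\mathbb{I} + \sum_{i\le t}\x_i\x_i^\top.
\end{equation*}

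Then I would invoke Ville's maximal inequality for the nonnegative supermartingale $\bar M_t$, namely $\Pr[\sup_t \bar M_t \ge 1/\delta]\le\delta$. Taking logarithms of $\bar M_t < 1/\delta$ and rearranging gives, simultaneously for all $t$ with probability at least $1-\delta$,
\begin{equation*}
\norm{S_{t-1}}_{V_t^{-1}}^2 \le R^2\log\frac{\det(V_t)}{\det(\lambda\mathbb{I})} + 2R^2\log\frac1\delta.
\end{equation*}
Bounding the log-determinant ratio by the determinant lemma (Lemma~\ref{lemma:det_lemma}, Equation~\ref{equation::bounding_log_det_ratio}) as $\log(\det(V_t)/\det(\lambda\mathbb{I}))\le m\log(1+tL^2/(\lambda m))\le m\log(1+tL^2/\lambda)$ yields $\norm{S_{t-1}}_{V_t^{-1}}\le R\sqrt{m\log((1+tL^2/\lambda)/\delta)}$, and combining with the bias bound $\sqrt{\lambda}S$ from the first step gives $\norm{\hat\btheta_t-\thetastar}_{V_t}\le\sqrt{\beta_t(\delta)}$.

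The routine parts are the normal-equation reduction, the bias bound, and the determinant-lemma step. The crux — and the only place requiring genuine care — is the method-of-mixtures construction: verifying that $M_t^{\mathbf{u}}$ is a supermartingale under the conditional subgaussian hypothesis, justifying the interchange of expectation and mixture integral (Tonelli, since the integrand is nonnegative), evaluating the Gaussian integral in closed form, and upgrading the resulting control into a time-uniform statement via the Ville/stopping-time argument. This pseudo-maximization is what makes the confidence ellipsoid hold for all $t\in\mathbb{N}$ simultaneously rather than at a single prespecified horizon.
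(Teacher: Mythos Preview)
The paper does not prove this theorem at all: it is stated in the ``Supporting Technical Lemmas'' appendix purely as a citation of Theorem~1 in \citet{abbasi2011improved}, with no accompanying argument. Your proposal faithfully reproduces the standard method-of-mixtures proof from that reference --- the normal-equation reduction to a self-normalized noise term, the exponential supermartingale $M_t^{\mathbf{u}}$, the Gaussian mixture yielding the closed form for $\bar M_t$, Ville's inequality for time-uniformity, and the determinant-lemma bound --- and is correct; there is simply nothing in the present paper to compare it against.
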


\begin{lemma}[Uniform empirical Bernstein bound]
\label{lem:uniform_emp_bernstein}
In the terminology of \citet{howard2018uniform}, let $S_t = \sum_{i=1}^t Y_i$ be a sub-$\psi_P$ process with parameter $c > 0$ and variance process $W_t$. Then with probability at least $1 - \delta$ for all $t \in \mathbb{N}$
\begin{align*}
    S_t &\leq  1.44 \sqrt{\max(W_t , m) \left( 1.4 \ln \ln \left(2 \left(\max\left(\frac{W_t}{m} , 1 \right)\right)\right) + \ln \frac{5.2}{\delta}\right)}\\
   & \qquad + 0.41 c  \left( 1.4 \ln \ln \left(2 \left(\max\left(\frac{W_t}{m} , 1\right)\right)\right) + \ln \frac{5.2}{\delta}\right)
\end{align*}
where $m > 0$ is arbitrary but fixed.
\end{lemma}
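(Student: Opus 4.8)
The plan is to recognise that this statement is a verbatim instantiation of the ``polynomial stitching'' boundary of \citet{howard2018uniform} for a sub-$\psi_P$ process, with the geometric level-spacing $\eta$ and the weighting exponent $s$ fixed at the particular values that produce the explicit constants $1.44$, $1.4$, $0.41$, $5.2$; accordingly I would reproduce that stitching argument. Recall that ``$S_t$ is sub-$\psi_P$ with parameter $c$ and variance process $W_t$'' means that for every admissible $\lambda$ the process $M_t(\lambda) := \exp\!\big(\lambda S_t - \psi_P(\lambda)\, W_t\big)$ is a nonnegative supermartingale started at $1$, where $\psi_P$ is the sub-Poisson cumulant-generating-function bound of \citet{howard2018uniform}, satisfying $\psi_P(\lambda) = \tfrac{\lambda^2}{2} + O(c\lambda^3)$ near $\lambda = 0$. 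Ville's maximal inequality then gives, for each fixed $\lambda$ and each $x>0$, $\Pr[\exists t: \lambda S_t - \psi_P(\lambda) W_t \ge x] \le e^{-x}$; so on the complementary event $S_t \le (x + \psi_P(\lambda) W_t)/\lambda$ for all $t$ simultaneously. Optimizing $\lambda$ for a nominal variance level $v$ (so $\lambda \asymp \sqrt{2x/v}$, up to the $O(c)$ correction in $\psi_P$, and provided $\lambda$ stays admissible) yields the ``fixed-level'' bound $S_t \lesssim \sqrt{2 v x} + O(c x)$, which is tight when $W_t \approx v$ but degrades otherwise.

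To handle the fact that $W_t$ is adapted and not known in advance, partition $[m,\infty)$ into geometric bins $I_k = [m\eta^k,\, m\eta^{k+1})$ for $k = 0,1,2,\dots$. For each $k$ pick $\lambda_k$ optimized for the endpoint $m\eta^{k+1}$ and allot it failure probability $\delta_k = \delta/\big((k+1)^s \zeta(s)\big)$, where $\zeta$ is the Riemann zeta function, so that $\sum_{k\ge 0}\delta_k = \delta$. A union bound over $k$ of the fixed-$\lambda_k$ events of the previous paragraph shows that, with probability at least $1-\delta$, the level-$k$ bound holds simultaneously for every $t$ and every $k$. On any realization and any $t$, the quantity $\max(W_t, m)$ lies in a unique bin $I_{k(t)}$ with $k(t) \le \log_\eta\!\big(\max(W_t/m,1)\big)$; evaluating the level-$k(t)$ bound and using $m\eta^{k(t)} \le \max(W_t,m) \le m\eta^{k(t)+1}$ converts it into a bound in terms of $\max(W_t,m)$, while the price paid is $\log(1/\delta_{k(t)}) = \log(1/\delta) + s\log(k(t)+1) + \log\zeta(s) \lesssim \log(1/\delta) + s\,\log\log_\eta\!\big(\max(W_t/m,1)\big)$ --- exactly the source of the $\ln\ln\!\big(2\max(W_t/m,1)\big)$ term. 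Fixing $s = 1.4$ and a geometric ratio $\eta$ (e.g. $\eta = 2$), and bounding $\zeta(1.4)$, $\log\eta$ and the numeric $\psi_P$-constants, collapses the whole expression to the stated inequality with the constants $1.44$, $1.4$, $0.41$, $5.2$.

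The main obstacle lies entirely in the stitching step: a single fixed $\lambda$ can never be simultaneously optimal across the full random range of $W_t$, so one must cover that range by a countable family of fixed-$\lambda$ boundaries and pay a union-bound cost. The delicate part is choosing the geometric spacing $\eta$ and the summable weights $(k+1)^{-s}$ so that the union-bound overhead contributes only an additive $O(\log\log)$ term, so that the multiplicative constant in front of $\sqrt{\max(W_t,m)\log(1/\delta)}$ is driven down to $1.44$, and --- crucially --- so that every $\lambda_k$ stays inside the admissible domain on which $M_t(\lambda_k)$ is a supermartingale; it is this last constraint that couples the analysis of the $\sqrt{\cdot}$ term to the linear term $0.41\,c\,(\cdots)$. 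Once $\eta$ and $s$ are pinned down, the rest is routine numerical bookkeeping, and since the lemma is quoted as a known result one may simply invoke \citet{howard2018uniform} with these parameter choices.
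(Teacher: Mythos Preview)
Your sketch is a faithful reconstruction of the stitching argument of \citet{howard2018uniform}, and the ingredients you identify (Ville's inequality applied to the exponential supermartingale $M_t(\lambda)$, geometric binning of the variance-process range, per-bin optimization of $\lambda$, and a summable $(k+1)^{-s}$ allocation of the failure probability) are exactly what drive the result and produce the specific constants $1.44$, $1.4$, $0.41$, $5.2$ once $s=1.4$ and the spacing $\eta$ are fixed.

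However, you should be aware that the paper does not prove this lemma at all: it is listed under ``Supporting Technical Lemmas'' and simply quoted from \citet{howard2018uniform} as an off-the-shelf concentration tool. So there is no paper-proof to compare against --- the paper's ``proof'' is just the citation. Your proposal goes well beyond what the paper does by actually unpacking the argument behind the cited result; that is fine and correct, but strictly speaking the expected answer here is a one-line appeal to \citet{howard2018uniform}.
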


The following lemma relates the values of quadratic forms of Positive Semi-definite Matrices with the ratio of their determinants.
\begin{lemma}\label{lemma::supporting_lin_alg_result}
If $\mathbf{B} \succeq \mathbf{C} \succ \mathbf{0}$ be $d\times d$ dimensional matrices then,
\begin{equation*}
    \sup_{\mathbf{x}\neq 0}\frac{\mathbf{x}^\top \mathbf{B} \mathbf{x} }{ \mathbf{x}^\top \mathbf{C} \mathbf{x} } \leq \frac{\mathrm{det}( \mathbf{B}) }{\mathrm{det}( \mathbf{C})}.
\end{equation*}
\end{lemma}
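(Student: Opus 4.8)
The plan is to reduce the supremum to a single generalized eigenvalue and then exploit the fact that all generalized eigenvalues of the pencil $(\mathbf{B},\mathbf{C})$ are at least $1$. Since $\mathbf{C} \succ \mathbf{0}$, the matrix $\mathbf{C}^{1/2}$ is well defined, symmetric, and invertible. First I would substitute $\mathbf{y} = \mathbf{C}^{1/2}\mathbf{x}$ (a bijection on $\mathbb{R}^d \setminus \{\mathbf{0}\}$), which turns the ratio into the ordinary Rayleigh quotient
\begin{equation*}
    \sup_{\mathbf{x} \neq \mathbf{0}} \frac{\mathbf{x}^\top \mathbf{B}\mathbf{x}}{\mathbf{x}^\top \mathbf{C}\mathbf{x}} = \sup_{\mathbf{y} \neq \mathbf{0}} \frac{\mathbf{y}^\top \mathbf{M}\mathbf{y}}{\mathbf{y}^\top \mathbf{y}} = \lambda_{\max}(\mathbf{M}), \qquad \mathbf{M} := \mathbf{C}^{-1/2}\mathbf{B}\mathbf{C}^{-1/2},
\end{equation*}
where $\mathbf{M}$ is symmetric positive definite.

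Next I would use the hypothesis $\mathbf{B} \succeq \mathbf{C}$. Conjugating by $\mathbf{C}^{-1/2}$ preserves the Loewner order, so $\mathbf{M} = \mathbf{C}^{-1/2}\mathbf{B}\mathbf{C}^{-1/2} \succeq \mathbf{C}^{-1/2}\mathbf{C}\mathbf{C}^{-1/2} = \mathbf{I}$. Consequently every eigenvalue $\lambda_i(\mathbf{M})$ satisfies $\lambda_i(\mathbf{M}) \geq 1$. Since all eigenvalues are at least one, the largest of them is bounded by their product:
\begin{equation*}
    \lambda_{\max}(\mathbf{M}) \leq \prod_{i=1}^d \lambda_i(\mathbf{M}) = \mathrm{det}(\mathbf{M}).
\end{equation*}
Finally, multiplicativity of the determinant gives $\mathrm{det}(\mathbf{M}) = \mathrm{det}(\mathbf{C}^{-1/2})\,\mathrm{det}(\mathbf{B})\,\mathrm{det}(\mathbf{C}^{-1/2}) = \mathrm{det}(\mathbf{B})/\mathrm{det}(\mathbf{C})$, and chaining the displays yields the claim.

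There is no serious obstacle here; the only points requiring a modicum of care are that $\mathbf{C}^{-1/2}$ exists (which follows from $\mathbf{C} \succ \mathbf{0}$) and that the Rayleigh-quotient supremum of a symmetric matrix is exactly its top eigenvalue. The one genuinely load-bearing observation is the elementary inequality $\lambda_{\max} \le \prod_i \lambda_i$ for a positive definite matrix whose eigenvalues all exceed $1$ — everything else is bookkeeping with the Loewner order and the determinant.
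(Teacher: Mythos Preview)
Your proposal is correct and follows essentially the same route as the paper's proof: both perform the substitution $\mathbf{y}=\mathbf{C}^{1/2}\mathbf{x}$ to reduce the generalized Rayleigh quotient to $\lambda_{\max}(\mathbf{C}^{-1/2}\mathbf{B}\mathbf{C}^{-1/2})$, observe via the Loewner order that all eigenvalues of this matrix are at least $1$, and then bound the top eigenvalue by the determinant. The only cosmetic difference is that the paper writes the supremum as the operator norm rather than as $\lambda_{\max}$, which for a symmetric matrix is the same quantity.
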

\begin{proof}
Given any $\mathbf{y} \in \mathbb{R}^d$ let $\mathbf{x} = \mathbf{C}^{-1/2}\mathbf{y}$. Then
\begin{align*}
    \sup_{\mathbf{x} \neq 0} \frac{\mathbf{x}^\top \mathbf{B} \mathbf{x}}{\mathbf{x}^\top \mathbf{C} \mathbf{x}}=    \sup_{\mathbf{y} \neq 0} \frac{\mathbf{y}^\top \mathbf{C}^{-1/2}\mathbf{B}\mathbf{C}^{-1/2} \mathbf{y}}{\lv \mathbf{y} \rv_2^2} = \norm{\mathbf{C}^{-1/2} \mathbf{B} \mathbf{C}^{-1/2}}_{op}
\end{align*}
by the definition of the operator norm.  Recall that by assumption $\mathbf{B}-\mathbf{C}\succeq 0$ therefore $\mathbf{C}^{-1/2}\mathbf{B}\mathbf{C}^{-1/2}-\mathbf{I}\succeq 0$, and hence all the eigenvalues of $\mathbf{C}^{-1/2} \mathbf{B} \mathbf{C}^{-1/2}$ are at least $1$. Thus 
\begin{align*}
     \sup_{\mathbf{x} \neq 0} \frac{\mathbf{x}^\top \mathbf{B} \mathbf{x}}{\mathbf{x}^\top \mathbf{C} \mathbf{x}} \le \norm{\mathbf{C}^{-1/2} \mathbf{B} \mathbf{C}^{-1/2}}_{op}\le \det(\mathbf{C}^{-1/2} \mathbf{B} \mathbf{C}^{-1/2}) = \frac{\mathrm{det}( \mathbf{B}) }{\mathrm{det}( \mathbf{C})},
\end{align*}
where the last equality follows since $\frac{\det(\mathbf{B})}{\det(\mathbf{C})} = \det(\mathbf{C}^{-1/2})\det(\mathbf{B})\det(\mathbf{C}^{-1/2})= \det(\mathbf{C}^{-1/2} \mathbf{B} \mathbf{C}^{-1/2})$. This completes the proof.
\end{proof}

\end{document}